\documentclass{article} 
\usepackage{iclr2026_conference,times}

\usepackage{amsmath,amsfonts,bm}

\def\1{\bm{1}}

\DeclareMathAlphabet{\mathsfit}{\encodingdefault}{\sfdefault}{m}{sl}
\SetMathAlphabet{\mathsfit}{bold}{\encodingdefault}{\sfdefault}{bx}{n}

\DeclareMathOperator*{\argmin}{arg\,min}

\usepackage[colorlinks=true, citecolor=blue, linkcolor=black, urlcolor=Violet]{hyperref}
\usepackage{url}
\usepackage{booktabs}       
\usepackage{amsmath, amsfonts, amssymb, amsthm}       
\usepackage{nicefrac}       
\usepackage{microtype}      
\usepackage[dvipsnames]{xcolor}         
\usepackage{cleveref}
\usepackage{threeparttable}

\usepackage{bm}
\usepackage{dsfont}
\usepackage{mathtools}
\usepackage{algorithm}
\usepackage{algorithmicx}
\usepackage{algpseudocodex}

\usepackage{setspace}
\usepackage{multirow}
\usepackage{float}
\usepackage{wrapfig}
\usepackage[skip=0.1\baselineskip, font={footnotesize}]{caption}
\usepackage{subcaption}
\usepackage{colortbl} 
\usepackage{thmtools, thm-restate} 
\usepackage{enumitem}
\usepackage{diagbox}
\usepackage{etoc}
\usepackage{xspace}
\usepackage{etoolbox}
\BeforeBeginEnvironment{wrapfigure}{\setlength{\intextsep}{0pt}}
\BeforeBeginEnvironment{wraptable}{\setlength{\intextsep}{0pt}}
\usepackage{makecell}
\usepackage{newtxtt}
\usepackage{arydshln}

\newtheorem{definition}{Definition}
\theoremstyle{definition}
\newtheorem{remark}{Remark}

\newcommand{\alg}{\texttt{AEGIS}\xspace}

\newcommand{\Bc}{\mathbf{c}}
\newcommand{\Bce}{\mathbf{c}_{e}}

\newcommand{\Bcr}{\mathbf{c}_{r}}

\newcommand{\Bx}{\mathbf{x}}
\newcommand{\Bz}{\mathbf{z}}

\newcommand{\CC}{\mathcal{C}}
\newcommand{\CL}{\mathcal{L}}
\newcommand{\Btheta}{{\bm \theta}}

\newcommand{\Bve}{{\bm \epsilon}}
\newcommand{\Bg}{{\bm{g}}}

\newcommand{\dd}{\mathop{}\!\mathrm{d}}

\newcommand{\refs}[2]{\hyperref[#1]{\ref*{#1}#2}}

\makeatletter
\DeclareRobustCommand\onedot{\futurelet\@let@token\@onedot}
\def\@onedot{\ifx\@let@token.\else.\null\fi\xspace}

\def\wrt{w.r.t\onedot}

\newcommand{\Rmnum}[1]{\expandafter\@slowromancap\romannumeral #1@}

\definecolor{ceruleanblue}{rgb}{0.16, 0.32, 0.75}
\definecolor{kleinblue}{rgb}{0,0.18,0.65}
\hypersetup{
 colorlinks=true,
 citecolor=ceruleanblue,
 linkcolor=red,
 urlcolor=black}

\usepackage{hyperref}
\usepackage{url}

\title{\alg: Adversarial Target-Guided Retention-Data-Free Robust Concept Erasure from Diffusion Models}

\author{Fengpeng Li$^{1,4}$ \hspace{0.25cm} Kemou Li$^1$ \hspace{0.25cm} Qizhou Wang$^{2, 3}$ \hspace{0.25cm} \textbf{Bo Han}$^2$ \hspace{0.25cm} \textbf{Jiantao Zhou}$^1$\thanks{Corresponding author: Jiantao Zhou (jtzhou@um.edu.mo)}\vspace{1.5mm}\\
$^1$State Key Laboratory of Internet of Things and Smart City, University of Macau \\
$^2$TMLR Group, Department of Computer Science, Hong Kong Baptist University\\
$^3$Imperfect Information Learning Team, RIKEN Center for Advanced Intelligence Project\\
$^4$PRADA Lab, King Abdullah University of Science and Technology}

\iclrfinalcopy 
\begin{document}
\etocdepthtag.toc{mtchapter}
\maketitle

\begin{abstract}
Concept erasure helps stop diffusion models (DMs) from generating harmful content; but current methods face robustness--retention trade-off. \textbf{Robustness} means the model fine-tuned by concept erasure methods resists reactivation of erased concepts, even under semantically related prompts. \textbf{Retention} means unrelated concepts are preserved so the model’s overall utility stays intact. Both are critical for concept erasure in practice, yet addressing them simultaneously is challenging, as existing works typically improve one factor while sacrificing the other. Prior work typically strengthens one while degrading the other---e.g., mapping a single erased prompt to a fixed safe target leaves class-level remnants exploitable by prompt attacks, whereas retention-oriented schemes underperform against adaptive adversaries. This paper introduces \textit{\underline{A}dversarial \underline{E}rasure with \underline{G}radient-\underline{I}nformed \underline{S}ynergy} (\alg), a retention-data-free framework that advances both robustness and retention. First, \alg replaces handpicked targets with an Adversarial Erasure Target (AET) optimized to approximate the semantic center of the erased concept class. By aligning the model’s prediction on the erased prompt to an AET-derived target in the shared text–image space, \alg increases predicted-noise distances not just for the instance but for semantically related variants, substantially hardening the DMs against state-of-the-art adversarial prompt attacks. Second, \alg preserves utility without auxiliary data via Gradient Regularization Projection (GRP), a conflict-aware gradient rectification that selectively projects away the destructive component of the retention update only when it opposes the erasure direction. This directional, data-free projection mitigates interference between erasure and retention, avoiding dataset bias and accidental relearning. Extensive experiments show that \alg markedly reduces attack success rates across various concepts while maintaining or improving FID/CLIP versus advanced baselines, effectively pushing beyond the prevailing robustness–retention trade-off. The source code is in \url{https://github.com/Feng-peng-Li/AEGIS}.
\end{abstract}

\section{Introduction}

While DMs~\citep{DBLP:conf/icml/Sohl-DicksteinW15,DBLP:conf/nips/HoJA20} excel at text-to-image (T2I) generation, biased training data poses safety and reliability risks, including harmful outputs~\citep{li2026llm}. Concept erasure~\citep{DBLP:conf/iccv/GandikotaMFB23}, modifying the denoising UNet of DMs to remove undesirable concepts, has become a standard measure to obtain a reliable DMs. These methods are typically categorized as output-based, realigning concept representations, or attention-based, aiming at manipulating cross-attention scores~\citep{bui2025fantastic}. Despite progress, these techniques face two critical challenges. First, as shown in Fig.~\ref{intro_img}, erased concepts can be reactivated by adversarial or semantically related prompts not seen during training~\citep{DBLP:conf/icml/ChinJHCC24}. Second, improving robustness against adversarial prompt attacks (APAs) comes at the cost of degrading the generation quality for unrelated concepts, creating a harsh robustness-retention trade-off~\citep{DBLP:conf/nips/ZhangCJZFL0DL24,DBLP:conf/eccv/ZhangJCCZLDL24,DBLP:conf/eccv/GongCWCJ24,DBLP:journals/corr/abs-2409-17682}. Both goals are critical for concept erasure, while seldom works can handle them simultaneously. Consequently, achieving robust erasure without compromising retention remains a central, unsolved problem.

\begin{wrapfigure}{r}{0.625\textwidth}
\centering
\includegraphics[width=\linewidth]{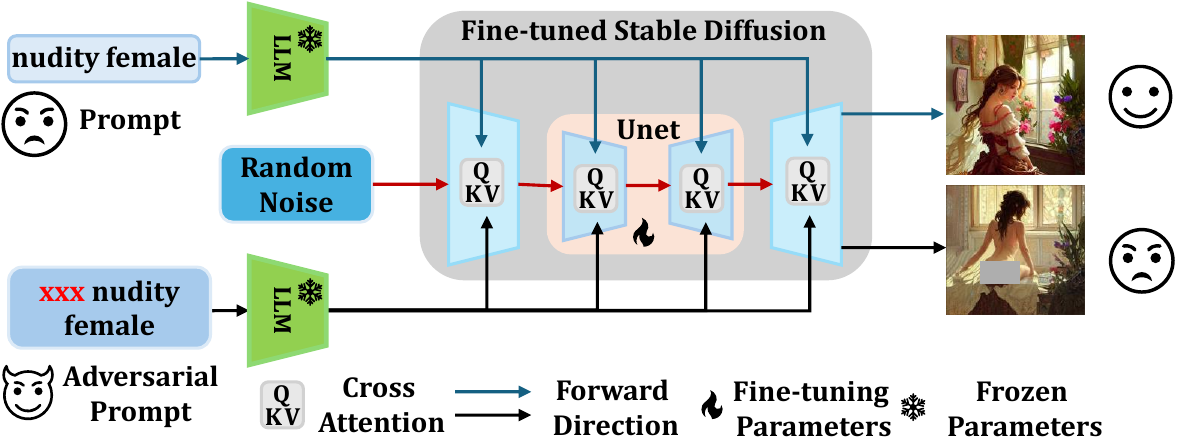}
\caption{Adversarial prompts can still result in DMs fine-tuned by concept erasure methods generating images with harmful information.}
\label{intro_img}
\end{wrapfigure}

Prevailing studies~\citep{bui2025fantastic} largely focus on the impact of mapping target on the trade-off between concept erasure and the retention. A critical yet underexplored dimension, however, is the influence of the erasure target on the model's robustness against APAs, mitigating the trade-off seems better. In most methods, only an instance for the concept, e.g., \emph{nudity}, is adopted to perform concept erasure by maximizing the distance of noises induced by the instance between pre- and post-fine-tuning DMs. However, practical users input often involve semantically equivalent or metaphorical expressions, such as \emph{naked} or \emph{sexual}. Since DM can be interpreted as a classifier~\citep{DBLP:conf/icml/ChenDWYD0024}, these synonymous terms are likely to be clustered within the same semantic class in the latent space. If the chosen instance lies too far to the semantic center of the \emph{nudity} class, the concept information may not be fully eliminated. Under APAs like~\citet{DBLP:conf/eccv/ZhangJCCZLDL24} close to the concept semantic center, the incompleteness of \emph{nudity} class erasure becomes evident. 

Building upon these insights, we introduce \alg (\emph{Adversarial Erasure with Gradient-Informed Synergy}), a novel framework that significantly enhances the robustness of concept erasure against APAs while preserving the retention for unrelated concepts. AEGIS achieves this through two key components: \emph{Adversarial Erasure Target} (AET) and \emph{Gradient Regularization Projection} (GRP). AET with an optimizable embedding is iteratively updated to create a potent adversarial target. The model is then fine-tuned to minimize the difference between the noise predicted for AET and the erased concept, effectively strengthening the erasure process. GRP is applied to maintain the model's performance on unrelated concepts without requiring additional retention data, thus preventing negative impacts from the erasure process. Experiments across various concepts against a range of APAs confirm the superior effectiveness of \alg, which surpasses state-of-the-art methods in erasure robustness by large margins while preserving higher retention performance.

\textbf{Contribution Summary.}\hspace{.5em}The contributions of this work can be summarized as follows:
\begin{itemize}[leftmargin=*,itemsep=0.2em,parsep=0em,partopsep=0em,before=\vspace{-0.8em},after=\vspace{0em}]
    \item We demonstrate that the vulnerability of concept erasure stems from an inappropriately chosen learning target. In particular, if the target lies too close to the semantic center---formed by words semantically related to the erased concept---the concept information cannot be fully removed. 
    \item We propose a novel \textit{retention-data-free} framework, \alg, which synthesizes an AET to guide the removal of class-level information associated with erased concepts. Additionally, we introduce a gradient projection strategy to better balance erasure robustness and retention performance.   
    \item Extensive experiments on multiple datasets validate that \alg markedly enhances erasure robustness against a variety of APAs. Notably, \alg decreases the attack success rate by $\sim$5.31\% on the \emph{nudity} concept and $\sim$24\% on the \emph{Van Gogh} style concept under P4D~\citep{DBLP:conf/icml/ChinJHCC24} and UnlearnDiffAtk~\citep{DBLP:conf/eccv/ZhangJCCZLDL24}, while incurring little impact on retention performance.
\end{itemize}

\section{Preliminaries}
\label{sec:PESD}
\textbf{Latent Diffusion Models.}\hspace{.5em}This work focuses on latent diffusion models (LDMs)~\citep{DBLP:conf/cvpr/RombachBLEO22}. Given a training dataset $\mathcal{D}=\{(\Bx_i,\Bc_i)\}_{i=1}^{N}$, each image $\Bx_i$ is paired with a prompt $\Bc_i$. A noise predictor $\Bve_{\Btheta}(\cdot|\Bc)$, parameterized by $\Btheta$, is learned to estimate the Gaussian noise $\Bve$ added to the latent representation $\Bz_t$ at each diffusion step $t \in \{1,\dots,T\}$, conditioned on $\Bc$, optimized as:
\begin{equation}
    \underset{\Btheta}{\min}\ \mathbb{E}_{(\Bx,\Bc)\sim\mathcal{D},t,\Bve\sim\mathcal{N}(\mathbf{0},\mathbf{I})} \big[\|\Bve-\Bve_{\Btheta}(\Bz_t|\Bc)\|_2^2\big].
\end{equation}
\textbf{Concept Erasure in DMs.}\hspace{.5em}To remove specific undesirable concepts from DMs, $\Bve_{\Btheta}$ is fine-tuned. Let $\Bce$ denote the prompt about the \textit{(to-be-)erased concept}, e.g., ``nudity''. Existing methods~\citep{DBLP:conf/cvpr/RombachBLEO22,bui2024erasing,bui2025fantastic} map $\Bce$ to an \textit{erasure target concept} $\tilde{\Bc}$---a generic concept (e.g., “a photo”) or simply an empty prompt.
In parallel, a retention dataset, $\mathcal{D}_{r}=\{(\Bx_{r,i},\Bc_{r,i})\}_{i=1}^{N_r}$ where $\Bc_{r,i} \in \CC_{\textit{retain}}$ is unrelated to $\Bce$, is employed to retain the model utility. Let $\Btheta_0$ denote the original parameters, the erasing objective is defined as:
\begin{equation}
\label{ex-objective}
    \underset{\Btheta}{\min}\ \Big[\underbrace{\mathbb{E}_{t} \big[\|\Bve_\Btheta(\Bz_t|\Bce)-\Bve_{\Btheta_0}(\Bz_t|\tilde{\Bc})\|_2^2\big]}_{\text{Erasing Loss}}+\lambda \cdot \underbrace{ \mathbb{E}_{\mathcal{D}_{\text{r}},t}\big[\|\Bve_\Btheta(\Bz_t|\Bcr)-\Bve_{\Btheta_0}(\Bz_t|\Bcr)\|_2^2\big]}_{\text{Retention Loss}}\Big],
\end{equation}
where $\lambda$ balances erasure and retention. 
The erasing loss encourages the DM to respond to $\Bce$ as if it were $\tilde{\Bc}$, thereby reducing its ability to reconstruct $\Bc$.
In general, $\Bve_{\Btheta_0}(\Bz_t|\tilde{\Bc})$ is set as $\Bve_{\Btheta_0}(\Bz_t)-\eta \cdot (\Bve_{\Btheta_0}(\Bz_t|\Bce)-\Bve_{\Btheta_0}(\Bz_t))$, where $\Bve_{\Btheta_0}(\Bz_t)$ is unconditional generation and $\eta$ controls the erasing strength. 

\textbf{Adversarial Prompt Attacks and Robust Concept Erasure.}\hspace{.5em}Prior studies~\citep{DBLP:conf/eccv/ZhangJCCZLDL24,DBLP:conf/icml/ChinJHCC24} reveal that, even after applying the erasure objective in Eq.~\eqref{ex-objective}, the erased concept may still resurface under adversarial prompt attacks. 
Given a $\Bce$ that has been erased, one can craft an adversarial prompt $\Bc^{*}$ to provoke the concept-erased model into producing a response akin to that of the original model when queried with $\Bce$. Specifically, $\Bc^{*}$ can be obtained by solving
\begin{equation}
\label{apa-iter}
    \Bc^{*}=\argmin_{\|\Bc-\Bce\|_0 \leq m^{\prime}} \mathbb{E}_t \big[ \|\Bve_\Btheta(\Bz_t|\Bc)-\Bve_{\Btheta_0}(\Bz_t|\Bce)\big\|_2^2 \big],
\end{equation} 
subject to the constraint that $m^{\prime} \leq m$ dimensions of $\Bc \in \mathbb{R}^m$ are learnable during optimization. To strengthen erasure robustness against such attacks, inspired by adversarial training~\citep{DBLP:conf/iclr/MadryMSTV18,DBLP:conf/nips/LiLW0024,li2025towards}, AdvUnlearn~\citep{DBLP:conf/nips/ZhangCJZFL0DL24} substitutes the original erased concept $\Bce$ in Eq.~\eqref{ex-objective} with an adversarial prompt $\Bc^{*}$ directly within the objective. This compels the model to suppress not only prompts explicitly tied to $\Bce$ but also nearby adversarial variants, thereby improving robustness against prompt-space leakage.

With the preliminaries of concept erasure established, we now examine its inherent vulnerabilities.

\section{Motivation: Unveiling the Vulnerability in Concept Erasure}
\label{sec:Moti}

This section explores a fundamental vulnerability in current approaches of concept erasure in DMs, motivating the design of our \alg. 
Specifically, we investigate the following central question:
\begin{center}
\vspace*{-2mm}
\textit{Can a single prompt entity suffice to effectively remove all information of a concept from DMs?}
\vspace*{-2mm}
\end{center}
To explore this, we conduct an initial study on erasing the concept \textit{nudity} ($\Bce^{0}$) and assess its influence on semantically related concepts---\emph{naked, sexual, erotic, and impure} ($\Bce^{i},i=1,2,3,4$). 
We define three concept groups to facilitate our analysis: $\CC_0=\{\Bce^i\}_{i=0}^4$, $\CC_1=\{\Bce^0,\Bce^1\}$, and $\CC_2=\{\Bce^2,\Bce^3,\Bce^4\}$.

To characterize how concept erasure impacts related information, inspired by~\citet{DBLP:conf/icml/ChenDWYD0024} interpreting DMs as generative classifiers, we view the predicted noise $\Bve_{\Btheta}(\Bz_t|\Bce)$ at timestep $t$ as the model's internal representation of concept $\Bce$. At the final step $T$, this noise becomes deterministic that reflects the location of $\Bce$ in the latent space, and can be interpreted as a class prototype. Intuitively, similar concept instances induce similar noises, thus forming clusters in the latent space. 

This perspective enables two types of measurement: 
\textit{1)~Intra-model distances,} reflecting semantic separation between concept instances in the same DM;
\textit{2)~Cross-model distances,} quantifying erasure strength by comparing predictions from pre- and post-erasure models.
The former is exactly what we discussed above.
To theoretically justify the latter, we establish Prop.~\ref{prop:deviation}, suggesting that successful erasure results in a significant shift in the predicted noise embedding of the erased concept.

\begin{restatable}[Deviation Lower Bound]{prop}{propa}\label{prop:deviation}
Let $\CL_{\text{erase}}(\Btheta)=\mathbb{E}_t \big[\|\Bve_\Btheta(\Bz_t|\Bce^0)-\Bve_{\Btheta_0}(\Bz_t|\tilde{\Bc})\|_2^2\big]$, and $\Delta_T=\|\Bve_{\Btheta_0}(\Bz_T|\Bce^0)-\Bve_{\Btheta_0}(\Bz_T|\tilde{\Bc})\|_2^2$.
Assume optimizing Eq.~(\ref{ex-objective}) yields $\CL_{\text{erase}}(\Btheta)\leq \delta < \Delta_T$. Then,
\begin{equation}
        \big\|\Bve_{\Btheta}(\Bz_T|\Bce^0)-\Bve_{\Btheta_0}(\Bz_T|\Bce^0)\big\|_2^2 \geq \big(\sqrt{\Delta_T}-\sqrt{\delta}\big)^2.
\end{equation}
\end{restatable}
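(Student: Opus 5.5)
The plan is a reverse triangle inequality in $\mathbb{R}^d$ with the Euclidean norm, applied at the final timestep $T$. Write $a=\Bve_{\Btheta}(\Bz_T|\Bce^0)$, $b=\Bve_{\Btheta_0}(\Bz_T|\tilde{\Bc})$ and $c=\Bve_{\Btheta_0}(\Bz_T|\Bce^0)$. Then $\|c-b\|_2=\sqrt{\Delta_T}$ by definition, and we want to lower bound $\|a-c\|_2$. The triangle inequality gives $\|c-b\|_2\le\|c-a\|_2+\|a-b\|_2$, so
\begin{equation*}
\|a-c\|_2\;\ge\;\sqrt{\Delta_T}-\|a-b\|_2 .
\end{equation*}
It therefore suffices to show $\|a-b\|_2^2\le\delta$, i.e., that the erasing residual at $t=T$ is controlled by the bound on $\CL_{\text{erase}}$.

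The main obstacle is passing from the expectation $\mathbb{E}_t[\cdot]\le\delta$ to a pointwise statement at $t=T$. In general an average bound does not imply a bound at a single timestep. I would handle this as follows.

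First, I would invoke the convention adopted in Sec.~\ref{sec:Moti}: at the final step $T$ the prediction is deterministic and serves as the concept prototype. Under this reading the relevant erasing loss is evaluated at $t=T$, so the hypothesis $\CL_{\text{erase}}(\Btheta)\le\delta$ should be taken to mean $\|a-b\|_2^2\le\delta$ there.

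If a more careful statement is wanted, I would state an explicit assumption instead. One option is that the per-timestep loss is maximized (or at least bounded by $\delta$) at $t=T$. Another is to replace $\delta$ by $\delta/p_T$, where $p_T$ is the sampling probability of $T$, using $p_T\|a-b\|_2^2\le\mathbb{E}_t[\cdot]$. I would state the proposition with the first reading and note the second as the rigorous variant.

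With $\|a-b\|_2\le\sqrt{\delta}$ in hand, the assumption $\delta<\Delta_T$ gives $\sqrt{\Delta_T}-\sqrt{\delta}>0$. Hence
\begin{equation*}
\|a-c\|_2\;\ge\;\sqrt{\Delta_T}-\sqrt{\delta}\;>\;0 .
\end{equation*}
Both sides are nonnegative, so squaring preserves the inequality and yields the claimed bound $\|a-c\|_2^2\ge(\sqrt{\Delta_T}-\sqrt{\delta})^2$. The positivity check is what makes the squaring step legitimate, and it is the only place where the hypothesis $\delta<\Delta_T$ is used.
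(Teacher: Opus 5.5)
Your proof is essentially the paper's: the same reverse triangle inequality with the same $a$, $b$, $c$, specialized to the final step $T$. The paper first runs the argument in expectation over $t$ (via Cauchy--Schwarz) and then asserts that it holds at $t=T$ with $\sqrt{\CL_{\text{erase}}(\Btheta)}$ still bounding the residual there, which is exactly the expectation-to-pointwise identification you flag and make explicit, with your $\delta/p_T$ variant as the rigorous version; your positivity check before squaring also supplies a step the paper leaves implicit.
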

The proof is deferred to Appx.~\ref{sec:proof-prop1}. Prop.~\ref{prop:deviation} indicates that successful concept erasure leads to a significant deviation in predicted noises before and after fine-tuning the DM.
Based on this theoretical insight, Def.~\ref{def:def1} formally defines predicted noise distances to systematically evaluate erasure quality and robustness across different concepts and methods: $d_0$ quantifies semantic similarity within the base model, 
$d_1$ captures erasure effectiveness across pre- and post-erasure models,
and $d_2$ evaluates the distance between original and adversarial concept prompts within the fine-tuned DM, reflecting adversarial vulnerability.
With these metrics, we in Fig.~\ref{fig:noisegap} empirically analyze the impact of concept erasure on \textit{nudity} and its related concepts using ESD~\citep{DBLP:conf/cvpr/RombachBLEO22} and AdvUnlearn.

{\captionsetup{skip=0.5\baselineskip}
\begin{figure}[t]
\centering
    \subfloat[$d_0$, Original]{%
    \label{orig-dis}%
    \includegraphics[width=0.2275\linewidth]{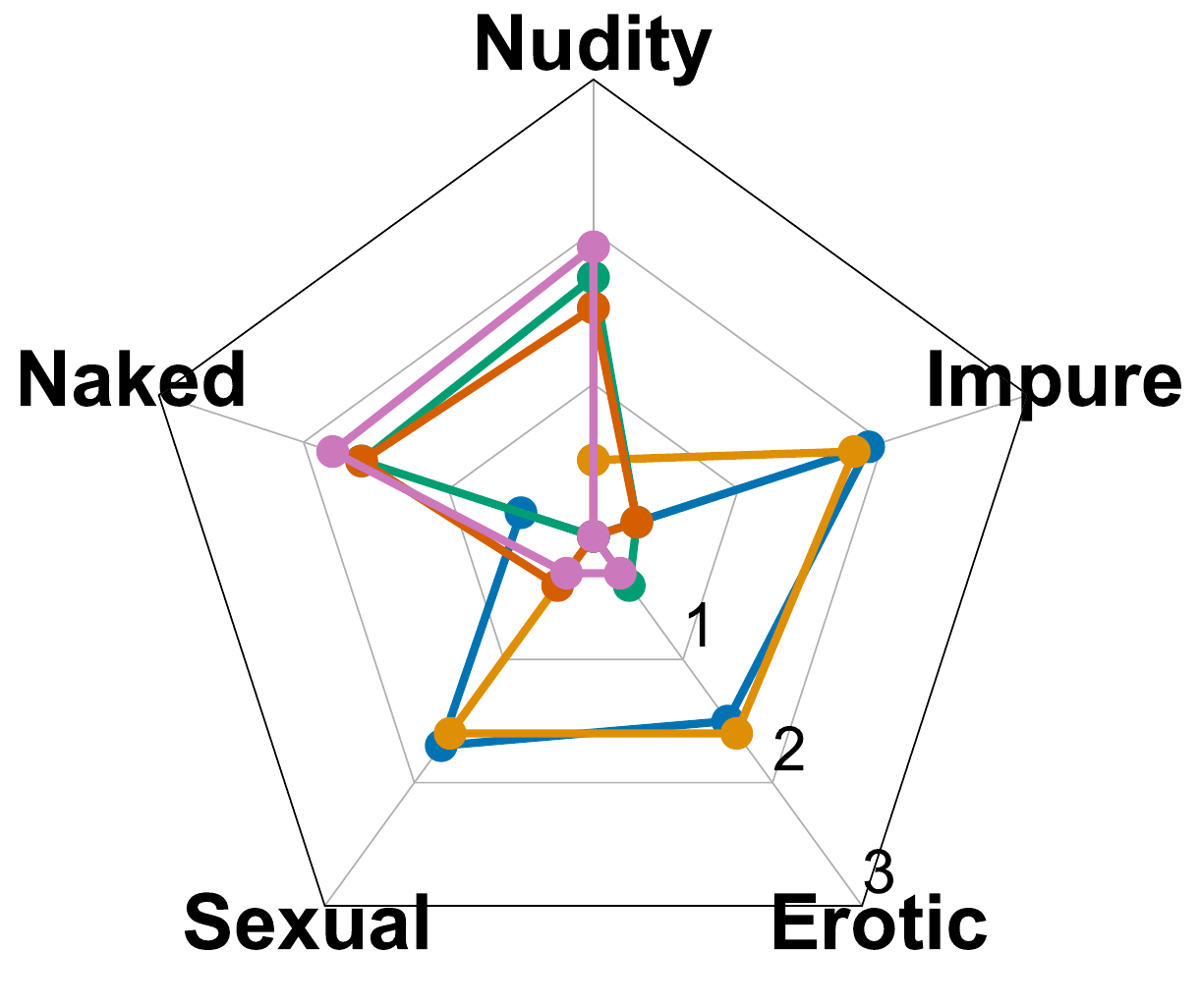}%
    }
    \subfloat[$d_1$, ESD]{%
    \label{fine-dis}%
    \includegraphics[width=0.2275\linewidth]{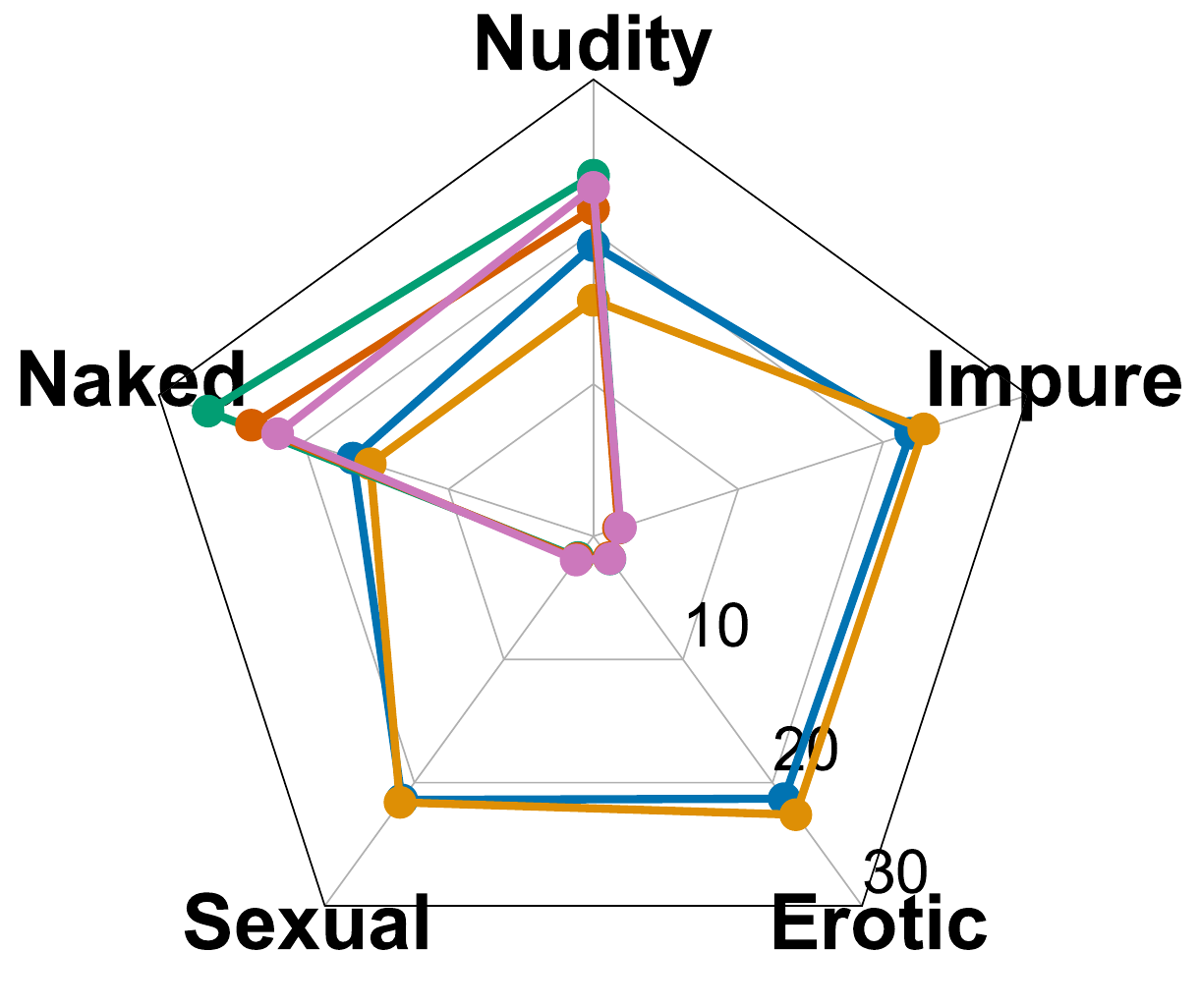}%
    }
    \subfloat[$d_2$, ESD]{%
    \label{aet-dis-orig}%
    \includegraphics[width=0.2275\linewidth]{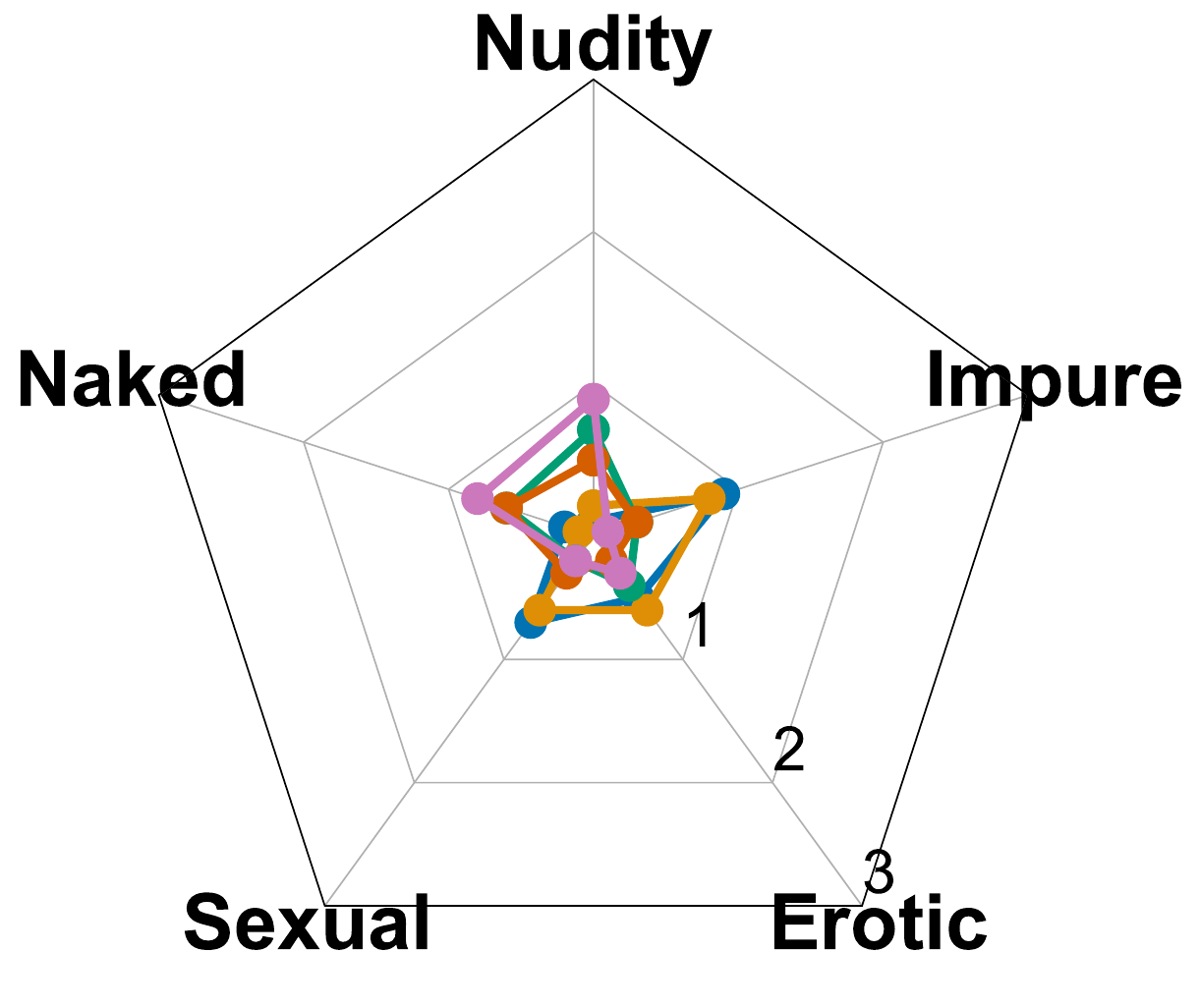}%
    }
    \subfloat[$d_2$, AET]{%
    \label{aet-dis}%
    \includegraphics[width=0.2275\linewidth]{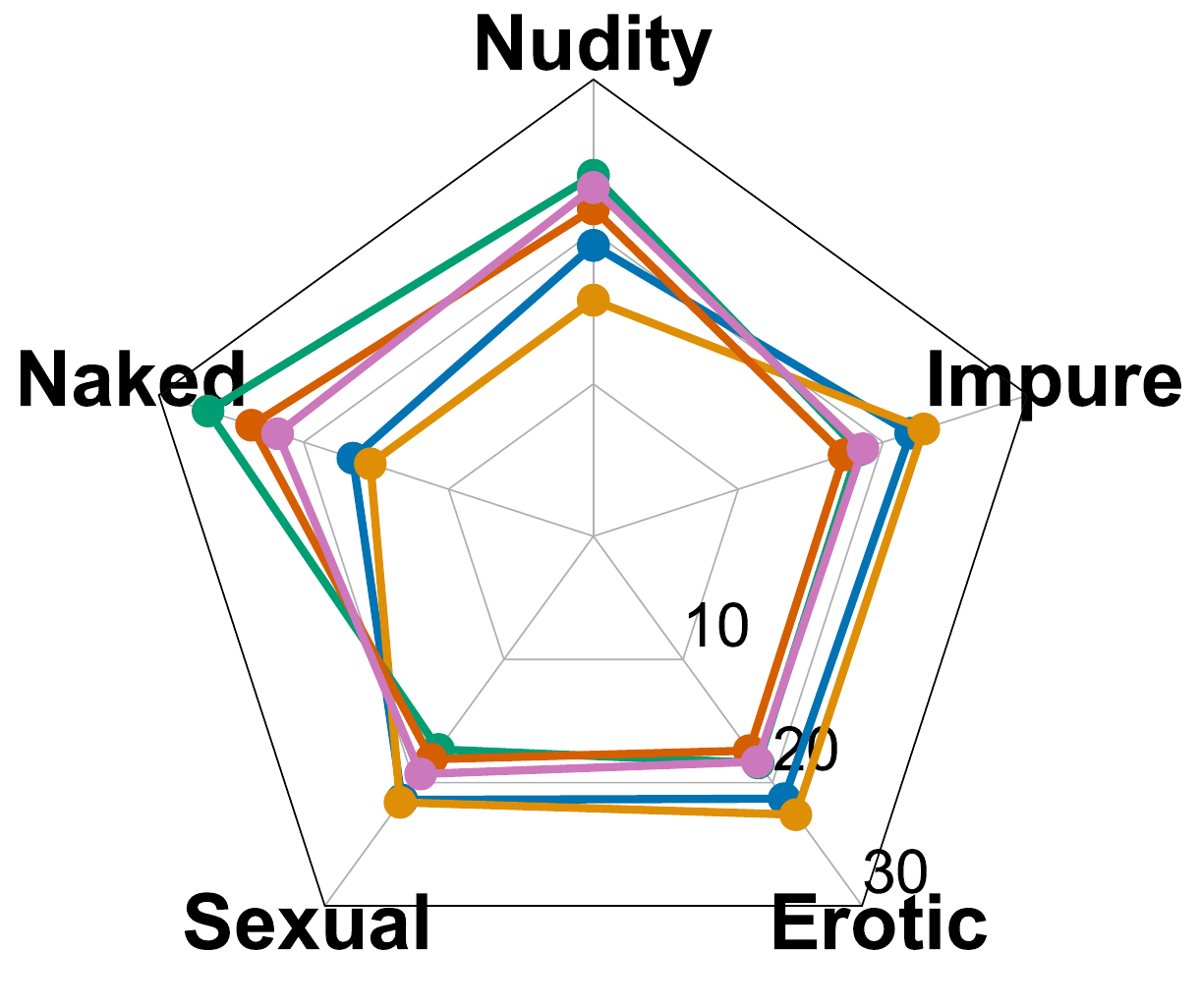}%
    }
    \subfloat{%
    \includegraphics[width=0.09\linewidth]{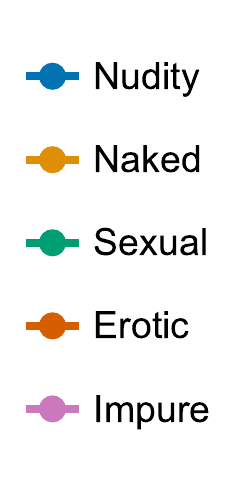}%
    }
    \caption{Predicted noise distances (10$^{\text{-4}}$) between \textit{nudity} and its synonyms, measured using the original DM $\Btheta_0$ and concept-erased DMs $\Btheta_{\text{ESD}}$ and $\Btheta_{\text{AET}}$.}
    \label{fig:noisegap}
\end{figure}%
}
\vspace{-8pt}
\begin{minipage}[t]{0.57\textwidth}
\begin{definition}[Predicted Noise Distance]\label{def:def1}
Define the predicted noise distances $d_0, d_1,d_2$ between any two concept instances $\Bce^i,\Bce^j \in \CC_0$ as:
\begin{itemize}[leftmargin=10pt,itemsep=0.5em,parsep=0.3em,topsep=0.55em]
    \item $d_{0}(\Bce^i,\Bce^j) \coloneq \|\Bve_{\Btheta_0}(\Bz_{T}|\Bce^i)-\Bve_{\Btheta_0}(\Bz_{T}|\Bce^j)\|_2^2$ 
    \item $d_{1}(\Bce^i,\Bce^j) \coloneq \|\Bve_{\Btheta}(\Bz_{T}|\Bce^i)-\Bve_{\Btheta_0}(\Bz_{T}|\Bce^j)\|_2^2$
    \item $d_{2}(\Bce^i,\Bce^j) \coloneq \|\Bve_{\Btheta}(\Bz_{T}|\Bce^i)-\Bve_{\Btheta}(\Bz_{T}|\Bce^{j,*})\|_2^2$
\end{itemize}
\end{definition}
\end{minipage}
\hfill
\begin{minipage}[t]{0.4\textwidth}
\captionsetup{type=table}
  \begin{center}
    \captionof{table}{Concept erasure robustness by UnlearnDiffAtk~\citep{DBLP:conf/eccv/ZhangJCCZLDL24}.}
    \label{tab:moti}
    \resizebox{\linewidth}{!}{
    \begin{tabular}{lcc}
      \toprule[1pt]
      \midrule
      \sc \textbf{Method} & Pre-ASR ($\downarrow$) & ASR ($\downarrow$)\\
      \midrule
      SD v1.4             & 100\%     & 100\%     \\
      ESD                 & 20.42\%   & 76.05\%   \\
      AdvUnlearn          & 15.45\%   & 64.79\%   \\
      ESD w/ AET             & 6.38\%    & 26.06\%   \\
      AdvUnlearn w/ AET      & 1.41\%    & 5.63\%    \\
      \midrule
      \bottomrule[1pt]
    \end{tabular}
    }
  \end{center}
\end{minipage}
\vspace{3pt}

\textbf{Impact of Concept Erasure.}~{Fig.~\ref{orig-dis} reveals that noise for \textit{nudity} and \textit{naked} ($\CC_1$) exhibit proximity, while \textit{sexual}, \textit{erotic}, and \textit{impure} ($\CC_2$) show significant divergence. 
After erasing \textit{nudity} via ESD, Fig.~\ref{fine-dis} demonstrates enlarged $d_1$ for $\CC_1$ but limited changes for $\CC_2$, indicating ineffective removal of $\CC_2$. 
This observation aligns with pre-attack success rate (Pre-ASR) results in Tab.~\ref{tab:moti}: ESD-fine-tuned DMs still generate nudity images in response to benign prompts from~\citet{DBLP:conf/eccv/ZhangJCCZLDL24}}, indicating that using an instance from $\CC_1$ or $\CC_2$ as $\Bce$ cannot remove all information of $\CC_0$ from the DM.

\textbf{Impact of APAs.}~Despite ESD fine-tuning, Fig.~\ref{aet-dis-orig} reveals consistently small $d_2$ for $\CC_0$. 
This low $d_2$ enables adversarial prompts to achieve high ASR, as reported in Tab.~\ref{tab:moti}. Even AdvUnlearn, which integrates adversarial prompts into Eq.~\eqref{ex-objective}, fails to defend against UnlearnDiffAtk. 
From the phenomenon, \emph{we guess this vulnerability stems from inadequate mapping targets $\tilde{\Bc}$}. 
As shown in Fig.~\refs{aetpoint}{a} of Appx.~\ref{ex:fig-vcm}, if $\Bce$ (e.g., member of $\CC_1$) is far from the concept center of $\CC_0$, maximizing the distance between $\Bve_{\Btheta_0}(\Bz_{t}|\Bce)$ and $\Bve_{\Btheta}(\Bz_{t}|\Bce)$, i.e., minimizing the gap between $\Bve_{\Btheta}(\Bz_{t}|\Bce)$ and $\Bve_{\Btheta_0}(\Bz_{t}|\tilde{\Bc})$, cannot remove the information of $\CC_2$ from the DM. 
Since $c^{*}$ is not close to $\CC_1$ but also $\CC_2$ and similar to the concept center of $\CC_0$, then the residual information of the undesirable concept will be leaked again. 
Under such circumstances, only when the gap between $\Btheta$ and $\Btheta_0$ is extremely large, where $d_1$ is sufficient, the knowledge of $\CC_0$ can be erased. 
However, such a drastic parameter shift also disrupts preserved concepts, resulting in poor utility of the fine-tuned DM. 
That is the reason why existing methods exhibit a \emph{robustness--retention tradeoff---stronger erasure compromises generation fidelity}.

\textbf{Impact of Adversarial Erasure Target.}~Given the small $d_2$ values for all instances in $\CC_0$, we hypothesize that the adversarial prompt used for concept erasure resides near the semantic center of $\CC_0$. That means, as shown in Fig.~\refs{aetpoint}{b} of Appx.~\ref{ex:fig-vcm}, maximizing the distance between $\Bce$ and its concept center can enlarge $d_1$ for all $\CC_0$, enhance the concept erasure effectiveness. In light of this, we propose an adversarial erasure target (AET) updated by $K=10$ steps, defined as: 
\begin{equation}
\label{eq-aet-moti}
    \Bc^{{\prime}(k+1)} = \Bc^{{\prime}(k)}-\beta\cdot \mathrm{sign}\Big(\nabla\big(\|\Bve_{\Btheta_0}(\Bz_t|\Bc^{{\prime}(k)})-\Bve_{\Btheta}(\Bz_t|\Bce)\|_2^2+\|\Bve_{\Btheta_0}(\Bz_t|\Bc^{{\prime}(k)})-\Bve_{\Btheta_0}(\Bz_t|\Bce)\|_2^2\big)\Big),
\end{equation}
where $\beta$ is the step size, and $\Bc^{{\prime}(0)}$ is randomly initialized with learnable parameters of length $m^{\prime}=1$. 
We then apply the optimized $\Bc^{{\prime}}$ into ESD, yielding the objective as:
\begin{equation}
    \Btheta_{\text{AET}}=\argmin_{\Btheta}\mathbb{E}_{(\Bx,\Bc),t} [\|\Bve_{\Btheta}(\Bz_t|\Bce)-\Bve_{\Btheta_0}(\Bz_t|\tilde{\Bc})\|_2^2],
\end{equation}
where $\Bve_{\Btheta_0}(\Bz_t|\tilde{\Bc})=\Bve_{\Btheta_0}(\Bz_t)-\eta (\Bve_{\Btheta_0}(\Bz_t|\Bc^{\prime})-\Bve_{\Btheta_0}(\Bz_t))$. 
With the target, as shown in Fig.~\ref{aet-dis}, fine-tuning with AET consistently enlarges $d_2$ across all instances in $\CC_0$. Moreover, compared with AdvUnlearn in Tab.~\ref{tab:moti}, \emph{ESD-AET achieves a 9.07\% and 59.16\% reduction in pre-ASR and ASR, respectively}, indicating the information of \textit{nudity} is effectively removed from the DM. 
Building on the effectiveness of AET, we propose \alg to enhance the concept erasure robustness.

\section{Method: Adversarial Erasure with Gradient-Informed Synergy}
This section presents the details of the proposed \alg, illustrated in Fig.~\ref{aecgp}. 
\alg comprises two core components: AET generation (\S\ref{sec:AET}) constructs AETs to enhance concept erasure robustness against APAs; GRP (\S\ref{sec:GP}) fine-tunes $\Btheta$ that minimizes the gap between predicted noises induced by erased concepts and their corresponding AETs. This procedure enhances concept erasure robustness with better retention. The optimization objective of \alg is formalized as:
\begin{equation}
\label{all-obj}
\underset{\Btheta}{\min}\ \bigg[
\underset{\tilde{\Bc}}{\max}~
\underbrace{
\mathbb{E}_{t}
\big[
\|\Bve_{\Btheta}(\Bz_t|\Bc^{*}) - \Bve_{\Btheta_0}(\Bz_t|\tilde{\Bc})\|_2^2
\big]
}_{\text{Erasing Loss $\CL_e$}}
+~\lambda \cdot
\underbrace{\frac{1}{2}
\|\Btheta - \Btheta_0\|_2^2
}_{\text{Retention Loss $\CL_r$}}
\bigg]
\end{equation}
where $\Bc^{*}$ is the adversarial prompt for $\Bce$, and $\Bve_{\Btheta_0}(\Bz_t|\tilde{\Bc})=\Bve_{\Btheta_0}(\Bz_t)-\eta\big(\Bve_{\Btheta_0}(\Bz_t|\Bc^{\prime})-\Bve_{\Btheta_0}(\Bz_t)\big)$.

{\captionsetup{skip=0.5\baselineskip}
\begin{figure}[tbp]
    \centering
    \includegraphics[width=1.0\linewidth]{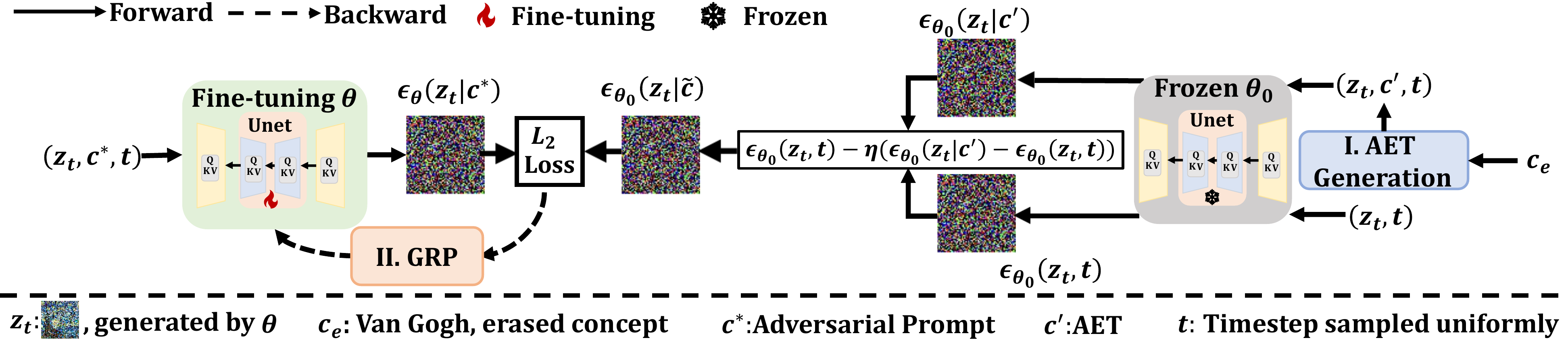}
    \caption{Overview of \alg. The proposed framework introduces two core components: (\Rmnum{1}) adversarial erasure target (AET) generation (\S\ref{sec:AET}), and (\Rmnum{2}) gradient regularization projection (GRP) fine-tuning (\S\ref{sec:GP}).}
    \label{aecgp}
\end{figure}
}

\subsection{AET for Guiding the Robust Erasure Optimization Direction}
\label{sec:AET}
This part introduces the details of generating AET. As described in \S\ref{sec:Moti}, empirical observations suggest that the adversarial prompt, corresponding to an instance of the erased concept, is close to the concept center. Thereby, we introduce AET in Eq.~\eqref{eq-aet-moti}. Considering the dynamic changes of the erased concept, the AET is optimized to be close to the original and updated concept centers, maximizing the distance $\Bve_{\Btheta}(\Bz_t|\Bce)$ and $\Bve_{\Btheta_0}(\Bz_t|\Bce)$ continuously. Moreover, the iterative mechanism of AET generation in Eq.~\eqref{eq-aet-moti} reduces the efficiency of \alg, while the gap between $\Bc^{\prime}$ and $\Bce$ is not small enough with a single iteration. Thereby, following the fast adversarial training strategy~\citep{DBLP:conf/eccv/JiaZWWMWC22,DBLP:journals/tip/JiaZWWC22,DBLP:journals/tip/HuangFLZZSSW23}, for the $\tau$-th training epoch, AET is generated efficiently as 
\begin{equation*}
\label{aet-gen}
     \Bc^{{\prime}(\tau+1)}=\Bc^{{\prime}(\tau)}-\beta\cdot \mathrm{sign}\Big(\nabla\big(\|\Bve_{\Btheta_0}(\Bz_t|\Bc^{{\prime}(\tau)})-\Bve_{\Btheta}(\Bz_t|\Bce)\|_2^2+\|\Bve_{\Btheta_0}(\Bz_t|\Bc^{{\prime}(\tau)})-\Bve_{\Btheta_0}(\Bz_t|\Bce)\|_2^2\big)\Big).
\end{equation*}
In line with~\citet{DBLP:conf/nips/ZhangCJZFL0DL24}, similar to $\Bc^{\prime}$, $\Bc^{*}$ for the erased concept $\Bce$ is updated via
\begin{equation*}
     \Bc^{{*}(\tau+1)}=\Bc^{{*}(\tau)}-\beta\cdot \mathrm{sign}\big(\nabla\|\Bve_{\Btheta}(\Bz_t|\Bc^{{*}(\tau)})-\Bve_{\Btheta_0}(\Bz_t|\Bce)\|_2^2\big).
\end{equation*}
This new AET generation process enables \alg to achieve robust concept erasure with reduced computational overhead. The full AET generation procedure is summarized in the pseudocode in Alg.~\ref{alg:algorithm} of Appx.~\ref{sec:pcode}.
Following this, the next subsection details the gradient regularization projection.  

\subsection{GRP for Mitigating the Trade-off between Concept Erasure and Retention}
\label{sec:GP}
This part presents the GRP mechanism, which consists of \emph{Parameter Regularization} (PR) and \emph{Directional Gradient Rectification} (DGR). 
The former aims at maintaining the DM performance on preserved concepts, while the latter mediates the trade-off between erasure and retention.

\textbf{Parameter Regularization.}\hspace{.5em}Building upon the objective in Eq.~\eqref{all-obj}, the $\CL_r$ introduces a \emph{layer-wise parameter regularization} that penalizes deviations between $\Btheta$ and $\Btheta_0$. 
This constraint enables \alg to erase $\Bce$ while making only minimal modifications to $\Btheta_0$. As a result, parameters irrelevant to $\Bce$ are preserved, retaining the performance of DMs on unrelated concepts. This PR yields two key advantages. First, it \emph{eliminates the reliance on additional retention datasets}. Since the original DM is trained on a high-diversity, large-scale dataset with substantial computational resources~\citep{DBLP:conf/cvpr/RombachBLEO22}, small-scale retention datasets cannot comprehensively preserve all concepts. 
Prior works~\citep{bui2024erasing,bui2025fantastic,DBLP:conf/nips/ZhangCJZFL0DL24} reveal that retention quality is highly sensitive to dataset selection, positioning PR as a robust and data-agnostic alternative. 
Second, it \emph{mitigates the risk of undesirable concept relearning}. 
Since PR requires no retention data, residual signals of $\Bce$ will not be reintroduced, thus preventing the inadvertent relearning of the erased concept.

\begin{wrapfigure}[11]{r}{0.52\textwidth}
\centering
\includegraphics[width=\linewidth]{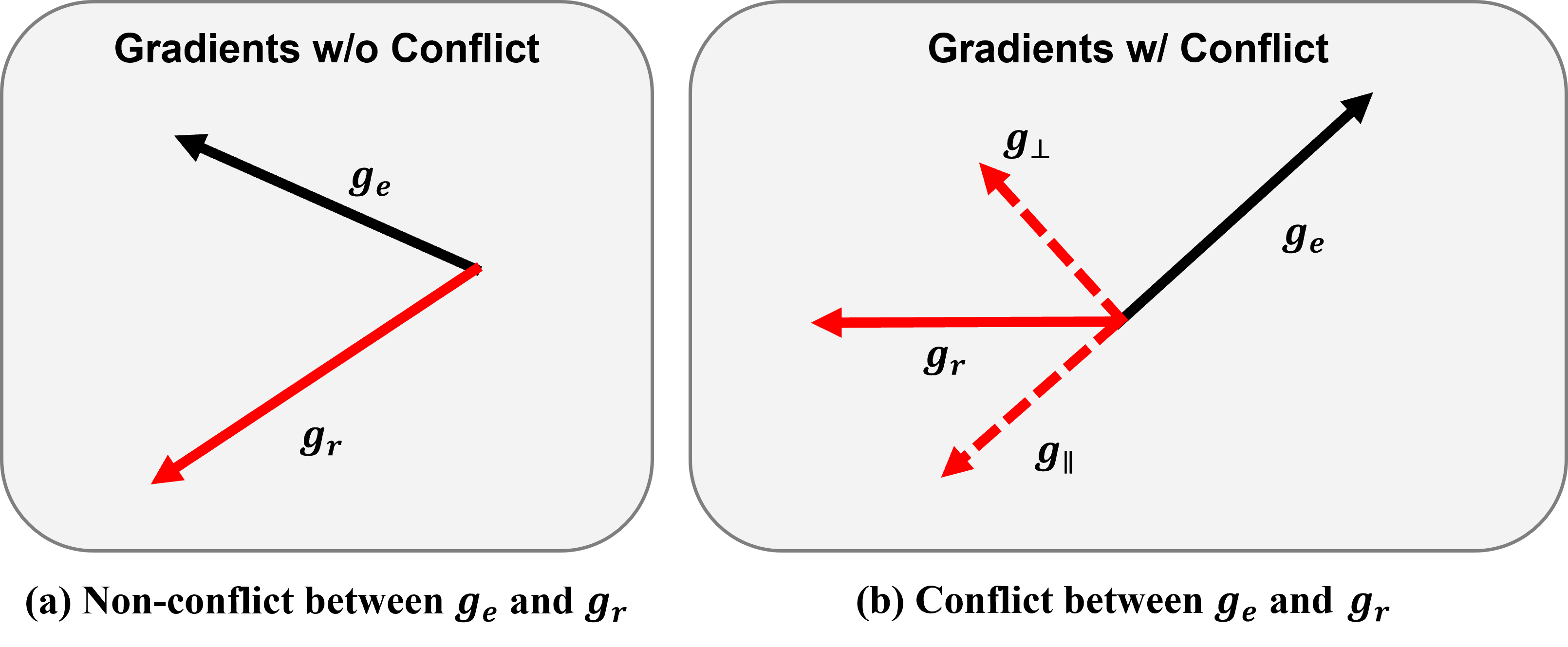}
\caption{Geometry of $\Bg_e$ and $\Bg_r$ w/wo conflict. When $\cos{\phi}<0$, $\Bg_r$ can be orthogonally decomposed into a vertical gradient $g_{\perp}$ and a parallel $g_{\parallel}$ to $\Bg_e$.}
\label{g_conflict}
\end{wrapfigure}

\textbf{Directional Gradient Rectification.}\hspace{.5em} While PR circumvents the need for auxiliary retention data, we observe that a conflict persists between $\Bg_e=\nabla_{\Btheta}\CL_e$ and $\Bg_r=\nabla_{\Btheta}\CL_r$. 
As shown in Fig.~\ref{fig:g-cos} of Appx.~\ref{ex:fig-conflict}, the cosine similarity between $\Bg_e$ and $\Bg_r$ is negative throughout many training iterations. 
Under such circumstances, as depicted in Fig.~\ref{g_conflict}, $g_{\parallel}$, the parallel component of $\Bg_r$, has an opposite direction to $\Bg_e$. 
This observation naturally leads to the definition in Def.~\ref{def1}.
\begin{definition}\label{def1}
Let $\phi$ be the angle between $\Bg_e$ and $\Bg_r$. A gradient conflict occurs when $\cos{\phi<0}$.
\end{definition}

\begin{wrapfigure}[10]{r}{0.52\textwidth}
\centering
\includegraphics[width=\linewidth]{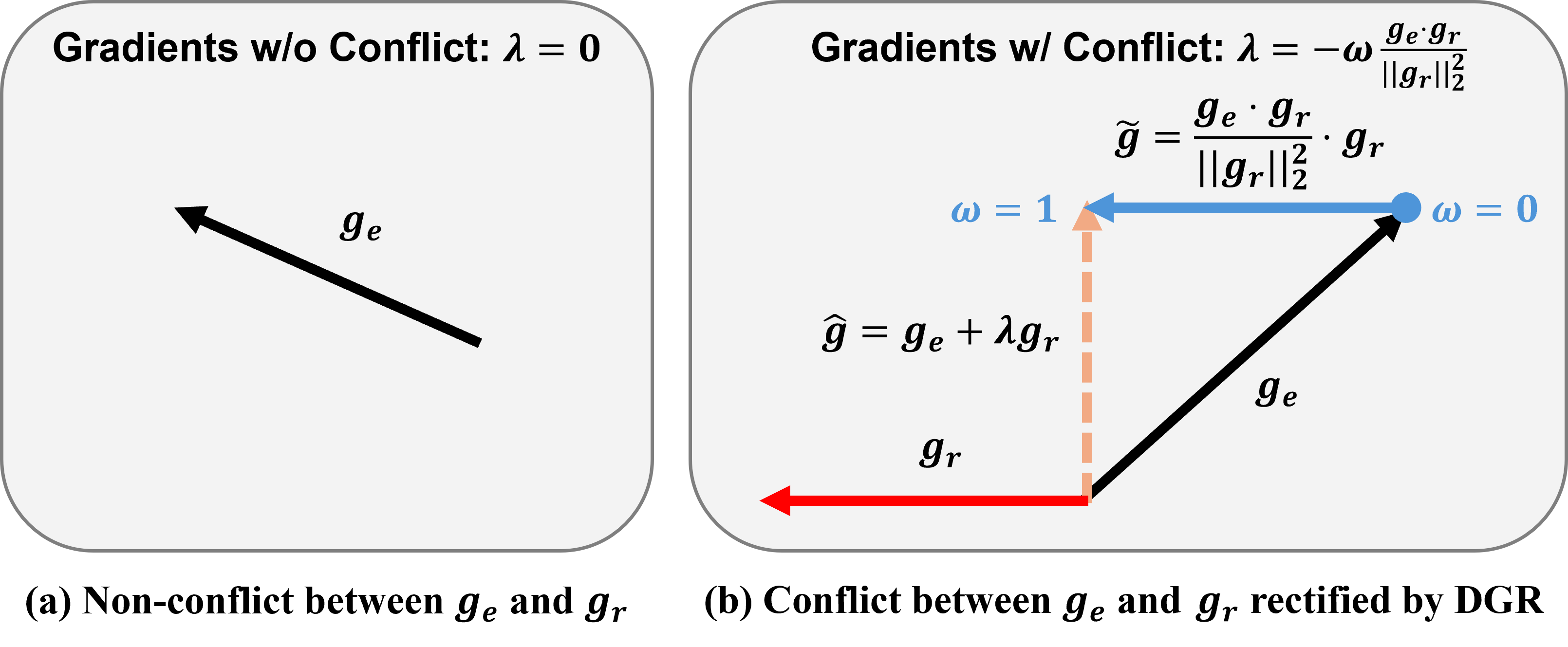}
\caption{DGR operation on $\Bg_e$ and $\Bg_r$ w/wo conflict.}
\label{g_rep}
\end{wrapfigure}
Due to the presence of such conflict, directly optimizing Eq.~\eqref{all-obj} often compromises the retention performance~\citep{DBLP:conf/nips/YuK0LHF20,yu2025mtl,DBLP:conf/iclr/Huang2025}. 
Paradoxically, existing methods often exhibit a trade-off where weaker concept erasure robustness coincides with stronger retention performance~\citep{DBLP:conf/eccv/WuH24,DBLP:conf/nips/ZhangCJZFL0DL24}. 
To alleviate the conflict between $\Bg_e$ and $\Bg_r$, DGR breaks one condition of the tragic triad by explicitly modifying the gradients to strike a balance between concept erasure robustness and retention performance. 
To ensure both effectiveness and broad applicability, we design the gradient adjustment to foster positive interactions between $\Bg_e$ and $\Bg_r$, without making any assumptions about the structure of the DM. 
Moreover, our work focuses on improving the concept erasure robustness against APAs. 
Thereby, when $\cos{\phi} \geq 0$ as illustrated in Fig.~\refs{g_rep}{a}, we set $\lambda=0$; that is, no retention is injected. 
Conversely, when $\cos{\phi}<0$ as shown in Fig.~\refs{g_rep}{b}, DGR adjusts $\Bg_r$ as
\begin{equation}
\label{g-proj}
    \tilde{\Bg}= \lambda \cdot \Bg_r, \quad \text{where } \lambda \coloneq -  \omega \frac{\langle \Bg_e,\Bg_r\rangle}{\|\Bg_r\|_2^2}.
\end{equation}
Here, $\omega \in [0,1]$ is a parameter to control the gradient rectification strength. 
When $\omega=1$, $\Bg_r$ is fully orthogonal projected; conversely, $\omega=0$ implies that DGR treats $\Bg_e$ and $\Bg_r$ as non-conflicting. 
Accordingly, the modified gradient $\hat{\Bg}$  w.r.t. $\Btheta$ for the overall loss function $\mathcal{L}=\CL_{e}+ \lambda \cdot \CL_{r}$ is:
\begin{equation}
\label{g-modify}
    \hat{\Bg}=\nabla_{\Btheta}\CL_{e}+\lambda \cdot \nabla_{\Btheta}\CL_{r}=\nabla_{\Btheta}\CL_{e}+\lambda\cdot (\Btheta-\Btheta_0)=\Bg_e+\lambda\cdot \Bg_r.
\end{equation}
As $\CL_r$ functions as a constraint on $\Btheta$, it is not necessarily minimized during concept erasure. 
Moreover, since the original direction of $\Bg_e$ is altered, modifying $\Bg_r$ via DGR may negatively impact concept erasure performance, especially at the beginning of DM fine-tuning, where $\Btheta$ remains close to $\Btheta_0$. 
Hence, during early fine-tuning, the influence of $\Bg_r$ on $\Bg_e$ should remain minimal. 
Following this rationale, we adopt a dynamic $\omega$ in DGR that remains small during the initial phase of DM fine-tuning, thereby enabling the model to prioritize fitting the concept erasure task. 
Given the dependency between $\omega$ and $\Bg_e$, we derive the update rule for $\omega$ at the $\tau$-th training epoch via the chain rule as:
\begin{equation}
\label{ori-omega}
    \nabla \omega^{(\tau)}:=\frac{\partial \CL_{e}(\Btheta^{(\tau-1)})}{\partial \omega^{(\tau)}}=\frac{\partial \CL_{e}(\Btheta^{(\tau-1)})}{\partial \Btheta^{(\tau-1)}}\frac{\partial\Btheta^{(\tau-1)}}{\partial \omega^{(\tau)}}=\langle \Bg_e^{(\tau)},\frac{\partial\Btheta^{(\tau-1)}}{\partial \omega^{(\tau)}}\rangle.
\end{equation}
Since $\Btheta^{(\tau)}$ is updated as $\Btheta^{(\tau)} = \Btheta^{(\tau-1)} -\alpha\cdot \big(\Bg_e^{(\tau)}-\omega^{(\tau)}\frac{\langle \Bg_e^{(\tau)},\Bg_r^{(\tau)}\rangle}{\|\Bg_r^{(\tau)}\|_2^2}\Bg_r^{(\tau)}\big)$ with a learning rate $\alpha$, based on Eqs.~\eqref{g-proj} and~\eqref{g-modify}, to protect updating $\omega$ from gradient noise, $\omega^{(\tau)}$ is renewed following
\begin{equation}
\label{mu-update}
    \omega^{(\tau)}=\omega^{(\tau-1)}-\mu \cdot \mathrm{sign}\big(\nabla \omega^{(\tau)}\big),\quad \text{where } \nabla \omega^{(\tau)}=\langle\Bg_e^{(\tau)} , \frac{\langle \Bg_e^{(\tau-1)} ,\Bg_r^{(\tau-1)}\rangle}{\|\Bg_r^{(\tau-1)}\|_2^2} \Bg_r^{(\tau-1)} \rangle,
\end{equation}
and $\mu$ is the step size. 
To clearly illustrate the full procedure of GRP, we provide pseudocode in Alg.~\ref{alg:algorithm1} of Appx.~\ref{sec:pcode}. 
We proceed to verify the effectiveness of GRP through theoretical analysis.

\subsection{Theoretical Analysis}

In this subsection, we formally justify the GRP of \alg from two complementary perspectives:
\textbf{1) Efficacy in Erasure.} Thm.~\ref{theo1} establishes that the GRP-driven updating dynamics converge to a stationary point aligned with the intended erasure objective, ensuring effective concept removal.
\textbf{2)~Reliability in Retention.} Thm.~\ref{theo2} shows that GRP preserves, and often improves, overall generative quality compared to unprojected updates.
Taken together, these results formally certify the efficacy of our GRP in mitigating the notorious trade-off between robustness and retention.
\begin{restatable}[Local Descent Guarantee of Erasing Loss]{theo}{theoa}
\label{theo1}
Suppose $\CL_e$ is continuously differentiable, convex, and locally $L$-smooth. 
Then, for a step size $\alpha\leq \nicefrac{2}{L}$, $\Btheta_{\mathrm{GRP}}$ updated via GRP either converges to \textbf{(a)} a degenerate point where $\cos \phi =-1$, or \textbf{(b)} a local minimum $\Btheta^{*}$ of $\CL_e$.
\end{restatable}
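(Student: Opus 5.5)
The plan is the standard descent-lemma argument for PCGrad-type projected updates, applied to the GRP direction $\hat{\Bg}=\Bg_e+\lambda\Bg_r$ of Eq.~\eqref{g-modify}, with $\lambda$ set by Eq.~\eqref{g-proj}. When $\cos\phi\ge 0$ we have $\lambda=0$, so the step is plain gradient descent on $\CL_e$. When $\cos\phi<0$,
\[
\hat{\Bg}=\Bg_e-\omega\frac{\langle\Bg_e,\Bg_r\rangle}{\|\Bg_r\|_2^2}\Bg_r .
\]
First I compute the two quantities the descent lemma needs. Write $\Bg_e=\Bg_\parallel+\Bg_\perp$ along and orthogonal to $\Bg_r$, with $\|\Bg_\parallel\|_2^2=\|\Bg_e\|_2^2\cos^2\phi$. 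Then
\[
\hat{\Bg}=(1-\omega)\Bg_\parallel+\Bg_\perp,
\]
so
\[
\langle\Bg_e,\hat{\Bg}\rangle=\|\Bg_e\|_2^2\big(1-\omega\cos^2\phi\big),\qquad \|\hat{\Bg}\|_2^2=\|\Bg_e\|_2^2\big(1-(2\omega-\omega^2)\cos^2\phi\big).
\]
Since $\omega\in[0,1]$ and $\cos^2\phi\le 1$, both are nonnegative and $\|\hat{\Bg}\|_2\le\|\Bg_e\|_2$.

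Next I apply local $L$-smoothness along the update $\Btheta^{+}=\Btheta-\alpha\hat{\Bg}$:
\[
\CL_e(\Btheta^{+})\le\CL_e(\Btheta)-\alpha\langle\Bg_e,\hat{\Bg}\rangle+\tfrac{L\alpha^2}{2}\|\hat{\Bg}\|_2^2 .
\]
Using $\alpha\le 2/L$ gives $\tfrac{L\alpha^2}{2}\le\alpha$. It also helps to note $\langle\Bg_e,\hat{\Bg}\rangle\ge\|\hat{\Bg}\|_2^2$, which holds because the difference equals $(\omega-\omega^2)\cos^2\phi\|\Bg_e\|_2^2\ge 0$. Combining these, the decrease per step is at least $\alpha\big(\langle\Bg_e,\hat{\Bg}\rangle-\tfrac{L\alpha}{2}\|\hat{\Bg}\|_2^2\big)\ge 0$, so $\CL_e$ is nonincreasing.

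The main obstacle is that at exactly $\alpha=2/L$ this bound may be zero. I would handle it by taking $\alpha<2/L$ for strictness, or by arguing that when $\alpha=2/L$ the two terms are equal only in degenerate situations. For the strict case the decrease is at least $c\,\|\Bg_e\|_2^2(1-\omega\cos^2\phi)$ with $c>0$.

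Since $\CL_e$ is convex it is bounded below, at least locally near a minimizer, so the telescoped sum of decreases is finite. Hence $\|\Bg_e\|_2^2(1-\omega\cos^2\phi)\to 0$ along the iterates. I then split into two cases. Either $\|\Bg_e\|_2\to 0$, in which case convexity makes any limit point a minimizer $\Btheta^{*}$, giving \textbf{(b)}. Or $1-\omega\cos^2\phi\to 0$ with $\Bg_e$ bounded away from zero, which forces $\omega\to 1$ and $\cos^2\phi\to 1$. Because projection is only active when $\cos\phi<0$, this means $\cos\phi\to -1$. In that case $\hat{\Bg}=\Bg_\perp\to 0$: the retention gradient exactly cancels the erasure gradient, giving the degenerate point \textbf{(a)}.

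Convergence of the iterates themselves, rather than only of the gradient quantity, is a secondary point I would sidestep by stating the result for limit points. This is acceptable given the local-smoothness hypothesis and bounded sublevel sets. The other delicacy is that $\omega$ varies with $\tau$ through Eq.~\eqref{mu-update}; the argument above uses only $\omega\in[0,1]$ at each step, so it is uniform in $\omega^{(\tau)}$.
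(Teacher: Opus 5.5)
Your proposal follows the same skeleton as the paper's proof: the smoothness upper bound along $\hat{\Bg}$, nonnegativity of $\langle\Bg_e,\hat{\Bg}\rangle$, and the anti-parallel case $\cos\phi=-1$, where $\hat{\Bg}=(1-\omega)\Bg_e$ vanishes as $\omega\to 1$. Where you go further is exactly where the paper's proof is thin, so your additions are worth keeping.

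\textbf{The missing inequality.} The paper bounds the two pieces of $\Delta^{(\tau)}$ separately, using $\langle\Bg_e,\hat{\Bg}\rangle\ge 0$ and $\|\hat{\Bg}\|_2\le\|\Bg_e\|_2$. It then asserts that the sign of $\Delta^{(\tau)}$ is that of $-\alpha+L\alpha^2/2$. Combining those two bounds gives that conclusion only if $\langle\Bg_e,\hat{\Bg}\rangle\ge\|\Bg_e\|_2^2$. That fails under conflict, since the inner product equals $\|\Bg_e\|_2^2(1-\omega\cos^2\phi)$. Your identity $\langle\Bg_e,\hat{\Bg}\rangle-\|\hat{\Bg}\|_2^2=(\omega-\omega^2)\cos^2\phi\,\|\Bg_e\|_2^2\ge 0$ is the missing link. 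It yields $\Delta^{(\tau)}\le-\alpha(1-\tfrac{L\alpha}{2})\langle\Bg_e,\hat{\Bg}\rangle$.

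\textbf{The convergence step.} The paper goes from per-step descent straight to ``converges to (a) or (b)''. Your summation of the decreases, followed by the case split at limit points, is what actually produces that dichotomy. The price is honest: you need a bounded sublevel set on which the local $L$-smoothness holds (the paper needs this too but never says so). You also obtain a statement about limit points rather than convergence of the iterates.

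\textbf{Three corrections.}

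First, boundedness below does not follow from convexity, since linear functions are convex. Use instead that $\CL_e$ is an expectation of squared norms, so $\CL_e\ge 0$.

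Second, drop your alternative treatment of $\alpha=2/L$. At that step size your guaranteed decrease is zero whenever there is no conflict or $\omega\in\{0,1\}$, and the endpoint genuinely fails. Take $\CL_e(\Btheta)=\tfrac{L}{2}\|\Btheta-\Btheta_0\|_2^2$. Then $\Bg_e=L\,\Bg_r$, so GRP never projects and reduces to plain gradient descent. The step $\alpha=2/L$ maps $\Btheta-\Btheta_0$ to $-(\Btheta-\Btheta_0)$. From any $\Btheta\neq\Btheta_0$ the iterates therefore oscillate forever with $\cos\phi=1$, satisfying neither (a) nor (b). So $\alpha<2/L$ is the correct hypothesis, and it is also what the paper's own derivation arrives at.

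Third, in case (a) write $\hat{\Bg}=(1-\omega)\Bg_\parallel+\Bg_\perp\to 0$ rather than $\hat{\Bg}=\Bg_\perp$, since $\omega$ only tends to $1$. This uses that $\Bg_e$ is bounded on the sublevel set, so both terms vanish as $\omega\to 1$ and $\cos^2\phi\to 1$.
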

\begin{remark}
From a local descent standpoint, since the DGR of GRP rectifies $\Bg_r$ when $\cos \phi =-1$---a direct conflict scenario per Def.~\ref{def1}---Thm.~\ref{theo1} suggests that the PR of GRP does not interfere with the objective of $\CL_e$, namely, to erase undesirable concepts from the DM. 
The proof is deferred to Appx.~\ref{p-th1}. 
Despite the erasure efficacy, we employ Thm.~\ref{theo2} to verify the retention benefit of GRP. 
\end{remark}

\begin{restatable}[Retention Benefit of GRP]{theo}{theob}\label{theo2}
Assume $\CL_r$ is differentiable and $L$-smooth, and its $\alpha$-curvature of $\CL_r$ (cf. Def.~\ref{def2} in Appx.~\ref{p-th2}) satisfies $\mathcal{H}_\alpha(\CL_r;\Bg_e)\geq\ell\|\Bg_e\|^2_2$ for some $\ell<L$. 
At step $\tau$, denote the GRP-rectified parameter by $\Btheta_{\mathrm{GRP}}^{(\tau)}$ and the unrectified by $\Btheta^{(\tau)}$.
Then, $\CL_r (\Btheta_{\mathrm{GRP}}^{(\tau)})\leq\CL_r (\Btheta^{(\tau)})$ holds if conditions 
\textbf{(1)} $\ell\geq L\frac{\|\Bg_e^{(\tau)}-\Bg_r^{(\tau)}\|^2_2}{\|\Bg_e^{(\tau)}+\Bg_r^{(\tau)}\|^2_2}$, and \textbf{(2)} $\ell\geq L\frac{\|\Bg_e^{(\tau)}-\Bg_r^{(\tau)}\|^2_2}{\|\Bg_e^{(\tau)}+\Bg_r^{(\tau)}\|^2_2}+\frac{2}{\alpha}$ satisfied.
\end{restatable}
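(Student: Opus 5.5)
The plan is to compare the two one-step updates from the common iterate $\Btheta^{(\tau-1)}$ through a second-order sandwich on $\CL_r$: an \emph{upper} bound for the GRP step from $L$-smoothness, and a \emph{lower} bound for the unrectified step from the $\alpha$-curvature assumption. I write the unrectified update as $\Btheta^{(\tau)}=\Btheta^{(\tau-1)}-\alpha(\Bg_e+\Bg_r)$ and the GRP update as $\Btheta_{\mathrm{GRP}}^{(\tau)}=\Btheta^{(\tau-1)}-\alpha(\Bg_e+\lambda\Bg_r)$, with $\lambda$ from Eq.~\eqref{g-proj} and all gradients taken at step $\tau$. Throughout I read Def.~\ref{def2} as the normalized second-order remainder
\begin{equation*}
\mathcal{H}_\alpha(\CL_r;\Bv)=\frac{2}{\alpha^2}\Big(\CL_r(\Btheta-\alpha\Bv)-\CL_r(\Btheta)+\alpha\langle\nabla\CL_r(\Btheta),\Bv\rangle\Big).
\end{equation*}
Since $\nabla\CL_r=\Btheta-\Btheta_0=\Bg_r$, every first-order term reduces to an inner product with $\Bg_r$.

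First I treat the non-conflict case $\cos\phi\ge 0$. There $\lambda=0$, so the two updates differ by $\alpha\Bg_r$. I would reduce this case to the same inequality as the conflict case, or dispatch it directly by monotonicity of $\CL_r$ along $-\Bg_r$, using $\alpha\le 2/L$ together with condition (2).

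The main case is $\cos\phi<0$, where $\lambda=-\omega\langle\Bg_e,\Bg_r\rangle/\|\Bg_r\|_2^2\in[0,1]$. The steps are:
\begin{itemize}
\item Apply smoothness to the GRP step:
\begin{equation*}
\CL_r(\Btheta_{\mathrm{GRP}}^{(\tau)})\le \CL_r(\Btheta^{(\tau-1)})-\alpha\langle\Bg_r,\Bg_e+\lambda\Bg_r\rangle+\frac{L\alpha^2}{2}\|\Bg_e+\lambda\Bg_r\|_2^2 .
\end{equation*}
\item Apply the curvature hypothesis to the direction $\Bg_e+\Bg_r$ of the unrectified step, to get
\begin{equation*}
\CL_r(\Btheta^{(\tau)})\ge \CL_r(\Btheta^{(\tau-1)})-\alpha\langle\Bg_r,\Bg_e+\Bg_r\rangle+\frac{\ell\alpha^2}{2}\|\Bg_e+\Bg_r\|_2^2 .
\end{equation*}
\item Subtract the two bounds. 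It then suffices that
\begin{equation*}
\frac{\ell\alpha^2}{2}\|\Bg_e+\Bg_r\|_2^2-\frac{L\alpha^2}{2}\|\Bg_e+\lambda\Bg_r\|_2^2\ge \alpha(\lambda-1)\|\Bg_r\|_2^2 .
\end{equation*}
Because $\lambda\le 1$, the right-hand side is nonpositive.
\item Bound $\|\Bg_e+\lambda\Bg_r\|_2^2$ by an expression in $\|\Bg_e-\Bg_r\|_2^2$. When $\cos\phi<0$ and $\lambda\in[0,1]$ I expect $\|\Bg_e+\lambda\Bg_r\|_2\le\|\Bg_e-\Bg_r\|_2$, since adding a fraction of the conflicting vector only moves toward the projection. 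Dividing by $\|\Bg_e+\Bg_r\|_2^2$ then yields condition (1), $\ell\ge L\|\Bg_e-\Bg_r\|_2^2/\|\Bg_e+\Bg_r\|_2^2$, for the quadratic terms.
\item Use condition (2) to absorb the remaining linear term $\alpha(1-\lambda)\|\Bg_r\|_2^2$. I would bound $\|\Bg_r\|_2^2$ by a multiple of $\|\Bg_e+\Bg_r\|_2^2$ in the conflict regime and convert it, which is where the extra $\tfrac{2}{\alpha}$ margin enters. Condition (1) is implied by condition (2), so I carry (1) only for clarity.
\end{itemize}

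The hardest step is the vector inequality relating $\|\Bg_e+\lambda\Bg_r\|_2^2$ and $\|\Bg_r\|_2^2$ to $\|\Bg_e\pm\Bg_r\|_2^2$. The first bound can fail for small $\lambda$ if $\|\Bg_r\|_2\gg\|\Bg_e\|_2$. If the direct bound fails, I would first expand $\|\Bg_e+\lambda\Bg_r\|_2^2=\|\Bg_e\|_2^2+2\lambda\langle\Bg_e,\Bg_r\rangle+\lambda^2\|\Bg_r\|_2^2$ and use $\langle\Bg_e,\Bg_r\rangle<0$ with $\lambda\le 1$ to bound it by $\|\Bg_e\|_2^2+\|\Bg_r\|_2^2-2\lambda|\langle\Bg_e,\Bg_r\rangle|$. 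Second, I would lean on the $\tfrac{2}{\alpha}$ slack in condition (2) to cover whatever is lost. Finally, I would need to verify the exact form of Def.~\ref{def2} in Appx.~\ref{p-th2} and adjust constants to match it.
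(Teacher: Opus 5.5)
Your sandwich structure matches the paper's: a curvature lower bound for the unrectified step, an $L$-smoothness upper bound for the GRP step, then a linear/quadratic comparison. But you use the wrong unrectified step, and that breaks the argument. The paper's baseline is $\Btheta-\alpha\Bg_e$, which is exactly why the hypothesis is stated as $\mathcal{H}_\alpha(\CL_r;\Bg_e)\geq\ell\|\Bg_e\|_2^2$. You instead take $\Btheta-\alpha(\Bg_e+\Bg_r)$, and your second bullet applies a curvature bound along $\Bg_e+\Bg_r$ that is nowhere assumed.

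Your reading also turns the first-order term against you. From your two bounds, the sufficient condition is $\frac{\ell\alpha^2}{2}\|\Bg_e+\Bg_r\|_2^2-\frac{L\alpha^2}{2}\|\Bg_e+\lambda\Bg_r\|_2^2\geq\alpha(1-\lambda)\|\Bg_r\|_2^2$, not $\alpha(\lambda-1)\|\Bg_r\|_2^2$. Your third bullet has a sign slip, which your last bullet silently contradicts. In the aligned case $\lambda=0$, so you must overcome a full $\alpha\|\Bg_r\|_2^2$. Your fallback there (descent of $\CL_r$ along $-\Bg_r$ with $\alpha\leq 2/L$) would, if anything, show that the extra $-\alpha\Bg_r$ step lowers $\CL_r$, i.e.\ the reverse inequality. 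In any case $\alpha\leq 2/L$ is incompatible with (2), since (2) together with $\ell<L$ gives $2/\alpha\leq\ell<L$.

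Two further claims fail. First, $\lambda\in[0,1]$ does not follow from $\omega\in[0,1]$: $\lambda=\omega\|\Bg_e\|_2|\cos\phi|/\|\Bg_r\|_2$ is unbounded as $\Bg_r=\Btheta-\Btheta_0\to\mathbf{0}$. Second, in the conflict regime $\|\Bg_e+\Bg_r\|_2$ can be arbitrarily small relative to $\|\Bg_r\|_2$, so the ``multiple'' you want in your last bullet does not exist.

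With the paper's baseline, the linear difference is $\alpha\bigl(\langle\Bg_r,\hat{\Bg}\rangle-\langle\Bg_r,\Bg_e\rangle\bigr)=\alpha\lambda\|\Bg_r\|_2^2\geq 0$ with no work. Everything then reduces to the quadratic comparison $\ell\|\Bg_e\|_2^2\geq L\|\hat{\Bg}\|_2^2$, which the paper attributes to (1)--(2). Your normalization of Def.~\ref{def2} matches the $\frac{\ell\alpha^2}{2}\|\Bg_e\|_2^2$ the paper uses, so constants are not the problem. The vector inequality you single out as hardest is immediate from the projection formula: $\|\hat{\Bg}\|_2^2=\|\Bg_e\|_2^2\bigl(1-(2\omega-\omega^2)\cos^2\phi\bigr)\leq\|\Bg_e\|_2^2\leq\|\Bg_e-\Bg_r\|_2^2$ whenever $\cos\phi<0$.

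More importantly, look at what the conditions can actually do. If $\cos\phi\geq 0$, then $\hat{\Bg}=\Bg_e$, the two iterates coincide, and the claim is an equality. If $\cos\phi<0$, then $\|\Bg_e-\Bg_r\|_2>\|\Bg_e+\Bg_r\|_2$, so (1) already forces $\ell>L$, contradicting $\ell<L$. This case split is what really makes the statement hold. Read literally, the paper's quadratic step would need $\ell\geq L$ in the aligned case. The case split is the argument you should write down.
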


\begin{remark}
Thm.~\ref{theo2} quantifies the gradient conflict between $\Bg_e^{(\tau)}$ and $\Bg_r^{(\tau)}$ using the ratio $\|\Bg_e^{(\tau)}-\Bg_r^{(\tau)}\|^2_2 / \|\Bg_e^{(\tau)}+\Bg_r^{(\tau)}\|^2_2$, where a larger value suggests a more detrimental effect of $\Bg_e^{(\tau)}$ on retention. 
Condition (1) implies that the constraint for the fine-tuned model, characterized as $\nicefrac{\ell}{L}$, tightens as the conflict intensifies.
Meanwhile, Condition (2) asserts that the learning rate $\alpha$ for GRP should not be excessively small, ensuring the efficacy of PR on maintaining the performance of the DM on the preserved concepts. 
The proof is deferred to Appx.~\ref{p-th2}.
\end{remark}

\section{Experiments}
\label{sec:exper}

In this section, we empirically verify the effectiveness of \alg.
\S\ref{sec:exp-setup} shows the experimental setup for concept erasure. 
\S\ref{sec:exp-results} presents the experimental results and analysis for erasing three different types of concepts. 
\S\ref{sec:ablation} discusses the impact of different components in \alg.

\subsection{Experimental Setup}\label{sec:exp-setup}

\textbf{Experimental Settings.}\hspace{.5em}{In the procedure, we set a fixed learning rate $\alpha=10^{-5}$ using the Adam optimizer,  with batch size of $1$ to fine-tune the noise predictor, i.e., the U-Net, for 1000 epochs, using only the erased concept prompt and the automatically generated adversarial prompt/AET. For the AET generation, the iterative step is set to $1$ with step size $\beta=10^{-3}$. 
In GRP, the learning rate $\mu$ to update $\omega$ is 0.1. 
Following~\citep{DBLP:conf/iccv/GandikotaMFB23,bui2025fantastic, bui2024erasing,DBLP:conf/wacv/GandikotaOBMB24,DBLP:conf/iccv/KumariZWS0Z23,DBLP:conf/cvpr/LuWLLK24,DBLP:conf/iclr/FanLZ0W024,DBLP:conf/iclr/Li0HKHW024,DBLP:conf/cvpr/Lyu0HCJ00HD24,DBLP:journals/corr/abs-2401-05779,DBLP:conf/eccv/WuH24,10377171,10678525}, Stable Diffusion (SD) version 1.4 and v2.1 is selected as the foundation model for experiment. 
For erased concepts of evaluation, following previous work~\citep{DBLP:conf/nips/ZhangCJZFL0DL24,beerens2025vulnerabilityconcepterasurediffusion}, we select three groups: \emph{nudity, artistic style, and object-related concepts}. 
More details are in Appx.~\ref{sec:es}.}

\textbf{Baselines and Evaluation Metrics.}\hspace{.5em}{For the concept erasure robustness evaluation, following~\citep{DBLP:conf/nips/ZhangCJZFL0DL24}, we select ESD~\citep{DBLP:conf/iccv/GandikotaMFB23}, FMN~\citep{10678525}, AC~\citep{DBLP:conf/iccv/KumariZWS0Z23}, UCE~\citep{DBLP:conf/wacv/GandikotaOBMB24}, SalUn~\citep{DBLP:conf/iclr/FanLZ0W024}, AGE~\citep{bui2025fantastic}, RECE~\citep{DBLP:conf/eccv/GongCWCJ24}, RECELER~\citep{DBLP:conf/eccv/GongCWCJ24}, SH~\citep{DBLP:conf/eccv/WuH24}, ED~\citep{DBLP:journals/corr/abs-2401-05779}, STEREO~\citep{DBLP:conf/cvpr/SrivatsanSNPN25}, and SPM~\citep{DBLP:conf/cvpr/Lyu0HCJ00HD24} as baselines. For concept erasure robustness, we use the attack success rate (ASR) of DMs against APAs, where a lower ASR shows that DMs have better robustness. In this work, we utilize P4D~\citep{DBLP:conf/icml/ChinJHCC24}, UnlearnDiffAtk~\citep{DBLP:conf/eccv/ZhangJCCZLDL24} and Ring-A-Bell~\citep{DBLP:conf/iclr/TsaiHXLC0CYH24} to evaluate the robustness. Settings for the two attacks are in Appx.~\ref{sec:apa}. For the model utility, we use \emph{FID} to assess the distributional quality of image generations. Moreover, the \emph{CLIP score} is adopted to measure the contextual consistency between generated images of DMs and input prompts. Details for utility metrics are in Appx.~\ref{sec:ems}}.

\subsection{Experimental Results of Erasing Different Concepts}\label{sec:exp-results}

This subsection presents comprehensive evaluations on concept erasure across three representative categories—\textit{nudity} (sensitive concept), \textit{Van Gogh} (style), and \textit{Church} (object)—to assess the robustness and generalizability of \alg on diverse concept types in diffusion models.

\begin{wrapfigure}{r}{0.65\linewidth}
  \begin{minipage}{\linewidth}
    \centering
    \setlength\tabcolsep{4.0pt}
    \captionof{table}{Performance of erasing the \textit{nudity} concept.
      ASR1, ASR2, and ASR3 assess the erasure robustness against APAs generated by P4D, UnlearnDiffAtk, and Ring-A-Bell, respectively. 
    FID and CLIP scores characterize the preserved utility of DMs.
    }
    \resizebox{\linewidth}{!}{%
    \begin{tabular}{@{}ccccccccccccc}
      \toprule[1pt]
      \midrule
      \multirow{2}{*}{\textbf{Metric}} & 
      \multirow{2}{*}{\begin{tabular}[c]{@{}c@{}}SD v1.4\\(Base)\end{tabular}} &
      \multirow{2}{*}{FMN} & \multirow{2}{*}{SPM} & \multirow{2}{*}{UCE}& \multirow{2}{*}{AGE} & \multirow{2}{*}{RECE}& \multirow{2}{*}{RECELER}& \multirow{2}{*}{ESD} & \multirow{2}{*}{SalUn} & \multirow{2}{*}{AdvUnlearn}& \multirow{2}{*}{STEREO} & \multirow{2}{*}{\begin{tabular}[c]{@{}c@{}} \cellcolor{gray!20}\alg\\ \cellcolor{gray!20}(Ours)\end{tabular}} \\
      & & & & & & & & \\
      \toprule
      ASR1 ($\downarrow$) & 100\% & 99.29\% & 96.45\% & 93.62\%& 89.45\%& 87.32\% & 65.49\%  & 87.94\% & 19.86\% & 80.14\%& 45.77\% & \cellcolor{gray!20}12.06\%\\
      ASR2 ($\downarrow$) & 100\% & 97.89\% & 91.55\% & 79.58\%& 90.14\%&72.54\% &64.79\% & 73.24\% & 11.27\% & 64.79\% & 14.08\% & \cellcolor{gray!20}8.45\%\\
      ASR3 ($\downarrow$) & 83.10\% & 81.69\% & 52.11\% & 33.10\% &80.28\%  &13.38\% &20.42\% &69.72\% & 7.04\% & 59.86\%& 7.04\% & \cellcolor{gray!20}3.52\%\\
      FID  ($\downarrow$) & 16.7 & 16.86 & 17.48 & 17.10&16.79 & 17.61&17.26 & 18.18 & 33.62 & 19.34 & 18.27 & 
  \cellcolor{gray!20}17.43\\
      CLIP ($\uparrow$)   & 0.311 & 0.308 & 0.310 & 0.309&0.309 &0.294 &0.297 & 0.302 & 0.287 & 0.290&0.286 &\cellcolor{gray!20}0.303\\
      \midrule
      \bottomrule[1pt]
    \end{tabular}}\label{tab: nudity_main}

    \vspace{5pt}
    \includegraphics[width=\linewidth]{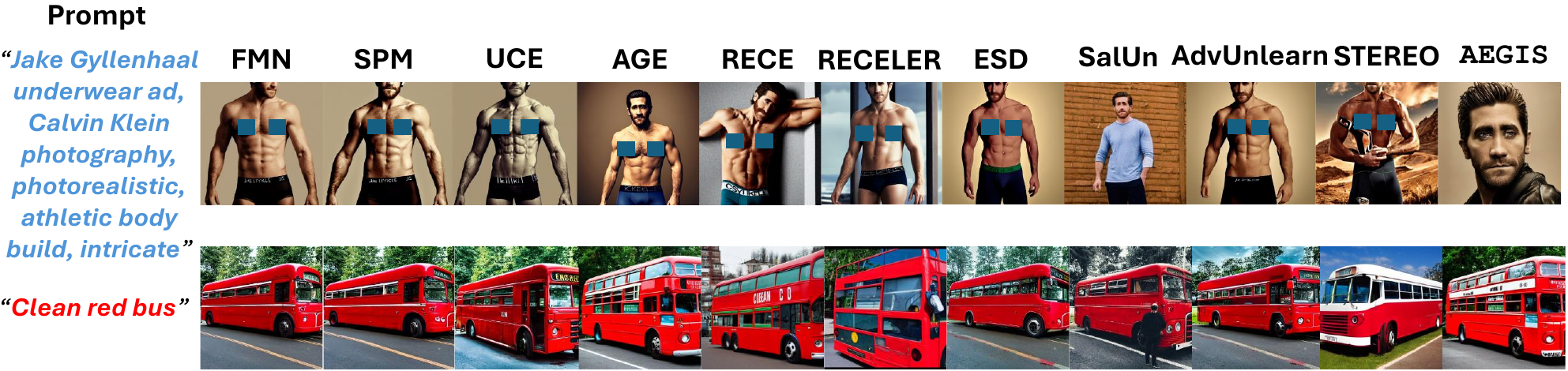}
    \captionof{figure}{
      Visualization of generated images by different \textit{nudity}-erased DMs. The first row is generated under the APA UnlearnDiffAtk, while the second row is generated under a benign prompt.
    }
    \label{fig-nudity}
  \end{minipage}
\end{wrapfigure}
\textbf{Erasure Robustness against APAs of \textit{Nudity} Concept.}\hspace{.5em}We evaluate the robustness of erasing \textit{nudity} concept against P4D and UnlearnDiffAtk. 
This task remains challenging, as existing methods struggle to effectively erase \textit{nudity} from DMs. 
Following prior work~\citep{DBLP:conf/nips/ZhangCJZFL0DL24}, we exclude SH and ED from comparison due to their poor retention performance.
Since concept erasure on the DM typically fine-tunes the U-Net-based noise predictor, all baselines operate on the U-Net for fair comparison. 
As shown in Tab.~\ref{tab: nudity_main}, our \alg significantly outperforms the prior SOTA SalUn, reducing ASR by 7.8\% and 2.76\% against P4D and UnlearnDiffAtk, respectively. 
In terms of generation quality, \alg achieves a 16.09 improvement in FID.
These results highlight the effectiveness of the proposed GRP design in balancing erasure robustness and retention performance. 
Additionally, Fig.~\ref{fig-nudity} illustrates visual comparisons for both concept erasure and retention. 

\begin{wraptable}{r}{0.65\linewidth}
  \begin{minipage}{\linewidth}
    \centering
    \setlength\tabcolsep{4.0pt}
    \captionof{table}{Performance summary of erasing the \textit{Van Gogh} style.}
    \resizebox{\linewidth}{!}{%
    \begin{tabular}{@{}ccccccccccccc}
    \toprule[1pt]
    \midrule
        \multirow{2}{*}{\begin{tabular}[c]{@{}c@{}}
          \textbf{Metric} \\ 
        \end{tabular}}  
        &
        \multirow{2}{*}{\begin{tabular}[c]{@{}c@{}}
          SD v1.4 \\ 
          (Base)
        \end{tabular}} 
        &
        \multirow{2}{*}{\begin{tabular}[c]{@{}c@{}}
          UCE \\ 
          
        \end{tabular}} 

         &
        \multirow{2}{*}{\begin{tabular}[c]{@{}c@{}}
          SPM \\ 
          
        \end{tabular}} 
         &
        \multirow{2}{*}{\begin{tabular}[c]{@{}c@{}}
          AC \\ 
        \end{tabular}} 

        &
        \multirow{2}{*}{\begin{tabular}[c]{@{}c@{}}
          FMN \\ 
        \end{tabular}} 
        
        &
        \multirow{2}{*}{\begin{tabular}[c]{@{}c@{}}
          ESD \\ 
        \end{tabular}} 
        
        &
        \multirow{2}{*}{\begin{tabular}[c]{@{}c@{}}
          AGE \\ 
        \end{tabular}} 
        
        &
        \multirow{2}{*}{\begin{tabular}[c]{@{}c@{}}
          RECE \\ 
        \end{tabular}} 
        
        &
        \multirow{2}{*}{\begin{tabular}[c]{@{}c@{}}
          RECELER \\ 
        \end{tabular}} 
        
        &
        \multirow{2}{*}{\begin{tabular}[c]{@{}c@{}}
          AdvUnlearn \\ 
        \end{tabular}} 
        &
        \multirow{2}{*}{\begin{tabular}[c]{@{}c@{}}
          STEREO \\ 
        \end{tabular}} 
        
        &
        \multirow{2}{*}{\begin{tabular}[c]{@{}c@{}}
          \cellcolor{gray!20}\alg \\
          \cellcolor{gray!20}(Ours)
        \end{tabular}} \\
        & & & & & & &  \\
        \toprule
        ASR1  ($\downarrow$)   & 100\%  & 100\%  &  96\%  & 90\%  &  84\% & 62\%& 90\%  &  84\% & 62\% &     {58\%}& 48\%&   \cellcolor{gray!20}{36\%}  \\
        ASR2  ($\downarrow$)   & 100\%  & 96\%  &  88\%  & 72\%  &  52\% & 36\% & 72\%  &  52\% & 36\% &     {38\%}& 30\% &   \cellcolor{gray!20}{12\%}  \\
         ASR3  ($\downarrow$)   & 100\%  & 92\%  &  90\%  & 82\%  &  64\% & 42\%& 76\%  &  40\% & 38\% &     {30\%}& 24\% &    \cellcolor{gray!20}{10\%}  \\
        FID   ($\downarrow$)  & 16.70 & 16.31 &  16.65  & 17.50 & 16.59 & 18.71 &     17.32 & 17.59 & 18.83 &     19.42&  20.42 &    \cellcolor{gray!20}17.25 \\
        CLIP  ($\uparrow$)  & 0.311 & 0.311 & 0.311 & 0.307 & 0.301 & 0.29& 0.310 & 0.309 & 0.304 &  0.288& 0.281 &  \cellcolor{gray!20}0.310 \\
     \midrule
    \bottomrule[1pt]
    \end{tabular}}\label{tab: style_main}
    \vspace{5pt}
\includegraphics[width=\linewidth]{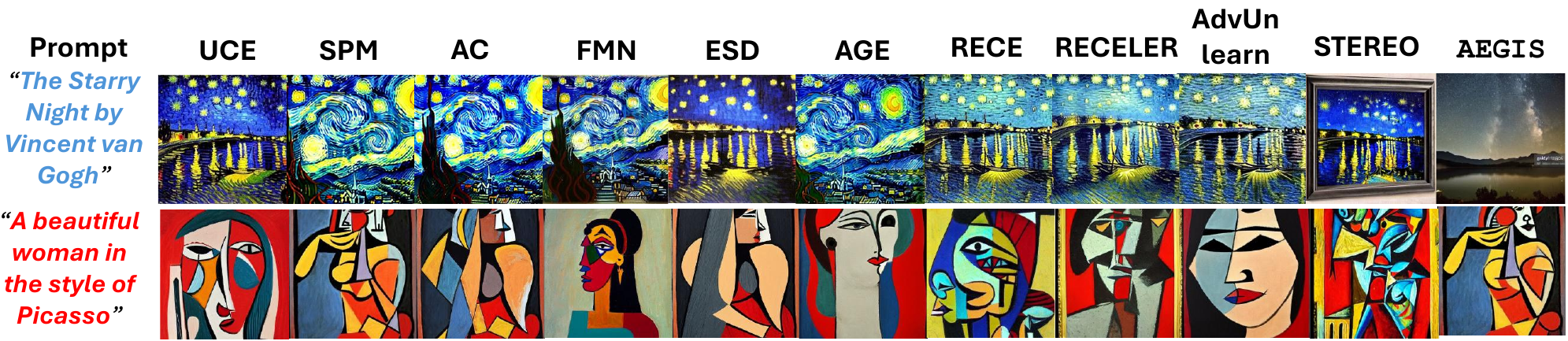}%
\vspace{-5pt}
\captionof{figure}{Visualization of generated images by different DMs after unlearning \textit{Van Gogh} style, following Fig.~\ref{fig-nudity}'s format.}
\label{fig-style}
\end{minipage}
\end{wraptable}
\textbf{Effectiveness of Methods on Removing Style Concepts.}\hspace{.5em}Following the \textit{nudity} experiment, we evaluate the erasure of style concepts using \textit{Van Gogh} as a representative case.
Since some baselines do not report results for style erasure, we compare \alg with prior methods that include \textit{Van Gogh} in their evaluation. 
As shown in Tab.~\ref{tab: style_main}, \alg reduces ASR by 12\% and 18\% against P4D and UnlearnDiffAtk over AdvUnlearn, demonstrating a substantial improvement in robustness. 
Moreover, as indicated by the CLIP score and FID, \alg has less degradation in generation quality, suggesting that GRP effectively balances erasure robustness and retention performance. 
Visual comparisons are in Fig.~\ref{fig-style}.

\begin{wraptable}[13]{r}{0.65\linewidth}
  \begin{minipage}{\linewidth}
    \centering
    \captionof{table}{Performance summary of erasing the \textit{Church} object.}
    \setlength\tabcolsep{4.0pt}
    \centering
    \resizebox{\linewidth}{!}{%
    \begin{tabular}{@{}cccccccccccccc}
    \toprule[1pt]
    \midrule
        \multirow{2}{*}{\begin{tabular}[c]{@{}c@{}}
          \textbf{Metric} \\ 
        \end{tabular}}  
        &
        \multirow{2}{*}{\begin{tabular}[c]{@{}c@{}}
          SD v1.4 \\ 
          (Base)
        \end{tabular}} 
        &
        \multirow{2}{*}{\begin{tabular}[c]{@{}c@{}}
          FMN \\ 
        \end{tabular}} 
         &
        \multirow{2}{*}{\begin{tabular}[c]{@{}c@{}}
          SPM \\ 
          
        \end{tabular}} 
        &
        \multirow{2}{*}{\begin{tabular}[c]{@{}c@{}}
          SalUn \\ 
        \end{tabular}} 
        &
        \multirow{2}{*}{\begin{tabular}[c]{@{}c@{}}
          ESD \\ 
        \end{tabular}} 
        &
        \multirow{2}{*}{\begin{tabular}[c]{@{}c@{}}
          ED \\ 
          
        \end{tabular}} 

        &
        \multirow{2}{*}{\begin{tabular}[c]{@{}c@{}}
          SH \\ 
          
        \end{tabular}} 

        &
         \multirow{2}{*}{\begin{tabular}[c]{@{}c@{}}
          AGE \\ 
        \end{tabular}} 
        
        &
        \multirow{2}{*}{\begin{tabular}[c]{@{}c@{}}
          RECE \\ 
        \end{tabular}} 
        
        &
        \multirow{2}{*}{\begin{tabular}[c]{@{}c@{}}
          RECELER \\ 
        \end{tabular}} 
        
        &
        \multirow{2}{*}{\begin{tabular}[c]{@{}c@{}}
          AdvUnlearn \\ 
          
        \end{tabular}}
         &
        \multirow{2}{*}{\begin{tabular}[c]{@{}c@{}}
          STEREO \\ 
          
        \end{tabular}}

        &
        \multirow{2}{*}{\begin{tabular}[c]{@{}c@{}}
           \cellcolor{gray!20}\alg \\
           \cellcolor{gray!20}(Ours)
        \end{tabular}} \\
        & & & & & & & &  \\
        \toprule
        
        ASR1  ($\downarrow$)   &  100\%  & 100\% &  96\% &  75\% & 70\% &  60\% &  32\%& 66\% &  62\% &  56\% & { 62\%}& 42\% &  \ \cellcolor{gray!20}{28\%}\\
        ASR2  ($\downarrow$)   &  100\%  & 96\% &  94\% &  62\% & 60\% &  52\% &  6\%& 62\% &  54\% &  50\% &   {58\%}&28\% &  \ \cellcolor{gray!20}{6\%}\\
        ASR3  ($\downarrow$)   &  100\%  & 90\% &  92\% &  70\% & 58\% &  52\% &  0\%& 38\% &  42\% & 36\% &   {30\%} & 18\% &\ \cellcolor{gray!20}{8\%}\\
        FID   ($\downarrow$)  & 16.70 & 16.49 & 16.76 & 17.38 &  20.95 &  17.46 &  68.02&  17.49 &  18.83 &  19.08 &  20.32& 20.57 &\cellcolor{gray!20}19.06 \\
        CLIP  ($\uparrow$)  & 0.311 & 0.308 & 0.310 &  0.312 & 0.300 &  0.310 & 0.277& 0.304 &  0.301 & 0.297 &    0.302&  0.284 &\cellcolor{gray!20}0.305 \\
    
     \midrule
    \bottomrule[1pt]
    \end{tabular}}\label{tab: church_main}
    \vspace{5pt}
    \includegraphics[width=\linewidth]{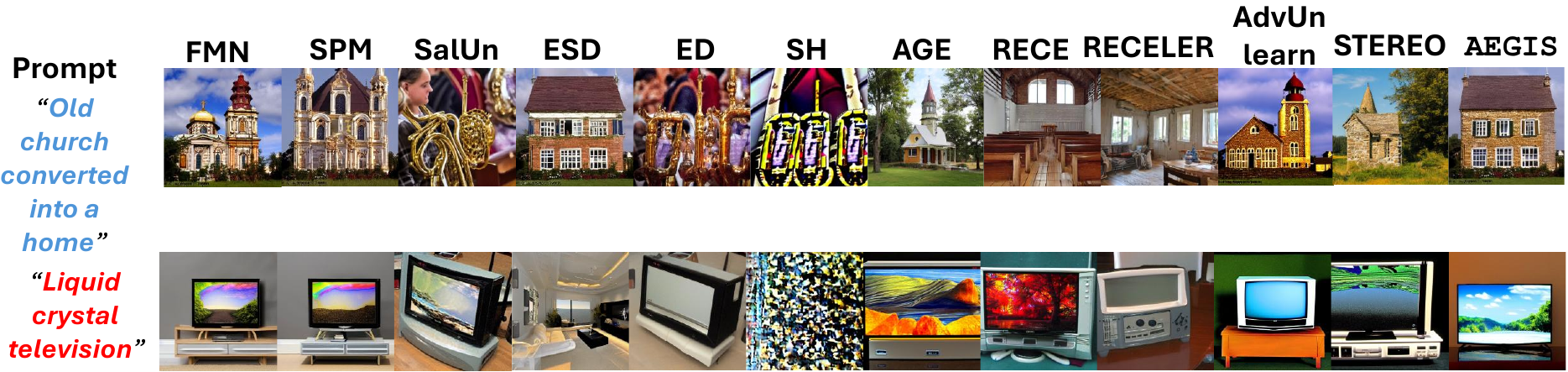}%
    \vspace{-5pt}
\captionof{figure}{Visualization of generated images by different DMs after unlearning the \textit{Church} object, following Fig.~\ref{fig-nudity}'s format.}
\label{fig-object}
    \end{minipage}
\end{wraptable}
\textbf{Effectiveness of Methods on Removing Objects.}\hspace{.5em}We further assess concept erasure on object-level concepts, using \textit{Church} as a representative example.
This setting complements previous evaluations on \textit{nudity} and \textit{Van Gogh}, allowing us to verify the generalizability of \alg across different concept types.
As shown in Tab.~\ref{tab: church_main}, \alg outperforms prior methods, reducing ASR against P4D by 4\% compared to SH. 
While ASR against UnlearnDiffAtk matches that of SH, \alg yields significantly better retention, with FID improved by 48.98, showing the effectiveness of GRP in balancing erasure and generation quality.
Visual comparisons are in Fig.~\ref{fig-object}, and results on more objects are in Appx.~\ref{ee:object}.

\begin{table}
    \caption{\footnotesize{Performance evaluation of concept erasure methods applied to the base SD v2.1 model for erasing \textit{Nudity}, \textit{Van Gogh}, and \textit{Church} concepts.}}
    \setlength\tabcolsep{4.0pt}
    \centering
    \footnotesize
    \resizebox{1\textwidth}{!}{%
    \begin{tabular}{@{}cccccccccccccc}
    \toprule[1pt]
    \midrule
       \multirow{2}{*}{\begin{tabular}[c]{@{}c@{}}
          \textbf{Concept} \\ 
        \end{tabular}}    & \multirow{2}{*}{\begin{tabular}[c]{@{}c@{}}
          \textbf{Metric} \\ 
        \end{tabular}}  
        &
        \multirow{2}{*}{\begin{tabular}[c]{@{}c@{}}
          SD v2.1 \\ 
          (Base)
        \end{tabular}} 
        &
        \multirow{2}{*}{\begin{tabular}[c]{@{}c@{}}
          FMN \\ 
        \end{tabular}} 
             &
        \multirow{2}{*}{\begin{tabular}[c]{@{}c@{}}
          SPM \\ 
          
        \end{tabular}} 
         &
        \multirow{2}{*}{\begin{tabular}[c]{@{}c@{}}
          ESD \\ 
        \end{tabular}} 
        &
        \multirow{2}{*}{\begin{tabular}[c]{@{}c@{}}
          AGE \\ 
          
        \end{tabular}} 
        &
        \multirow{2}{*}{\begin{tabular}[c]{@{}c@{}}
          RECE \\ 
        \end{tabular}} 
        &
        \multirow{2}{*}{\begin{tabular}[c]{@{}c@{}}
          RECELER \\ 
          
        \end{tabular}}
    
        &
        \multirow{2}{*}{\begin{tabular}[c]{@{}c@{}}
          AdvUnlearn \\ 
          
        \end{tabular}} 

        &
        \multirow{2}{*}{\begin{tabular}[c]{@{}c@{}}
          \cellcolor{gray!20}{\alg} \\
          \cellcolor{gray!20}(Ours)
        \end{tabular}} \\
        & & & & & & & &  \\
        \toprule
        
       \multirow{4}{*}{
       \begin{tabular}[c]{@{}c@{}}
          \textit{Nudity}
        \end{tabular}}    &  ASR1  ($\downarrow$)   & 96.48\%  & 93.62\% & 87.32\%& 82.39\% &  80.99\% &   77.46\% & 73.94\% &  74.65\%  &  \cellcolor{gray!20}\textbf{35.92\%} \\
        &  ASR2  ($\downarrow$)   &  97.18\%  & 95.77\% & 86.62\%  & 73.94\% &   85.92\% & 69.72\% &  61.23\% &  58.45\%  &  \cellcolor{gray!20}\textbf{26.76\%} \\
        &  ASR3  ($\downarrow$)   &  80.28\%  & 69.72\% & 47.89\% &  40.85\% &    70.42\% & 36.92\% &  32.39\%& 35.21\%  &  \cellcolor{gray!20}\textbf{12.68\%}\\
        & FID   ($\downarrow$)  & 16.15 & 16.31 & 17.14 &  18.03&  16.26 & 19.02 &  19.72 &  19.73  &  \cellcolor{gray!20}17.08 \\
        & CLIP  ($\uparrow$)  & 0.315 & 0.309 & 0.310 &  0.304 & 0.311 &  0.295 & 0.293 & 0.302 & \cellcolor{gray!20}0.309 \\
        \midrule
        \multirow{4}{*}{
       \begin{tabular}[c]{@{}c@{}}
          \textit{Van Gogh} \\
        \end{tabular}}  &  ASR1  ($\downarrow$)   &  100\% & 86\% & 98\% &  70\% & 94\% &   88\% &  70\% &   68\% &   \cellcolor{gray!20}\textbf{40\%} \\
        &  ASR2  ($\downarrow$)   &  100\% & 58\% & 96\% &  52\% & 84\% &   60\% &  44\% &   48\% &   \cellcolor{gray!20}\textbf{28\%} \\
        &  ASR3  ($\downarrow$)   &  100\% & 72\% & 94\% &  58\% & 84\% &   66\% &  40\% &   42\% &   \cellcolor{gray!20}\textbf{22\%} \\
        & FID   ($\downarrow$)  &  16.15 & 16.34 & 16.53 & 16.95  & 16.44 &   16.81 &    16.97&    17.14 &  \cellcolor{gray!20}16.68 \\
        & CLIP  ($\uparrow$)  & 0.315 & 0.308 & 0.313 & 0.312 &  0.310  &  0.304 &  0.301&  0.306 &  \cellcolor{gray!20}0.309 \\
        \midrule
        \multirow{4}{*}{
       \begin{tabular}[c]{@{}c@{}}
          \textit{Church} \\
        \end{tabular}} 
        & ASR1  ($\downarrow$)   &  100\% & 100\% & 100\% & 84\% & 82\%  & 72\% & 64\% &    72\% &  \cellcolor{gray!20}\textbf{34\%} \\
         & ASR2  ($\downarrow$)   &  100\% & 100\% & 96\% & 76\%   & 70\% & 64\% &    48\%  &    44\% &  \cellcolor{gray!20}\textbf{16\%} \\
         & ASR3  ($\downarrow$)   &  100\% & 100\% & 94\% & 70\%  & 50\% & 48\% &  42\% & 40\% &  \cellcolor{gray!20}\textbf{24\%} \\
        & FID   ($\downarrow$)  & 16.15 & 16.32 & 16.41 &  16.79 & 16.63 & 17.12 &   17.22& 17.29  & \cellcolor{gray!20}16.69 \\
        & CLIP  ($\uparrow$)  &  0.315 & 0.311 & 0.309 & 0.303 & 0.310  & 0.304 & 0.302 &  0.305 &  \cellcolor{gray!20}0.309 \\
    
     \midrule
    \bottomrule[1pt]
    \end{tabular}%
    }
    \label{tab:v2}
\end{table}
\vspace{3pt}

\textbf{Effectiveness of Methods on SD v2.1}\hspace{.5em}Given that different versions of Stable Diffusion possess varying architectures and capabilities, we hypothesized that the performance of concept erasure methods, particularly against adversarial prompt attacks (APAs), would differ between SD v1.4 and SD v2.1. To investigate this, we evaluated our proposed method, \alg, on SD v2.1, assessing both concept removal and knowledge retention. As shown in Tab.~\ref{tab:v2}, the performance of standard erasure methods like FMN and ESD in removing the \emph{Nudity} concept slightly improved on SD v2.1 compared to their results on SD v1.4 (Tab.~\ref{tab: nudity_main}). This improvement is likely attributable to the NSFW filter used during the curation of SD v2.1's training data, which inherently weakened the concept from the outset. Conversely, more robust methods like our own \alg exhibited a decline in erasure effectiveness on SD v2.1 compared to their performance on SD v1.4. This degradation is likely rooted in the foundational differences between the models: SD v2.1 utilizes a different text encoder (OpenCLIP) and was trained on a distinct dataset. These changes result in more complex and entangled internal concept representations, making them significantly harder to isolate and surgically remove. This hypothesis is further corroborated by the universally decreased erasure robustness observed across all methods when targeting the \emph{Van Gogh} and \emph{Church} concepts on the newer model.

\subsection{Ablation Studies}\label{sec:ablation}
This subsection investigates how each component of \alg contributes to both erasure robustness and retention performance.
As shown in Tab.~\ref{tab:abla}, we select ESD and AdvUnlearn as baselines. 

\begin{wraptable}[12]{r}{0.45\textwidth}
  \caption{Performance of removing \textit{nudity} using different methods and \alg variants. ASR is evaluated by UnlearnDiffAtk~\citep{DBLP:conf/eccv/ZhangJCCZLDL24}.}
  \centering
  \resizebox{\linewidth}{!}{%
  \begin{tabular}{@{}lrccccc}
  \toprule[1pt]
  \midrule
  \sc \textbf{Method} & ASR ($\downarrow$) & FID ($\downarrow$) & CLIP ($\uparrow$) \\
  \midrule 
  SD v1.4 & 100\% &16.70 &0.311 \\
  ESD & 73.24\%& 18.18 &0.302  \\
  AdvUnlearn &64.79\% &19.34 &0.290 \\
  \hdashline[2pt/1pt]
\rule{0pt}{10pt}\alg w/o AET & 52.11\% & 17.54 &0.305 \\
  \alg w/o PR & 9.93\% & 18.15 &0.295 \\
  \alg w/o DGR & 26.24\% & 19.84&0.284\\
  \alg ($\omega=1$) & 14.08\% & 17.31&0.308\\
  \alg & 8.45\% & 17.43 &0.305 \\
  \midrule
  \bottomrule[1pt]
  \end{tabular}%
  }
  \label{tab:abla}
\end{wraptable}

\textbf{Impact of AET.}
Removing AET from \alg (i.e., \alg w/o AET) results in a variant that resembles AdvUnlearn equipped with GRP.
As shown Tab.~\ref{tab:abla}, compared to AdvUnlearn, ASR of \alg w/o AET improves by 12.68\%, suggesting that GRP alleviates the adverse effect of gradient conflict on erasure robustness. 
Additionally, both FID and CLIP scores improve by 1.8 and 0.015 respectively, indicating GRP helps balance erasure robustness and retention. Compared with the full \alg, \alg w/o AET obviously increases ASR by 65.96\%, confirming the role of AET in enhancing robustness.

\textbf{Influence of PR.}
For \alg w/o PR, we select the COCO Object dataset for retention following~\citet{DBLP:conf/nips/ZhangCJZFL0DL24}. 
Compared to \alg, the FID and CLIP score of \alg w/o PR drop by 0.72 and 0.01, respectively. The retention gap likely stems from the limited scale of the COCO Object dataset. Since the original SD is trained on a vast corpus, a small-scale retention dataset fails to maintain performance across all preserved concepts.
Different from selecting specific retention datasets, PR constraints $\Btheta$ by minimizing deviations from $\Btheta_0$, thereby mitigating interference on parameters unrelated to the erased concepts. Moreover, since there is extra data introduced to the retention process, PR significantly improve the fine-tuning efficiency of \alg.

\textbf{Gradient Conflict Eased by DGR.}
As discussed in \S\ref{sec:GP}, a conflict arises between the concept erasure gradient $\Bg_e$ and the retention gradient $\Bg_r$. 
Due to this conflict, $\Bg_r$ contains a component oriented in the opposite direction of $\Bg_e$. As a result, $\Bg_r$ may hinder the DM from converging the point where $\Bg_e$ is minimized, thereby impairing erasure performance.
Fortunately, DGR in \alg adjusts $\Bg_r$ to mitigate the adverse effect of the retention operation on concept erasure. As shown in Tab.~\ref{tab:abla}, compared to \alg w/o DGR, the FID and CLIP score improve by 2.41 and 0.021, respectively. It is noticed that \textbf{higher ASR means lower robustness}. That means when DGR is removed from AEGIS, the robustness of the fine-tuned model becomes lower.

This demonstrates that DGR enables \alg to better balance erasure robustness and retention performance. As descried in \S\ref{sec:GP}, applying PR at the initial stage can make it difficult for \alg to minimize the erasing loss $\CL_e$. 
As shown in Tab.~\ref{tab:abla}, fixing $\omega=1$ leads to a 5.46\% increase in ASR.
This suggests that the concept erasure robustness of \alg is notably compromised, albeit with a marginal improvement in overall utility. 
To this end, progressively increasing $\omega$ throughout fine-tuning helps strike a more favorable balance between erasure robustness and retention performance.

\textbf{Dynamic Updating $\omega$.}
The hyperparameter $\omega$ of GRP is updated dynamically from 0 to 1, following Eq.~\eqref{mu-update} with step size $\mu$. It starts with $\omega=0$ to prioritize erasure and gradually increases $\omega$ as the erasure and retention gradients start to conflict, thus strengthening the retention constraint. In Tab.~\ref{tab:abla}, to show the impact of dynamic mechanism of $\omega$ on the performance of AEGIS, we provide the result of AEGIS with the fixed $\omega=1$. Compared with fixed $\omega$, the dynamic one of \alg achieves better robustness and retention performance, indicating the effectiveness of the dynamic updating strategy.

Moreover, we examine how $\mu$ affects retention by updating $\omega$ through several groups of experiments, and also compare time efficiency with existing methods to further show the efficiency of \alg, with detailed analyses presented in Appx.~\ref{sec:exmu} and~\ref{ee:extime}, respectively.

\section{Conclusion}
This work introduces a novel method, Adversarial Erasure with Gradient-Informed Synergy (\alg), aimed at enhancing the robustness of concept erasure under adversarial prompt attacks (APAs).
We begin by motivating our approach through a series of exploratory experiments. 
Subsequently, we delve into the design of the Adversarial Erasure Target (AET), and the retention strategy, Gradient Regularization Projection (GRP), examining their underlying rationale and implementation details.
In addition, we provide theoretical justification for the effectiveness of GRP.
Empirical results across diverse undesirable concepts and APAs demonstrate that \alg substantially enhances concept erasure robustness in DMs.
We further analyze how individual components of \alg influence both erasure efficacy and retention quality. 
Looking ahead, we aim to develop a more tailored concept erasure approach that achieves stronger robustness while mitigating adverse effects on retention.

\section*{Acknowledgment}
This work was supported in part by Macau Science and Technology Development Fund under 001/2024/SKL, 0119/2024/RIB2, and 0022/2022/A1; in part by Research Committee at University of Macau under MYRG-CRG2025-00031-FST and MYRG-GRG2025-00086-FST; in part by the Guangdong Basic and Applied Basic Research Foundation under Grant 2024A1515012536; in part by RGC General Research Fund No.~12202026.

\section*{Usage of Large Language Models}

In this paper, we adapt some large language models, such as ChatGPT 5 and Gemini 2.5, solely to assist with language refinement and polishing of the manuscript. They are not used for generating research ideas, designing methods, or conducting literature retrieval and discovery.

\section*{Ethics Statement}
In adherence to the ICLR Code of Ethics, our research is designed to excise illegal, harmful, and undesirable concepts directly from DMs. This process actively mitigates the risks associated with privacy breaches and exposure to malicious content, where we did not prompt or generate explicit harmful content during visualization; all figures are safe-filtered. Furthermore, all experiments presented in this paper were conducted using publicly available datasets following their licenses, thereby avoiding direct human participation and minimizing privacy concerns. We have transparently disclosed the methodological limitations and potential risks of our work to foster trust and encourage the continuous improvement of AI systems.

\section*{Reproducibility Statement}
We ensure reproducibility by clearly specifying experimental setups, method details, benchmark datasets, model architectures, and hyperparameters of \alg. Complete proofs and additional analyses are provided in the appendix. Source code of \alg is available on \url{https://github.com/Feng-peng-Li/AEGIS}.

\bibliography{iclr2026_conference}
\bibliographystyle{iclr2026_conference}

\newpage
\appendix

\begin{center}
    \LARGE \bf {Appendix of AEGIS} \
\end{center}
\etocdepthtag.toc{mtappendix}
\etocsettagdepth{mtchapter}{none}
\etocsettagdepth{mtappendix}{subsection}
{
  \hypersetup{linkcolor=kleinblue}
  \tableofcontents
}

\newpage
\section{Notations}
\subsection{Main Notations and Descriptions}

\vspace{0.5em}
{
\renewcommand{\arraystretch}{1.1}
\begin{table}[h]
\caption{List of notations and their descriptions used in the paper.}
\centering
\footnotesize
\resizebox{\linewidth}{!}{
\begin{tabular}{ll@{}}
\toprule
Notation \qquad \qquad \qquad \qquad & Description \\
\midrule
\(\Btheta\) & Parameters of the fine-tuned diffusion model after concept erasure \\
\(\Btheta_0\) & Parameters of the original (pre-erasure) diffusion model \\
\(\Bce\) & (To-be-)erased concept (e.g., \textit{nudity}, \textit{Van Gogh}) \\
\(\mathbf{c}^*\) & Adversarial prompt \\
\(\tilde{\mathbf{c}}\) & Target concept:
\(\boldsymbol{\epsilon}_{\Btheta_0}(\mathbf{z}_t|\tilde{\Bc})=\boldsymbol{\epsilon}_{\Btheta_0}(\mathbf{z}_t) - \eta (\boldsymbol{\epsilon}_{\Btheta_0}(\mathbf{z}_t|\Bce) - \boldsymbol{\epsilon}_{\Btheta_0}(\mathbf{z}_t))\) \\
\(\mathbf{c}^{\prime}\) & Adversarial Erasure Target (AET), learned prompt for substituting \(\Bce\) in \(\tilde{\Bc}\) \\
\(t\) & Timestep sampled uniformly from the diffusion process \\
\(\mathbf{z}_t\) & Noisy latent at time step \(t\) \\
\(\boldsymbol{\epsilon}_\Btheta(\mathbf{z}_t|\mathbf{c})\) & Predicted noise conditioned on concept \(\mathbf{c}\) by model \(\Btheta\) \\
\(d_0\) & Noise distance between related concepts in original model \\
\(d_1\) & Noise distance between erased concept and related ones after fine-tuning \\
\(d_2\) & Noise distance between \(\mathbf{c}^*\) and \(\mathbf{c}'\), measuring adversarial vulnerability \\
\(\mathcal{L}_e\) & Erasing loss, penalizing alignment between erased concept and target \\
\(\mathcal{L}_r\) & Retention loss, regularizing distance between \(\Btheta\) and \(\Btheta_0\) \\
\(\CL=\CL_e+\lambda \CL_r\) & Total loss\\
\(\Bg_e = \nabla_\Btheta \mathcal{L}_e\) & Gradient of the erasing loss \\
\(\Bg_r = \nabla_\Btheta \mathcal{L}_r\) & Gradient of the retention loss \\
\(\tilde{\Bg}=\lambda\Bg_r\) & Adjusted \(\Bg_r\) by directional gradient rectification (DGR) \\
\(\hat{\Bg}\) & Final gradient update using gradient regularization projection (GRP) \\
\(T\) & Total step of the diffusion process\\
\(E\) & Total training epoch\\
\(m\) & Total length of token embedding\\
\(m'\) & Learnable sub-length of AET prompt token embedding \\
\(\lambda\) & Scaling coefficient controlling the retention regularization \\
\(\eta\) & Negative guidance scaling coefficient for constructing  \(\tilde{\mathbf{c}}\) \\
\(\beta\) & Step size used in AET update \\
\(K\) & Number of AET optimization steps \\
\(\omega\) & Weight for DGR, dynamically updated \\
\(\mu\) & Step size for updating \(\omega\) \\
\(\alpha\) & Step size for updating model parameters \(\Btheta\)\\
\(\mathcal{H}_{\alpha}(\CL_r;\Bg_e)\) & \(\alpha\)-curvature of \(\CL_r\) w.r.t. \(\Bg_e\)\\
\(L\) & Lipschitz constant for loss functions\\
\(\ell\) & Coefficient for the lower bound of \(\mathcal{H}_{\alpha}(\CL_r;\Bg_e)\)\\
\bottomrule
\end{tabular}
}
\end{table}
}
\vspace{15pt}

\subsection{Comparison among ESD, AdvUnlearn, and \alg}

\vspace{0.5em}
{
\renewcommand{\arraystretch}{1.2}
\begin{table}[h]
\centering
\caption{Comparison of erasure and retention design among ESD, AdvUnlearn, and \alg.}
\resizebox{\linewidth}{!}{
\begin{tabular}{@{}lccc@{}}
\toprule
\textbf{Method} 
& \textbf{Erased Concept \(\mathbf{c}_e\)} 
& \textbf{Target Concept \(\tilde{\mathbf{c}}\)} 
& \textbf{Retention Loss \(\mathcal{L}_r\)} \\
\midrule
\addlinespace[6pt]
ESD
& \makecell{Fixed prompt \\using original $\Bce$} 
& \makecell{Fixed target: $\boldsymbol{\epsilon}_{\Btheta_0}(\mathbf{z}_t|\tilde{\Bc})=$ \\
\(\boldsymbol{\epsilon}_{\theta_0}(\mathbf{z}_t) - \eta \left( \boldsymbol{\epsilon}_{\theta_0}(\mathbf{z}_t | \mathbf{c}_e) - \boldsymbol{\epsilon}_{\theta_0}(\mathbf{z}_t) \right)\)} 
& \makecell{/ 
} \\[8pt]

\cmidrule(l){2-4}
\addlinespace[3pt]
AdvUnlearn 
& \makecell{Dynamic adversarial\\ prompt \(\mathbf{c}^*\)} 
& \makecell{Same as ESD} 
& \makecell{\(\mathbb{E}_{\mathcal{D}_r} \left\| \boldsymbol{\epsilon}_\Btheta(\mathbf{z}_t|\Bc_r) - \boldsymbol{\epsilon}_{\Btheta_0}(\mathbf{z}_t|\Bc_r) \right\|_2^2\)} \\[8pt]

\cmidrule(l){2-4}
\addlinespace[3pt]
\alg 
& \makecell{Same as AdvUnlearn} 
& \makecell{Dynamic target by replacing\\ $\Bce$ in ESD with AET \(\mathbf{c}'\)} 
& \makecell{
\(\frac{1}{2} \left\| \Btheta - \Btheta_0 \right\|_2^2\) (w/ DGR)
} \\[8pt]

\bottomrule
\end{tabular}
}
\end{table}
}

\clearpage
\section{Detailed Related Work}
\label{sec:related work}
\subsection{Concept Erasure in DMs}
Previous methods for concept erasure can be categorized into four types: data filtering, post-processing screening, in-generation guidance, and model parameter fine-tuning. Each approach offers distinct advantages and limitations, which we will survey in this section.

Straightforward data filtering methods identify and remove undesirable concepts from the training dataset using pre-trained detectors \citep{lu2026pretrain,yu2025towards,xia2024transferable}. However, applying such filtering post-training requires retraining the model from scratch, which is both time- and resource-intensive~\citep{meng2024semantic,yu2024unlearnable,zheng2024towards}. Therefore, data filtering is typically conducted prior to the training phase~\citep{yu2022towards,yu2023backdoor,xia2024mitigating}. For instance, Stable Diffusion v2.0~\citep{DBLP:conf/cvpr/RombachBLEO22} employs a Not-Safe-For-Work (NSFW) detector to exclude inappropriate content from the LAION-5B dataset~\citep{DBLP:conf/nips/SchuhmannBVGWCC22}. To address the limitations of relying on a single detector, DALL·E 3~\citep{DALLE3} categorizes concepts into multiple groups and applies specialized detectors to each category. Despite their widespread adoption, data filtering methods remain heavily dependent on the accuracy of pre-trained detectors~\citep{yu2024purify,xia2025theoretical}. Recent studies have shown that some unsafe concepts can still evade detection~\citep{DBLP:conf/iccv/GandikotaMFB23}, highlighting the urgent need for more robust and comprehensive filtering mechanisms to ensure data safety.

Post-processing screening methods reduce the risk of harmful content in generated images by applying filters after generation. These methods typically rely on detectors to blur or block images containing sensitive or inappropriate content prior to user exposure. Such techniques are widely adopted in practice. For example, OpenAI’s DALL·E employs standalone detectors to screen outputs related to sensitive attributes such as race and gender. However, recent studies have shown that prompt-based adversarial attacks can bypass these post-processing mechanisms~\citep{DBLP:conf/sp/YangHYGC24}, thereby undermining the overall effectiveness of post-processing screening strategies.

In contrast, in-generation guidance methods intervene during image generation to proactively prevent unsafe outputs. These methods often incorporate techniques such as textual blacklisting~\citep{DBLP:conf/acl/ShiZQZ20}, or leverage LLMs to perform prompt engineering and classify prompts based on safety criteria. A notable example is Safe Latent Diffusion (SDL)~\citep{DBLP:conf/cvpr/SchramowskiBDK23}, which employs a reverse guidance mechanism to remove undesirable concept knowledge from the pre-trained model. While these proactive strategies offer finer control over content generation, they may encounter challenges related to generalization across diverse prompts and increased implementation complexity.

Model parameter fine-tuning methods eliminate harmful concepts by selectively updating components of DMs, avoiding the need to retrain the entire model. They typically achieve this by mapping harmful concepts to safe targets (e.g., a null prompt), effectively erasing the associated knowledge from the model. Compared to data filtering or post-processing approaches, fine-tuning can fundamentally remove undesirable concepts from the model itself, eliminating reliance on external detectors and yielding safer, more robust checkpoints that are less vulnerable to adversarial evasion. Recently, fine-tuning-based concept erasure has attracted significant attention, with numerous methods proposed~\citep{DBLP:conf/iccv/GandikotaMFB23,bui2025fantastic}. These approaches are commonly categorized by the fine-tuning location within the model, including attention-based and output-based strategies.

Attention-based concept erasure methods—such as Forget-Me-Not~\citep{10678525}, TIME~\citep{10377171}, Conception Ablation~\citep{DBLP:conf/iccv/KumariZWS0Z23}, UCE~\citep{DBLP:conf/wacv/GandikotaOBMB24}, and MACE~\citep{DBLP:conf/cvpr/Lyu0HCJ00HD24}—primarily operate by modifying the cross-attention layers within the U-Net architecture. In latent diffusion models (LDMs), these layers serve as the injection point for textual conditions derived from CLIP embeddings~\citep{DBLP:conf/cvpr/RombachBLEO22}. To erase harmful concepts, these methods adjust attention weights to suppress their influence. For example, TIME remaps undesirable concepts to irrelevant but benign ones, thereby reducing their attention scores. Forget-Me-Not builds on this by minimizing the L2 norm of attention maps associated with harmful concepts. UCE introduces a preservation loss and an auxiliary dataset to enhance retention of non-target concepts. MACE further balances generality and specificity by designing LoRA modules tailored to each concept and integrating them with TIME’s closed-form solution. While attention-based methods are effective in minimizing performance degradation on non-erased concepts, recent studies~\citep{beerens2025vulnerabilityconcepterasurediffusion} have shown that simple re-training on specific datasets can lead to the recovery of erased knowledge, revealing a critical vulnerability in attention-based erasure methods and highlighting the need for more resilient defenses.

Output-based methods—such as ESD~\citep{DBLP:conf/iccv/GandikotaMFB23}, AP~\citep{bui2024erasing}, and AGE~\citep{bui2025fantastic}—erase undesirable concepts by fine-tuning diffusion models to minimize the predicted noise difference between harmful prompts and their safe counterparts. Unlike attention-based approaches, which operate on intermediate attention maps, output-based methods require access to latent representations across multiple diffusion steps, resulting in significantly higher computational overhead. The first output-based method, ESD, removes harmful concepts by optimizing the model in the direction opposite to the likelihood of generating those concepts. AP builds on this by introducing an adversarial prompt strategy that enhances the retention of related but benign features (e.g., preserving gender-specific traits while removing nudity). AGE further extends this line of work by proposing an adaptive erasure target selection mechanism and incorporating AP’s retention strategy, achieving a better balance between erasure effectiveness and content preservation. Despite their effectiveness, the high computational cost of output-based methods remains a key limitation for large-scale deployment.

Beyond parameter-modifying approaches, SPM~\citep{DBLP:conf/cvpr/Lyu0HCJ00HD24} introduces a lightweight alternative by employing a one-dimensional adapter inspired by large model fine-tuning techniques. This adapter operates as an external module that guides the generation process to avoid harmful content, without altering the underlying model weights. However, due to its detachable nature, the adapter can be easily bypassed or removed by malicious users, rendering SPM less robust and secure than methods that directly modify model parameters.

\subsection{Threats to Concept Erasure Methods}
Prompt tuning, a technique for manipulating prompts to elicit specific behaviors from LLMs, has become a prominent topic in natural language processing (NLP). Recently, this technique has been exploited in jailbreak attacks~\citep{yu2026spikeretiming}, where carefully crafted prompts can bypass safety mechanisms and induce harmful outputs. Inspired by such attacks, researchers have begun exploring whether similar strategies can be used to recover concepts that were intentionally erased from DMs.

One such approach is Concept Inversion (CI)~\citep{DBLP:conf/iclr/0005MCMH24}, which introduces a learnable token embedding to represent the erased concept. By minimizing the noise prediction distance between the erased concept and the new token, CI effectively restores the erased information~\citep{DBLP:journals/corr/abs-2404-19382}. Building on this idea, UnlearnDiffAtk~\citep{DBLP:conf/eccv/ZhangJCCZLDL24} employs discrete token optimization via projection onto the probability simplex, enabling high-probability restoration of erased concepts through prompt manipulation. These findings highlight the vulnerability of current concept erasure methods to prompt-based attacks.

Other notable methods include Prompting4Debugging (P4D)~\citep{DBLP:conf/icml/ChinJHCC24}, which optimizes prompts in the embedding space and projects them onto discrete tokens, similar to the PEZ framework~\citep{DBLP:conf/nips/WenJKGGG23}. Meanwhile, Ring-A-Bell~\citep{DBLP:conf/iclr/TsaiHXLC0CYH24} constructs an empirical representation of the erased concept by averaging embedding differences between prompts with and without the concept, and then uses a genetic algorithm to optimize prompts that can trigger the erased knowledge. These methods collectively demonstrate that even erased concepts can be reactivated through carefully tuned prompts, posing a significant challenge to the robustness of current erasure techniques.

\clearpage
\section{Pseudocodes}
\label{sec:pcode}

\subsection{Pseudocode of AET}

\vspace{1em}
\begin{algorithm}[!htbp]
\captionsetup{font=small}
\small
\caption{\textsc{Adversarial Erasure Target (AET)}}
\label{alg:algorithm}
{\setstretch{1} 
\begin{algorithmic}[1] 
\Require Training Epoch $\tau$, AET from previous epoch $\Bc^{\prime(\tau-1)}_K \in \mathbb{R}^{m}$ (AET after $K$ steps in epoch $\tau-1$), Benign Concept $\Bce$, Original Noise Predictor $\Bve_{\Btheta_0}$, Fine-Tuned Noise Predictor $\Bve_{\Btheta}$, Learnable Segment Start Index $\mathcal{M}$, Learnable Segment Length $m^{\prime}$, Iteration Steps $K$, Step Size $\beta$.
\Ensure Optimized AET for current epoch $\Bc^{\prime(\tau)}_K$.

\Statex \hspace{-22pt} \texttt{// AET Generation (\S\ref{sec:AET})}
\State Initialize $\Bc^{\prime(\tau)}_0$ (e.g., from a template or $\Bc^{\prime(\tau-1)}_K$).
\State $\Bc^{\prime(\tau)}_0[\mathcal{M}:\mathcal{M}+m^{\prime}] \gets \Bc_K^{\prime(\tau-1)}[\mathcal{M}:\mathcal{M}+m^{\prime}]$ \Comment{Initialize/update the learnable segment of AET}
\For{$j=1$ \textbf{to} $K$}
  \State $\CL_{\mathrm{AET}} \gets \left\| \Bve_{\Btheta_0}(\Bz_t | \Bc^{\prime(\tau)}_{j-1}) - \Bve_{\Btheta}(\Bz_t | \Bce) \right\|_2^2 + \left\| \Bve_{\Btheta_0}(\Bz_t | \Bc^{\prime(\tau)}_{j-1}) - \Bve_{\Btheta_0}(\Bz_t | \Bce) \right\|_2^2$ 
  \State $\Bc^{\prime(\tau)}_j \gets \Bc^{\prime(\tau)}_{j-1} - \beta \cdot \mathrm{sign}(\nabla_{\Bc^{\prime(\tau)}_{j-1}} \CL_{\mathrm{AET}})$ 
\EndFor
\end{algorithmic}}
\end{algorithm}
\vspace{1em}

\subsection{Pseudocode of \alg}
\label{pdat}

\vspace{1em}
\begin{algorithm}[!h]
\captionsetup{font=small}
\small
\caption{\textsc{Adversarial Erasure with Gradient-Informed Synergy} (\alg)}
\label{alg:algorithm1}
{\setstretch{1} 
\begin{algorithmic}[1] 
\Require Erased Concept $\Bc_e$; Fine-tuned $\Bve_{\Btheta}$ and Original $\Bve_{\Btheta_0}$; Training Epochs $E$; Initial $\omega=0$; Update step for $\omega$, $\mu$; Learning Rate $\alpha$; AET Step Size $\beta$; AET Iterative Steps $K$; Learnable Length $m^{\prime}$.
\Ensure Fine-tuned parameters $\Btheta$ for $\Bve_{\Btheta}$.

\Statex \hspace{-22pt} \texttt{// \alg Procedure}
\For{$\tau = 1$ \textbf{to} $E$}
  \State $\Bc^{\prime(\tau)} \gets \text{AET}(\tau, \Bc^{\prime(\tau-1)}, \Bce, \Bve_{\Btheta_0}, \Bve_{\Btheta}, m^{\prime}, K, \beta)$ \algorithmiccomment{Generate AET}
  \State $\Bve_{\Btheta_0}(\Bz_t|\tilde{\Bc}) \gets \Bve_{\Btheta_0}(\Bz_t) - \eta \big(\Bve_{\Btheta_0}(\Bz_t|\Bc^{\prime(\tau)}) - \Bve_{\Btheta_0}(\Bz_t)\big)$ \algorithmiccomment{Construct erasure target}
  \State $\Bc^{*(\tau)} \gets \Bc^{*(\tau-1)} - \beta \cdot \nabla_{\Bc^{*(\tau-1)}} \big\|\Bve_{\Btheta}(\Bz_t|\Bc^{*(\tau-1)}) - \Bve_{\Btheta_0}(\Bz_t|\Bce)\big\|_2^2$ \algorithmiccomment{Generate adversarial prompt}
  \State $\CL_e \gets \|\Bve_{\Btheta}(\Bz_t|\Bc^{*}) - \Bve_{\Btheta_0}(\Bz_t|\tilde{\Bc})\|_2^2$ \algorithmiccomment{Compute erasing Loss}
  \State $\CL_r \gets \frac{1}{2} \|\Btheta - \Btheta_0\|_2^2$ \algorithmiccomment{Compute retention loss}
  \State $\Bg_e^{(\tau)} \gets \nabla_{\Btheta} \CL_e, \Bg_r^{(\tau)} \gets \nabla_{\Btheta} \CL_r$ \algorithmiccomment{Compute gradients}
  \If{$\langle\Bg_e^{(\tau)}, \Bg_r^{(\tau)}\rangle \ge 0$} \algorithmiccomment{Gradient non-conflict}
    \State $\hat{\Bg}^{(\tau)} \gets \Bg_e^{(\tau)}$
  \Else \algorithmiccomment{Gradient conflict}
    \State $\tilde{\Bg}_r^{(\tau)} \gets - \omega^{(\tau)} \frac{\langle\Bg_e^{(\tau)}, \Bg_r^{(\tau)}\rangle}{\|\Bg_r^{(\tau)}\|_2^2} \cdot \Bg_r^{(\tau)}$ \algorithmiccomment{Rectify retention gradient (Eq.~(\ref{g-proj}))}
    \State $\hat{\Bg}^{(\tau)} \gets \Bg_e^{(\tau)} + \tilde{\Bg}_r^{(\tau)}$ \algorithmiccomment{Update using DGR (Eq.~(\ref{g-modify}))}
  \EndIf
  \State $\nabla \omega^{(\tau)} \gets \langle \Bg_e^{(\tau)}, \frac{\langle\Bg_e^{(\tau-1)}, \Bg_r^{(\tau-1)}\rangle}{\|\Bg_r^{(\tau-1)}\|_2^2} \Bg_r^{(\tau-1)} \rangle$ \algorithmiccomment{Compute gradient for $\omega$ (Eq.~(\ref{mu-update}))}
  \State $\omega^{(\tau+1)} \gets \min\Big(1, \max\big(0, \omega^{(\tau)} - \mu \cdot \mathrm{sign}(\nabla \omega^{(\tau)})\big)\Big)$ \algorithmiccomment{Update $\omega$ (Eq.~(\ref{mu-update})), clamped in $[0,1]$}
  \State $\Btheta^{(\tau)} \gets \Btheta^{(\tau-1)} - \alpha \cdot \hat{\Bg}^{(\tau)}$ \algorithmiccomment{Update model parameters}
\EndFor
\end{algorithmic}}
\end{algorithm}
\vspace{1.5em}


\section{Further Clarification of \alg}
\label{sec:mem}
\subsection{Vulnerability of Current Erasure Methods}
\label{ex:fig-vcm}

{\captionsetup{skip=\baselineskip,font=small}
\begin{figure}[t]
\centering
\includegraphics[width=\textwidth]{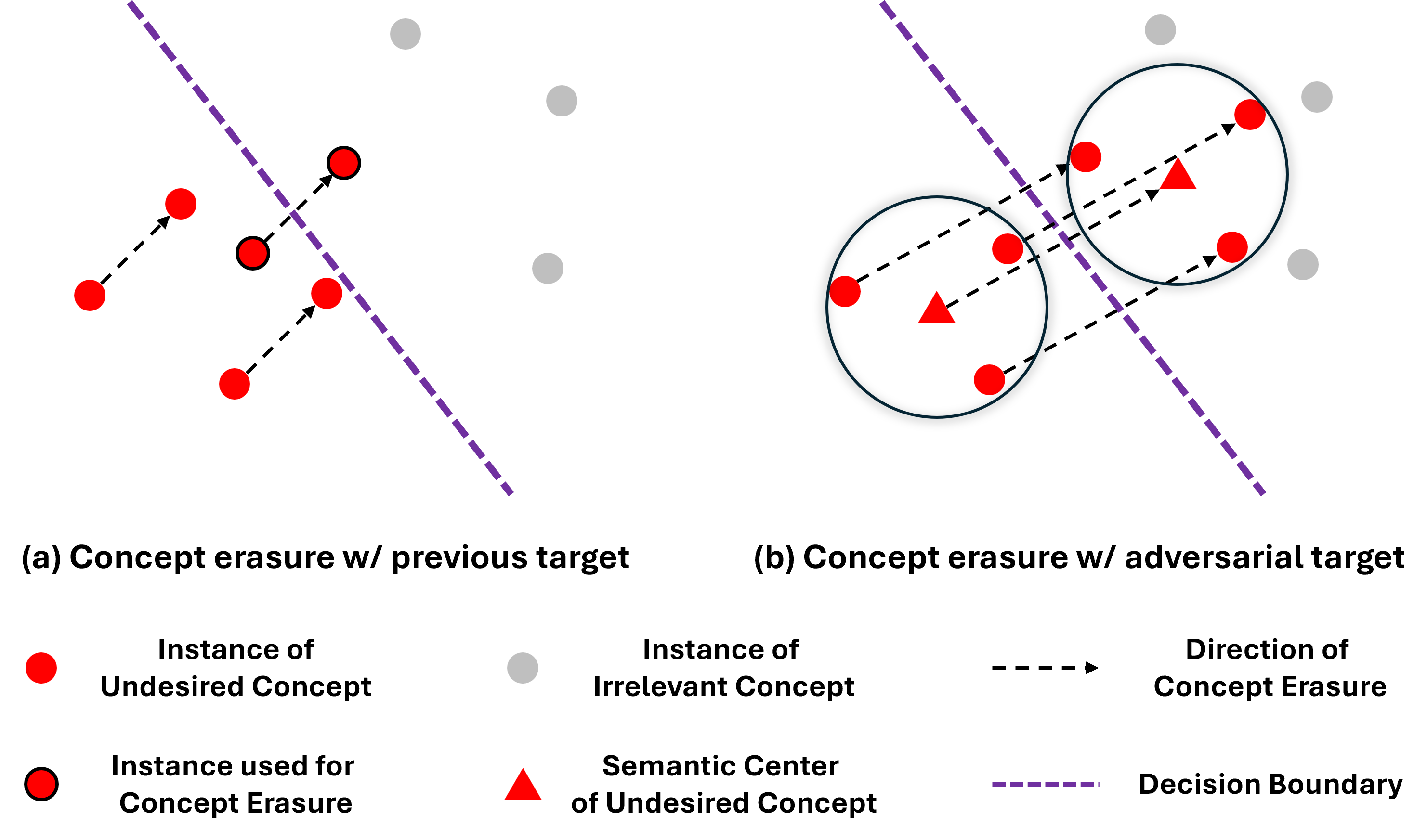}
\caption{Illustration of concept erasure with the previous target as ESD and concept center (adversarial target). The concept erasure with the previous target still preserve some information of the undesirable concept, while using the concept center can significantly reduce the erased concept information from the DM.}
\label{aetpoint}
\end{figure}
}

Following the observations of Fig.~\refs{aetpoint}{a}, for the erased concept $\Bce$ (e.g., “nudity”), ESD maps $\Bce$ to a target $\tilde{\Bc}$ in order to remove the knowledge associated with concept cluster $\CC_0$. This is equivalent to maximizing the distance between the model predictions $\Bve_{\Btheta}(\Bz_t|\Bce)$ and $\Bve_{\Btheta_0}(\Bz_t|\Bc)$. However, if the instance $\Bce$ selected from cluster $\CC_1$ lies far from the concept center, as shown in Fig.~\refs{aetpoint}{a}, the mapping may cross decision boundaries and associate with irrelevant concepts in $\CC_2$. As a result, the knowledge of $\CC_2$ cannot be effectively removed, leading to reduced robustness in concept erasure. 

Unlike the conventional approach that maps the erased concept $\Bce$ to its opposite $\tilde{\Bc}$, we propose an Adversarial Erasure Target (AET). Motivated by the observation that adversarial prompts for benign concept instances tend to be close to the concept center, we generate the AET $\Bc^{\prime}$ for $\Bce$ using Eq.~\eqref{eq-aet-moti}. The target $\Bc^{\prime}$ is intentionally chosen to be distant from the concept center of $\CC_0$ in both the fine-tuned model $\Btheta$ and the original model $\Btheta_0$. As illustrated in Fig.~\refs{aetpoint}{b}, by maximizing the distance between $\Bve_{\Btheta}(\Bz_t|\Bce)$ and $\Bve_{\Btheta_0}(\Bz_t|\Bc^{\prime})$, the representation of $\Bce$ is pushed away from the original concept space. This allows the model to effectively erase the knowledge associated with instances in $\CC_2$. Furthermore, when the adversarial prompt $\Bc^{*}$—which corresponds to the concept center of $\CC_0$ in $\Btheta_0$—is incorporated into the erasure process, the knowledge of $\CC_0$ is significantly suppressed, resulting in a lower attack success rate (ASR) under UnlearnDiffAtk.

\subsection{Gradient Conflict during Concept Erasure}
\label{ex:fig-conflict}

{\captionsetup{skip=\baselineskip,font=small}
\begin{figure}[b]
\centering
    \subfloat[Erasing \textit{nudity} concept]{%
    \label{nudity-cos}%
    \includegraphics[width=0.5\linewidth]{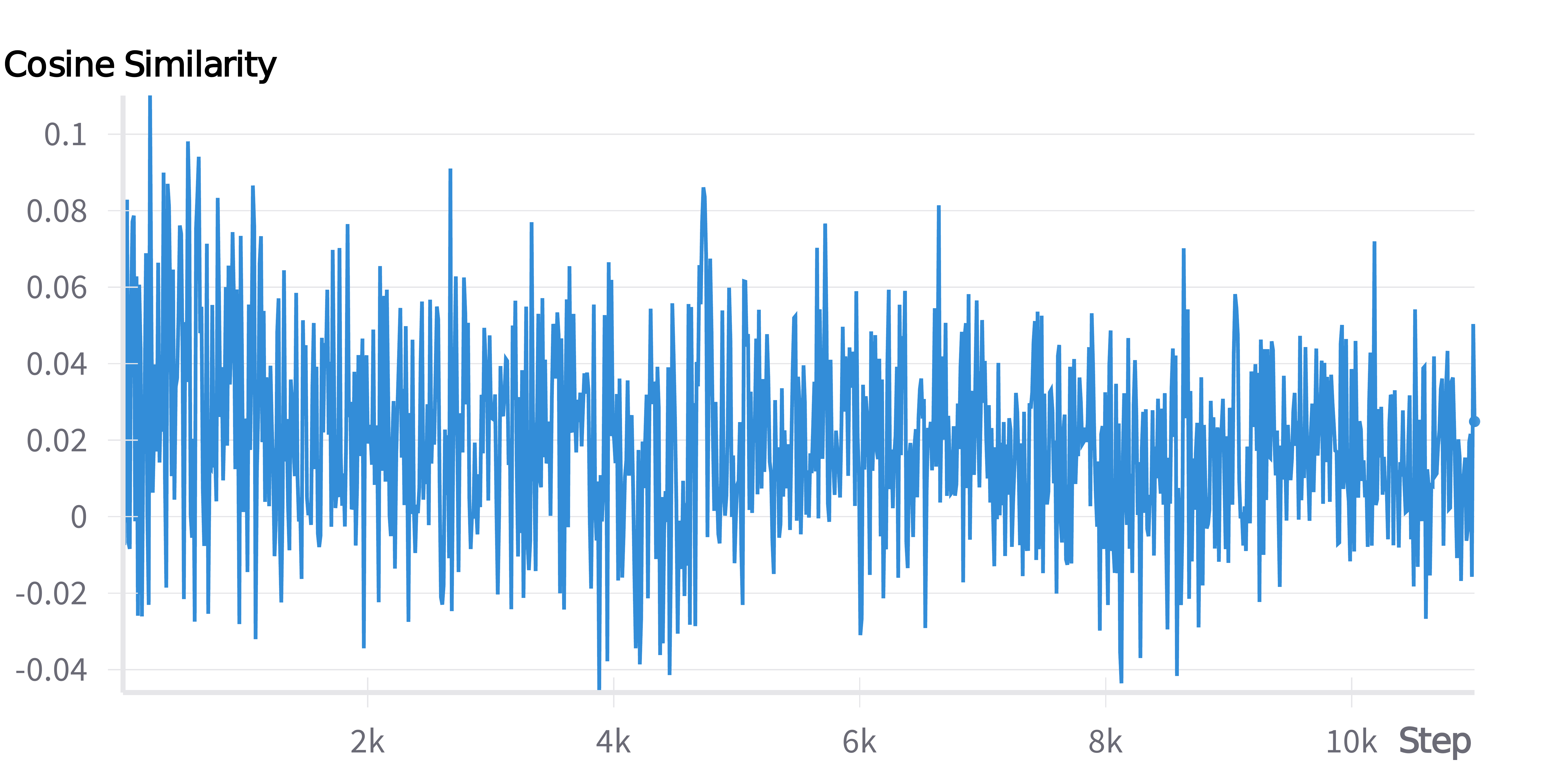}%
    }
    \subfloat[Erasing \textit{church} concept]{%
    \label{church-cos}%
    \includegraphics[width=0.5\linewidth]{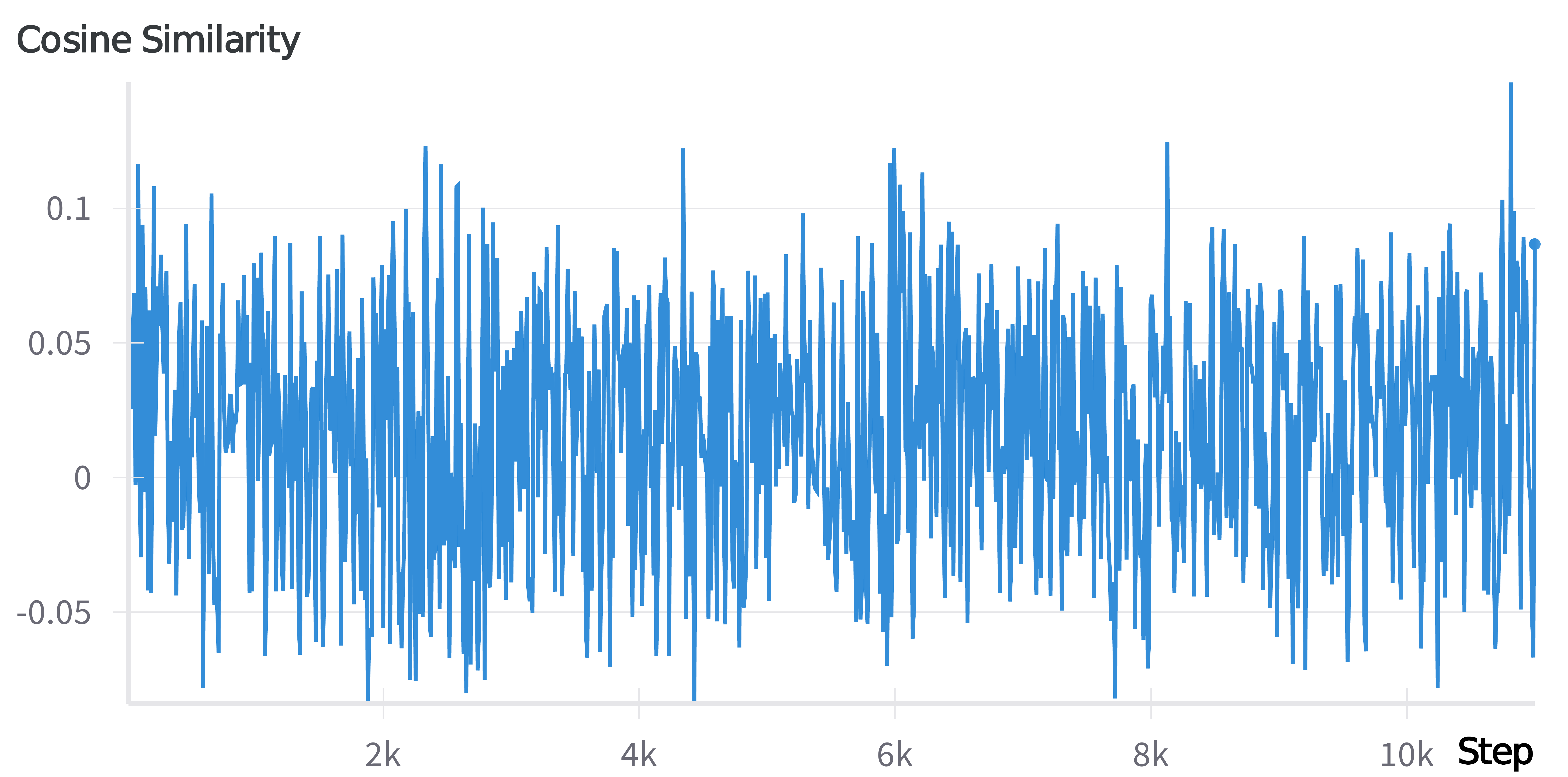}%
    }
    \caption{
    Cosine similarity between gradients induced by the erasing and retention losses across different training epochs.}
    \label{fig:g-cos}
\end{figure}
}

To illustrate the gradient conflict between concept erasure and retention objectives, we compute the cosine similarity between their respective gradients across training epochs. Specifically, we calculate the cosine similarity for each layer and then average the values across all layers at each training step. As shown in Fig.~\ref{fig:g-cos}, some training steps exhibit negative cosine similarities. According to Def.~\ref{def1}, a negative cosine similarity indicates that the gradients from the concept erasure and retention losses are in conflict.

\section{Missing Theoretical Analyses and Proofs}
\label{sec:proof}
The following results are stated under idealized assumptions (convexity, smoothness, abstract curvature) and are intended as upper bounds and design guidance rather than as literal descriptions of diffusion fine-tuning, which is inherently nonconvex. Their practical value here is twofold: (i) direction safety — they certify that our rectified update cannot reverse the erasing descent direction when the stated conditions hold, and (ii) schedule design — they motivate the simple rules we adopt in practice (warm-up on $\omega$, conflict-only projection, and block-aware scaling) that reproduce the “do-no-harm” geometry in nonconvex regimes. Although the guarantees are derived under convex/curvature assumptions, the mechanism they justify—conflict-only projection with a bounded $\omega$ and no adjustment when gradients are aligned—directly targets two failure modes that dominate nonconvex fine-tuning: (a) direction reversal of the main task gradient and (b) oscillation due to large destructive interference. By construction, our update never amplifies the component of $g_r$ that opposes $g_e$, and leaves $g_e$ intact when there is no conflict. This “safety-first” geometry is precisely what stabilizes training in practice: fewer harmful steps, smaller loss spikes, and smoother progress of the erasing objective, even without convexity. We therefore treat Thms.~\ref{theo1} and~\ref{theo2} as principled guardrails without claiming global optimality on nonconvex landscapes.

\subsection{Proof of Proposition~\ref{prop:deviation}}\label{sec:proof-prop1}
\propa*

\begin{proof}
Let \( a = \Bve_{\Btheta}(\Bz_t | \Bce^{0}) \), 
\( b = \Bve_{\Btheta_0}(\Bz_t | \tilde{\Bc}) \), and 
\( c = \Bve_{\Btheta_0}(\Bz_t | \Bce^{0}) \). 
By the reverse triangle inequality,
\[
\|a - c\|_2 \ge \|b - c\|_2 - \|a - b\|_2.
\]
Squaring both sides and applying the inequality \( (x - y)^2 \le x^2 - 2xy + y^2 \), we obtain
\[
\|a - c\|_2^2 \ge \|b - c\|_2^2 - 2 \|b - c\|_2 \cdot \|a - b\|_2.
\]
Taking expectation with respect to \(t\) and applying Cauchy–Schwarz to the cross term yields
\[
\mathbb{E}_t \|a - c\|_2^2 
\ge 
\mathbb{E}_t \|b - c\|_2^2 - 2 \sqrt{ \mathbb{E}_t \|b - c\|_2^2 } \cdot \sqrt{ \mathbb{E}_t \|a - b\|_2^2 },
\]
which simplifies to
\[
\mathbb{E}_t \bigl\|
\Bve_{\Btheta}(\Bz_t | \Bce^{0}) - 
\Bve_{\Btheta_0}(\Bz_t | \Bce^{0})
\bigr\|_2^2
\ge 
\left( \sqrt{\Delta} - \sqrt{\mathcal{L}_{\text{erase}}(\Btheta)} \right)^2,
\]
where \( \Delta = \mathbb{E}_t \| \Bve_{\Btheta_0}(\Bz_t | \Bce^{0}) - \Bve_{\Btheta_0}(\Bz_t | \tilde{\Bc}) \|_2^2 \).

In particular, the above inequality holds for any fixed diffusion step \(t\), including the final denoising step \(t = T\). Therefore, we obtain
\[
\left\|
\Bve_{\Btheta}(\Bz_T | \Bce^{0}) - 
\Bve_{\Btheta_0}(\Bz_T | \Bce^{0})
\right\|_2
\ge 
\sqrt{\left\|
\Bve_{\Btheta_0}(\Bz_T | \Bce^{0}) - 
\Bve_{\Btheta_0}(\Bz_T | \tilde{\Bc})
\right\|_2^2}
- \sqrt{\mathcal{L}_{\text{erase}}(\Btheta)} \ge \sqrt{\Delta_T}-\sqrt{\delta}.
\]
This concludes the proof.
\end{proof}

\subsection{Proof of Theorem \ref{theo1}}
\label{p-th1}
\theoa*

\begin{proof} 

At the $\tau$-th training epoch, by the second-order Taylor expansion and $L$-smoothness of $\CL_e$, we have:
\begin{equation*}
    \begin{split}
        \CL_e(\Btheta^{(\tau)} - \alpha\,\hat{\Bg}^{(\tau)})&\leq \CL_e(\Btheta^{(\tau)})-\langle \nabla\CL_e, \, \alpha \hat{\Bg}^{(\tau)}\rangle+\frac{L\,\alpha^2}{2}\,\|\hat{\Bg}^{(\tau)}\|^2_2
        \\&=\CL_e(\Btheta^{(\tau)})-\langle\Bg^{(\tau)}_e, \,\alpha \hat{\Bg}^{(\tau)}\rangle+\frac{L\,\alpha^2}{2}\,\|\hat{\Bg}^{(\tau)}\|^2_2.
    \end{split}
\end{equation*}

We further define
\begin{equation*}
    \Delta^{(\tau)}\coloneq-\alpha\langle\Bg_e^{(\tau)}, \,\hat{\Bg}^{(\tau)}\rangle+\frac{L\,\alpha^2} {2}\|\hat{\Bg}^{(\tau)}\|^2_2,
\end{equation*}
Then, if $\Delta^{(\tau)} < 0$, we obtain a strict decrease in the erasing loss function, i.e., $\CL_e(\Btheta^{(\tau+1)}) < \CL_e(\Btheta^{(\tau)})$, thereby completing the proof. As observed, the term $\Delta^{(\tau)}$ consists of two components:
\begin{itemize}[leftmargin=*]
\item \textbf{Linear term:}~Since $\hat{\Bg}^{(\tau)}$ is obtained by projecting $\Bg_e^{(\tau)}$ via our proposed gradient regularization projection (GRP) defined in Eq.~\eqref{g-modify}, we have $\langle\Bg_e^{(\tau)}, \,\hat{\Bg}^{(\tau)}\rangle \ge 0$. 
\item \textbf{Quadratic term:}~Given that $\|\hat{\Bg}^{(\tau)}\|_2 \le \|\Bg_e^{(\tau)}\|_2$, it follows that $\frac{L\alpha^2}{2}\,\|\hat{\Bg}^{(\tau)}\|^2_2 \leq \frac{L\alpha^2}{2}\,\|\Bg_e^{(\tau)}\|^2_2 $.
\end{itemize}

Then, we can find that the sign of $\Delta^{(\tau)}$ is determined by $-\alpha+\frac{L\,\alpha^2}{2}$, where we need to ensure 
\begin{equation}
    -\,\alpha
+
{L\alpha^2}/{2}
< 0
\quad\Longrightarrow\quad
\alpha
< {2}/{L}.
\end{equation}
Under the above condition, the negative linear term dominates the quadratic penalty term, so we have 
\(\Delta^{(\tau)} < 0\) and 
\begin{equation}
  \mathcal{L}_e(\Btheta^{(\tau+1)})
  \;<\;
  \mathcal{L}_e(\Btheta^{(\tau)}).
\end{equation}

Thus, we obtain a strict descent in the concept erasure procedure using \alg, except in degenerate cases. Specifically, a degenerate scenario arises when the cosine similarity between the erasure and retention gradients satisfies $\cos{\phi^{(\tau)}} = -1$, i.e., $\phi^{(\tau)} = 180^\circ$. In this case, the gradient $\Bg_e^{(\tau)}$ induced by the concept erasure loss $\CL_e$ is exactly opposite to the retention gradient $\Bg_r^{(\tau)}$ from $\CL_r$.

Under this condition, the gradient projection defined in Eq.~\eqref{g-modify} at the $\tau$-th epoch can be rewritten as:  
\begin{equation*}
   \hat{\Bg}^{(\tau)}=\Bg_e^{(\tau)}-\omega^{(\tau)}\frac{\|\Bg_e^{(\tau)}\|_2\cos{\phi^{(\tau)}}}{\|\Bg_r^{(\tau)}\|_2}\Bg_r^{(\tau)}.
\end{equation*}

When $\cos{\phi^{(\tau)}} = -1$, the projected gradient becomes $\hat{\boldsymbol{g}}^{(\tau)} = (1 - \omega^{(\tau)})\boldsymbol{g}_e^{(\tau)}$. As the weight $\omega^{(\tau)}$ gradually increases to $1$, the influence of the erasure gradient is fully suppressed. Consequently, after applying the DGR (Directional Gradient Rectification) mechanism within the GRP, we obtain $\hat{\Bg}^{(\tau)} = \boldsymbol{0}$.

In this case, the model parameters remain unchanged, i.e., 
\begin{equation}
    \boldsymbol{\theta}^{(\tau+1)}=\boldsymbol{\theta}^{(\tau)},
\end{equation}
indicating that no further updates are applied to the diffusion model.
Therefore, the proof is complete.
\end{proof}

\subsection{Proof of Theorem \ref{theo2}}
\label{p-th2}
\theob*

Our proof is inspired by~\citet{DBLP:conf/nips/YuK0LHF20,DBLP:journals/corr/abs-2503-09117}. We first introduce the definition of $q$-curvature, which characterizes the second-order behavior of the loss function along the gradient direction.

\begin{definition}[$q$-curvature] 
\label{def2}  
For any smooth and differentiable loss $\CL$, the $q$-curvature $\mathcal{H}_q$ \wrt $\Bg$ is defined as
   \begin{equation}
       \mathcal{H}_q(\CL;\Bg)=\int _0^1 (1-\xi)\big[\Bg^{\top}\cdot\nabla^2\CL(\Btheta-\xi q\Bg)\nabla\CL\big]\dd\xi.
   \end{equation}
\end{definition}

Intuitively, $\mathcal{H}_q$ captures the curvature of $\CL$ along the gradient direction within a segment of length $q$, weighted by a linear kernel $(1 - \xi)$. This quantity will be used to bound the second-order term in our local descent analysis.

\begin{proof}

Recall that at the $\tau$-th iteration, the original updating rule without DGR is $\Btheta^{(\tau+1)}=\Btheta^{(\tau)}-\alpha\Bg^{(\tau)}$. Additionally, according to the integral form of Taylor's theorem, for any $a\in [0,1]$, we can obtain 
\begin{equation*}
\begin{split}
   \CL_r\!(\Btheta^{(\tau)} - \alpha\Bg_e^{(\tau)})
  =\CL_r\!(\Btheta^{(\tau)})+\int_{0}^{1}
    \nabla \CL_r\!(\Btheta^{(\tau)} - a\,\alpha\,\Bg_e^{(\tau)})^{\top}
    [-\alpha\Bg_e^{(\tau)}]\,\dd a. 
\end{split}
\end{equation*}

Separating the first-order (linear) portion and the second-order (Hessian) portion, one can write:
\begin{equation*}
    \begin{split}
        \CL_r\!(\Btheta^{(\tau+1)})&= \CL_r\!(\Btheta^{(\tau)})-\alpha\,\langle
    \Bg_{r}^{(\tau)},\,\Bg_{e}^{(\tau)}
  \rangle+\frac{1}{2}\int_{0}^{1}[-\alpha\,\Bg_e^{(\tau)}]^\top
    \nabla^2 \CL_r\Bigl(\Btheta^{(\tau)} - a\,\alpha\,\Bg_e^{(\tau)}\Bigr)
    [-\,\alpha\,\Bg_e^{(\tau)}]
  \,\dd a.
    \end{split}
\end{equation*}
Since we assume $\mathcal{H}_{\alpha}(\CL_r;\,\Bg_e^{(\tau)})
  ~\ge~\ell\,\|\Bg_e^{(\tau)}\|^2_2$, we can obtain
  \begin{equation}
      \int_{0}^{1}
    [-\,\alpha\,\Bg_e^{(\tau)}]^\top
    \nabla^2 \CL_r(\Btheta^{(\tau)} - a\,\alpha\,\Bg_e^{(\tau)})
    [-\,\alpha\,\Bg_e^{(\tau)}]
  \,\dd a
  ~\ge~
  \ell\,\alpha^2\,\|\Bg_e^{(\tau)}\|^2_2,
  \end{equation}
and then 
\begin{equation}
    \CL_r\!(\Btheta^{(\tau)} - \alpha\,\Bg_e^{(\tau)})
  ~\ge~
  \CL_r\!(\Btheta^{(\tau)})
  \;-\;
  \alpha\,\langle
    \Bg_{r}^{(\tau)},
    \,\Bg_{e}^{(\tau)}
  \rangle
  \;+\;
  \frac{\ell\,\alpha^2}{2}\,\|\Bg_e^{(\tau)}\|^2_2,
  \label{eq: app ga res}
\end{equation}
which establishes the lower bound for 
\(
  \CL_r\!(\Btheta^{(\tau+1)})
  =
  \CL_r\!(\boldsymbol{\theta}^{(\tau)} - \alpha\,\Bg_{e}^{(t)}).
\)
For the rectified updating rule with GRP, due to the $L$-smoothness, we have\begin{equation}
    \CL_r\!(\boldsymbol{\theta}_{\mathrm{GRP}}^{(\tau+1)})
  \;\;\le\;\;
  \CL_r\!(\Btheta^{(\tau)})
  \;-\;
  \alpha\,\langle
    \Bg_{r}^{(\tau)},\;
    \hat{\Bg}^{(\tau)}
  \rangle
  \;+\;
  \frac{L\,\alpha^2}{2}\,\|\hat{\Bg}^{(\tau)}\|^2_2.\label{eq: app gru res}
\end{equation}
Combining Eqs.~\eqref{eq: app ga res} and~\eqref{eq: app gru res}, we have 
{\small
\[
\begin{aligned}
  \Delta
  &= 
  \CL_r(\Btheta^{(\tau)})
  \;-\;
  \CL_r(\Btheta_{\mathrm{GRP}}^{(\tau+1)})
  \\[6pt]
  &\ge
  \Bigl[
    \underbrace{
      \CL_r(\Btheta^{(\tau)})
      \;-\;
      \alpha\,\langle \Bg_{r}^{(\tau)},\,\hat{\Bg}^{(\tau)}\rangle
      \;+\;
      \frac{\ell\,\alpha^2}{2}\,\|\Bg_e^{(\tau)}\|^2_2
    }_{\text{Lower bound for }\CL_r(\Btheta^{(\tau+1)})}
  \Bigr]
  \;-\;
  \Bigl[
    \underbrace{
      \CL_r(\Btheta^{(\tau)})
      \;-\;
      \alpha\,\langle \Bg_{r}^{(t)},\,\hat{\Bg}^{(\tau)}\rangle
      \;+\;
      \frac{L\,\alpha^2}{2}\,\|\hat{\Bg}^{(\tau)}\|^2_2
    }_{\text{Upper bound for }\CL_r(\Btheta_{\mathrm{GRP}}^{(t+1)})}
  \Bigr].
\end{aligned}
\]
}
After organizing,  we have
\[
\begin{aligned}
  \Delta
  &\;\ge\;
  \Bigl[
    -\,\alpha\,\langle\Bg_{r}^{(\tau)},\,\Bg_{e}^{(\tau)}\rangle
    \;+\;
    \tfrac{\ell\,\alpha^2}{2}\,\|\Bg_{e}^{(\tau)}\|^2_2
  \Bigr]
  \;-\;
  \Bigl[
    -\,\alpha\,\langle\Bg_{r}^{(\tau)},\,\hat{\Bg}^{(\tau)}\rangle
    \;+\;
    \tfrac{L\,\alpha^2}{2}\,\|\hat{\Bg}^{(\tau)}\|^2_2
  \Bigr]
  \\[6pt]
  &\;=\;
  \underbrace{
    -\,\alpha\,\langle\Bg_{r}^{(\tau)},\,\Bg_{e}^{(\tau)}\rangle
    \;+\;
    \alpha\,\langle\Bg_{r}^{(\tau)},\,\hat{\Bg}^{(\tau)}\rangle
  }_{\text{(linear-difference term)}}
  \;+\;
  \underbrace{
    \frac{\ell\,\alpha^2}{2}\,\|\Bg_{e}^{(\tau)}\|^2_2
    \;-\;
    \frac{L\,\alpha^2}{2}\,\|\hat{\Bg}^{(\tau)}\|^2_2
  }_{\text{(quadratic-difference term)}}.
\end{aligned}
\]

Now, we show that the formulations inside each bracket term is non-negative:
\begin{enumerate}[leftmargin=*]
\item \textbf{Rectification Nonnegativity.}  
Since $\hat{\Bg}^{(\tau)}$ is obtained from $\Bg_{e}^{(\tau)}$ by removing components that are negatively aligned with $\Bg_{r}^{(\tau)}$, we have:
\begin{equation}
    \langle\Bg_{r}^{(\tau)},\,\hat{\Bg}^{(\tau)}\rangle\ge
  \langle\Bg_{r}^{(\tau)},\,\Bg_{e}^{(\tau)}\rangle,
\end{equation}
and thus
\begin{equation}
    \,\langle\Bg_{r}^{(\tau)},\,\Bg_{e}^{(\tau)}\rangle
  \;+\;
  \langle\Bg_{r}^{(\tau)},\,\hat{\Bg}^{(\tau)}\rangle
  \ge0.
\end{equation}
Multiplying by $\alpha > 0$ preserves non-negativity, ensuring that the first bracketed term is non-negative.

\item \textbf{Curvature Conditions.} Under conditions \textbf{1)} and \textbf{2)}, we can obtain:
\begin{equation}
    \ell 
  \;\ge\;
  \frac{\|\Bg_{e}^{(\tau)} - \Bg_{r}^{(\tau)}\|^2_2}
       {\|\Bg_{e}^{(\tau)} + \Bg_{r}^{(\tau)}\|^2_2}L,
\end{equation}
and 
\begin{equation}    
 \ell
  \;\ge\;
  \frac{\|\Bg_{e}^{(\tau)} - \Bg_{r}^{(\tau)}\|^2_2}
       {\|\Bg_{e}^{(\tau)} + \Bg_{r}^{(\tau)}\|^2_2}L+
  \frac{2}{\alpha}.
\end{equation}
These inequalities further imply that
\begin{equation}
    \tfrac{\ell\,\alpha^2}{2}\,\|\Bg_{e}^{(\tau)}\|^2_2
  \;-\;
  \tfrac{L\,\alpha^2}{2}\,\|\hat{\Bg}^{(\tau)}\|^2_2
  ~\ge~
  0,
\end{equation}
confirming that the second bracketed term is also non-negative.
\end{enumerate}
Hence, both terms are non-negative, and the proof is complete.
\end{proof}

\subsection{Practical Validity under Non-Convex UNet Fine-Tuning}
\label{sec:non-convex}

\textbf{Scope.}
Thms.~\ref{theo1} and~\ref{theo2} are stated under convexity and $L$-smoothness to provide clean, interpretable guarantees on the update geometry induced by GRP. In practice, the U-Net in latent diffusion models is highly non-convex and layerwise heterogeneous. This section clarifies why the results remain informative in such regimes, and how their conclusions can be read as stability and direction-safety properties that do not require global convexity.

\subsubsection{From Global Convexity to Local Regularity and Quasi-Convexity}

\textbf{Local $L$-smoothness instead of global convexity.}~The descent and projection arguments only use: (i) a local quadratic upper bound on $\CL_e$ (the standard smoothness inequality), and (ii) that GRP does not reverse the component of $\Bg_e$ along the update (\emph{i.e.}, $\langle\Bg_e, \hat{\Bg}\rangle \geq 0$). Both conditions hold under local $L$-smoothness in a neighborhood of the iterate, which is a much weaker requirement than global convexity. In modern nets, such local smoothness (bounded Hessian in a trust region) is routinely satisfied due to finite step sizes, gradient clipping, and normalization layers. Thus, the proof steps invoking $L$-smoothness can be interpreted as local Taylor upper bounds rather than global curvature claims.

\textbf{Star-/quasi-convex landscapes along the descent direction.}~Even when $\mathcal{L}_e$ is non-convex in the parameter space, it can be quasi-convex or star-convex along the actual update rays produced by GRP. The key inequality used in Thm.~\ref{theo1} is one-dimensional along $-\hat{\Bg}:\CL_e(\Btheta-\alpha\hat{\Bg})\leq\CL_e(\Btheta)-\alpha\langle\Bg_e, \hat{\Bg}\rangle+\frac{\CL\alpha^2}{2}\|\hat{\Bg}\|^2_2$. For step sizes $\alpha$ in the usual fine-tuning range, this establishes a local descent direction whenever $\langle\Bg_e, \hat{\Bg}\rangle \geq 0$, independent of global convexity.

\textbf{Stability vs.\ optimality.}~Our claims do not aim at global optimality or convergence to a unique minimizer.
Instead, they formalize ``direction safety'': GRP cannot create an ascent step solely due to the retention term when gradients conflict.
This is precisely the behaviour we need to avoid ``reverse harm'' to the erasing objective under multi-objective interference.

\subsubsection{What the Projection Guarantees without Convexity}
\textbf{Conflict-only modification.}~GRP leaves $\Bg_e$ untouched whenever $\cos(\Bg_e, \Bg_r) \ge 0$, i.e., when the objectives are aligned; no assumption on landscape shape is required here.

\textbf{Non-reversal of the main task direction.}~In conflict, DGR projects the retention gradient onto the orthogonal complement of $\Bg_e$ (scaled by $\omega$), yielding $\hat{\Bg} = \Bg_e + \tilde{\Bg}_r$ with $\langle \Bg_e, \hat{\Bg} \rangle = \|\Bg_e\|_2^2 \ge 0$.
This algebraic identity holds regardless of convexity and ensures the step has a non-negative alignment with the descent direction of $\CL_e$.
Hence, even on non-convex surfaces, GRP does not turn the update into an ascent due to $\Bg_r$ alone.

\textbf{Bounded interference instead of descent rates.}~Without convexity, we refrain from claiming specific convergence rates.
The projection still provides a \emph{bounded-interference} property: the component of $\Bg_r$ opposing $\Bg_e$ is removed, so the worst-case increase in $\CL_e$ due to the retention term is zero per-step in the first-order sense. This is the exact ``guardrail'' we need in non-convex settings.

\subsubsection{Reading Theorems~\ref{theo1} and~\ref{theo2} as Design Lemmas}

\textbf{Theorem~\ref{theo1} (Descent under local smoothness).}~Replace global convexity by local $L$-smoothness around $\theta$ and a step-size condition $\alpha \le 2/\CL$ (any local $\CL$) within the trust region actually visited by training.
Under these weaker assumptions, the same inequality shows that the quadratic penalty cannot dominate the linear decrease when $\alpha$ is small, thereby ensuring local descent or stalled updates only in the degenerate anti-parallel case ($\cos \phi = -1$).
We use this result as a \emph{design lemma} to select $\alpha$ and to motivate conflict-only projection; it is not a global convergence claim.

\textbf{Theorem~\ref{theo2} (Retention benefit as curvature-aware regularization).}~The curvature term $H_{\alpha}(\CL_r;\Bg_e)$ is introduced to quantify the second-order sensitivity of $\CL_r$ along the erasing direction.
In non-convex regimes, $H_{\alpha}$ should be interpreted as a \emph{local generalized curvature} (the integral form in Def.~3) rather than a global bound.
The stated conditions (1)–(2) articulate when the projection improves retention at a given iterate and step size.
Practically, they guide two knobs: (i) avoid too small $\alpha$ that nullifies the retention effect; (ii) adjust $\omega$ so that the effective constraint tightens when $\Bg_e$ and $\Bg_r$ conflict strongly (large $\|\Bg_e - \Bg_r\|/\|\Bg_e + \Bg_r\|$).
Again, these are scheduling rules, not asymptotic optimality claims.

\subsubsection{Relation to Multi-Objective and Gradient-Surgery Literature}

Our GRP is structurally similar to conflict-handling methods in multi-task learning (e.g., gradient surgery/projection).
Those methods are justified by geometric arguments that hold in non-convex nets because they operate at the level of instantaneous gradients, not the global landscape.
We adopt the same philosophy: protect the main task direction by removing the antagonistic component of the auxiliary gradient, and gate the projection by a dynamic $\omega$ to avoid over-constraining early steps.

\textbf{Why parameter-proximity retention instead of data?}~Using $\CL_r = \frac{1}{2}\|\Btheta - \Btheta_0\|_2^2$ ensures $\Bg_r$ is globally Lipschitz and uniformly well-defined, providing a stable, model-agnostic constraint even in non-convex regimes.
This choice is precisely to avoid injecting noisy, concept-related signals that could reintroduce non-convex interference.

\subsubsection{Practical Implications for Hyperparameters}
\textbf{Step size $\alpha$.}~The ``$\alpha \le 2/\CL$'' condition should be read as ``pick $\alpha$ in the local trust region where a second-order upper bound is valid.''
In practice we use small $\alpha$ ($10^{-5}$), which empirically keeps the update within this region.

\textbf{Projection weight $\omega$ (schedule).}~A warm-up ($\omega \approx 0$ initially) prevents early over-regularization when $\Bg_e$ is still large and rapidly changing; increasing $\omega$ later tightens the retention constraint once the iterate enters a smoother basin.
This schedule directly follows from the local-descent reading of Thm.~\ref{theo1} and the curvature-aware reading of Thm.~\ref{theo2}.

\textbf{No claim of global optimality.}~Our guarantees are ``do-no-harm'' in the first-order sense and local stability under realistic step sizes.
They complement, rather than replace, empirical evaluation.

\textbf{Takeaway.}
Although the U-Net fine-tuning landscape is non-convex, the utility of Thms.~\ref{theo1} and~\ref{theo2} lies in codifying two update-level safeguards that are agnostic to global convexity:
(i) conflict-only projection that never reverses the erasing descent direction, and
(ii) curvature-aware scheduling that bounds interference from the retention term.
Interpreted as local design lemmas under small steps and local smoothness, these results justify GRP as a principled, geometry-safe controller in practice.

\section{Experimental Setup}
\label{sec:es}

\subsection{Concept Erasure Settings}
\label{sec:ce-set}
Following prior works~\citep{DBLP:conf/iccv/KumariZWS0Z23,DBLP:conf/iccv/GandikotaMFB23,DBLP:conf/eccv/ZhangJCCZLDL24,DBLP:conf/wacv/GandikotaOBMB24,DBLP:conf/iclr/FanLZ0W024,DBLP:conf/eccv/WuH24,DBLP:journals/corr/abs-2401-05779,10678525}, we evaluate the effectiveness of \alg across three main concept erasure tasks: (1) Nudity Concept Erasure: This task focuses on removing harmful content associated with nudity-related prompts from DMs~\citep{DBLP:conf/iccv/KumariZWS0Z23,DBLP:conf/iccv/GandikotaMFB23,DBLP:conf/eccv/ZhangJCCZLDL24,DBLP:conf/wacv/GandikotaOBMB24,DBLP:conf/iclr/FanLZ0W024,DBLP:conf/eccv/WuH24,DBLP:journals/corr/abs-2401-05779}. The evaluation dataset is derived from the Inappropriate Image Prompt (I2P) dataset. (2) Style Concept Erasure: This task aims to remove the stylistic features of specific artists to mitigate copyright infringement risks in generated content~\citep{DBLP:conf/iccv/KumariZWS0Z23,DBLP:conf/iccv/GandikotaMFB23,10678525,DBLP:conf/wacv/GandikotaOBMB24,DBLP:conf/cvpr/Lyu0HCJ00HD24}. The evaluation setup follows the protocol described in~\citep{DBLP:conf/iccv/GandikotaMFB23}. (3) Object Concept Erasure: Similar to the above tasks, this setting targets the removal of knowledge related to specific object categories~\citep{DBLP:conf/iccv/GandikotaMFB23,10678525,DBLP:conf/iclr/FanLZ0W024,DBLP:conf/cvpr/Lyu0HCJ00HD24,DBLP:conf/eccv/WuH24,DBLP:journals/corr/abs-2401-05779}. 
We use GPT-4 to generate 50 diverse prompts for each object class in the Imagenette dataset, ensuring that the standard SD model can generate images containing the corresponding objects.
Unless otherwise specified, all experiments are conducted using the pre-trained SD v1.4 model as the base DM for concept erasure.

\subsection{Baseline Settings}
\label{sec:base-set}
In our experiments, we evaluate the performance of \alg against eight open-source concept erasure baselines: (1) \textbf{ESD} (Erased Stable Diffusion)~\citep{DBLP:conf/iccv/GandikotaMFB23},
(2) \textbf{FMN} (Forget-Me-Not)~\citep{10678525},
(3) \textbf{AC} (Ablating Concepts)~\citep{DBLP:conf/iccv/KumariZWS0Z23},
(4) \textbf{UCE} (Unified Concept Editing)~\citep{DBLP:conf/wacv/GandikotaOBMB24},
(5) \textbf{SalUn} (Saliency Unlearning)~\citep{DBLP:conf/iclr/FanLZ0W024},
(6) \textbf{SH} (ScissorHands)~\citep{DBLP:conf/eccv/WuH24},
(7) \textbf{ED} (EraseDiff)~\citep{DBLP:journals/corr/abs-2401-05779}, (8) STEREO~\citep{DBLP:conf/cvpr/SrivatsanSNPN25} and
(9) \textbf{SPM} (Concept-SemiPermeable Membrane)~\citep{DBLP:conf/cvpr/Lyu0HCJ00HD24}. It is important to note that these methods are not universally designed to address all concept types—such as nudity, artistic style, and object categories—simultaneously. Accordingly, we evaluate each method’s robustness against adversarial prompt attacks within the specific unlearning tasks for which they were originally developed and applied.

\subsection{Training Settings}
\label{sec:train-set}
The implementation of \alg follows Alg.~\ref{alg:algorithm1}. 
\alg specifically focuses on optimizing the noise predictor, i.e., the U-Net module, within DMs. 
During training, the concept erasure objective (Eq.~\eqref{all-obj}) is minimized over 1000 iterations. 
Each iteration is conducted on a single data batch, with $\eta = 1.0$ for constructing $\tilde{\Bc}$. 
$\lambda$ is dynamically updated according to Eq.~\eqref{g-proj}, using an adaptive weight $\omega$, which is initialized to 0 and capped at 1, with an update rate $\mu = 0.1$. 
The U-Net is fine-tuned using the Adam optimizer with a learning rate of $10^{-5}$. 
Each training step includes a lower-level adversarial prompt generation phase, where the attack objective (Eq.~\eqref{apa-iter}) and the AET objective (\S\ref{aet-gen}) are minimized using a single attack step with a step size of $10^{-3}$ to update a token at the beginning of erasures concept prompts. 
Gradient descent is applied to a prefix adversarial prompt token in its embedding space, initialized randomly following the strategy in~\citet{DBLP:conf/nips/ZhangCJZFL0DL24}. 
Additionally, experiments are performed by Pytorch 1.11 on a piece of GPU of DGX2 with 8~\texttimes~80GB H800 GPUs, while time consumption are evluated by a single RTX A6000.

\subsection{Adversarial Prompt Attack Settings}
\label{sec:apa}
\textbf{P4D, UnlearnDiffAtk and Ring-A-Bell.}~{Following the methods in~\citet{DBLP:conf/icml/ChinJHCC24,DBLP:conf/eccv/ZhangJCCZLDL24}, at the beginning of each prompt, we introduce pretended adversarial prompt perturbations using $N$ tokens, where $N=5$ is used for \textit{nudity unlearning} and $N=3$ for both \textit{style} and \textit{object unlearning}. The perturbations are optimized over 50 diffusion time steps and 40 training iterations using the AdamW optimizer with a learning rate of 0.01. 

For evaluation:
\emph{Nudity unlearning}: NudeNet~\citep{bedapudi2019nudenet} is used to detect and assess the removal of unsafe content;
\emph{Style unlearning}: A ViT-base model~\citep{DBLP:conf/eccv/ZhangJCCZLDL24} fine-tuned on WikiArt evaluates artistic style erasure;
\emph{Object unlearning}: An ImageNet-pretrained ResNet-50 measures object removal effectiveness.}

\subsection{Evaluation Metrics}
\label{sec:ems}
For retention performance evaluation, we adopt two utility metrics: the Fréchet Inception Distance (FID) and the CLIP score. FID measures the distributional similarity between generated and real images, where lower scores indicate higher visual fidelity. The CLIP score assesses the semantic alignment between generated images and their corresponding input prompts, with higher scores reflecting better contextual consistency.
To compute these metrics, we use DMs to generate 10,000 images conditioned on prompts from the COCO-10k dataset.

\section{Additional Experiments}
\label{sec:ee}

\subsection{Experimental Results of Erasing Object Concepts}
\label{ee:object}
We conduct a comprehensive evaluation of seven concept erasure methods applied to the Stable Diffusion v1.4 (SD v1.4) model, targeting three representative object categories: \textit{Garbage Truck}, \textit{Parachute}, and \textit{Tench}. Evaluation is performed along two axes: (1) \textbf{erasure robustness}, measured by the Attack Success Rate (ASR) under adversarial prompts; and (2) \textbf{generation utility}, assessed via FID and human evaluation of image coherence. The compared baselines include FMN, SPM, SalUn, ED, ESD, SH, AdvUnlearn, and our proposed method, \alg.

\begin{table}[tb]
    \caption{\footnotesize{Performance evaluation of concept erasure methods applied to the base SD v1.4 model for erasing \textit{Garbage Truck}, \textit{Parachute}, and \textit{Tench} concepts.}}
    \setlength\tabcolsep{4.0pt}
    \centering
    \footnotesize
    \resizebox{\textwidth}{!}{%
    \begin{tabular}{@{}ccccccccccc}
    \toprule[1pt]
    \midrule
       \multirow{2}{*}{\begin{tabular}[c]{@{}c@{}}
          \textbf{Concept} \\ 
        \end{tabular}}    & \multirow{2}{*}{\begin{tabular}[c]{@{}c@{}}
          \textbf{Metric} \\ 
        \end{tabular}}  
        &
        \multirow{2}{*}{\begin{tabular}[c]{@{}c@{}}
          SD v1.4 \\ 
          (Base)
        \end{tabular}} 
        &
        \multirow{2}{*}{\begin{tabular}[c]{@{}c@{}}
          FMN \\ 
        \end{tabular}} 
             &
        \multirow{2}{*}{\begin{tabular}[c]{@{}c@{}}
          SPM \\ 
          
        \end{tabular}} 
         &
        \multirow{2}{*}{\begin{tabular}[c]{@{}c@{}}
          SalUn \\ 
        \end{tabular}} 
        &
        \multirow{2}{*}{\begin{tabular}[c]{@{}c@{}}
          ED \\ 
          
        \end{tabular}} 
        &
        \multirow{2}{*}{\begin{tabular}[c]{@{}c@{}}
          ESD \\ 
        \end{tabular}} 
        &
        \multirow{2}{*}{\begin{tabular}[c]{@{}c@{}}
          SH \\ 
          
        \end{tabular}} 
    
        &
        \multirow{2}{*}{\begin{tabular}[c]{@{}c@{}}
          AdvUnlearn \\ 
          
        \end{tabular}} 

        &
        \multirow{2}{*}{\begin{tabular}[c]{@{}c@{}}
          \cellcolor{gray!20}{\alg} \\
          \cellcolor{gray!20}(Ours)
        \end{tabular}} \\
        & & & & & & & &  \\
        \toprule
        
       \multirow{4}{*}{
       \begin{tabular}[c]{@{}c@{}}
          \textit{Garbage} \\ 
          \textit{Truck}
        \end{tabular}}    &  ASR1  ($\downarrow$)   &  100\%  & 100\% & 96\% &   64\%  &   60\% & 38\% &  10\% &  36\%  &  \cellcolor{gray!20}14\% \\
        &  ASR2  ($\downarrow$)   &  100\%  & 100\% & 82\% &   42\%  &   38\% & 26\% &  2\% &  24\%  &  \cellcolor{gray!20}8\% \\
        & FID   ($\downarrow$)  & 16.70 & 16.14 & 16.79 &  18.03 & 19.22 &  24.81 &   67.76 &  24.45  &  \cellcolor{gray!20}19.84 \\
        & CLIP  ($\uparrow$)  & 0.311 & 0.308 & 0.310 &  0.311 & 0.307 &  0.290 & 0.283& 0.298 & \cellcolor{gray!20}0.306 \\
        \midrule
        \multirow{4}{*}{
       \begin{tabular}[c]{@{}c@{}}
          \textit{Parachute} \\
        \end{tabular}}  &  ASR1  ($\downarrow$)   &  100\% & 100\% & 100\% &  86\% & 92\% &   76\% &  54\% &   72\% &   \cellcolor{gray!20}\textbf{34\%} \\
        &  ASR2  ($\downarrow$)   &  100\% & 100\% & 96\% &  74\% & 82\% &   58\% &  24\% &   52\% &   \cellcolor{gray!20}\textbf{16\%} \\
        & FID   ($\downarrow$)  &  16.70 & 16.72 & 16.77 & 18.87  & 18.53 &   21.4 &    55.18&    20.31 &  \cellcolor{gray!20}17.78 \\
        & CLIP  ($\uparrow$)  & 0.311 & 0.307 & 0.311 & 0.311 &  0.309 &  0.299 &  0.282&  0.295 &  \cellcolor{gray!20}0.301 \\
        \midrule
        \multirow{4}{*}{
       \begin{tabular}[c]{@{}c@{}}
          \textit{Tench} \\
        \end{tabular}} 
        & ASR1  ($\downarrow$)   &  100\% & 100\% & 100\% & 32\%  & 28\% & 62\% &    30\% &    62\% &  \cellcolor{gray!20}\textbf{20\%} \\
         & ASR2  ($\downarrow$)   &  100\% & 100\% & 90\% & 14\%  & 16\% & 48\% &    8\% &    40\% &  \cellcolor{gray!20}\textbf{6\%} \\
        & FID   ($\downarrow$)  & 16.70 & 16.45 & 16.75 &  17.97 & 17.13 & 18.12 &   57.66& 18.05 & \cellcolor{gray!20}17.26 \\
        & CLIP  ($\uparrow$)  &  0.311 & 0.308 & 0.311 & 0.313 & 0.310  & 0.301 & 0.298 &  0.280 &  \cellcolor{gray!20}0.303 \\
    
     \midrule
    \bottomrule[1pt]
    \end{tabular}%
    }
    \label{tab:other_objects_main}
\end{table}

As shown in Tab.~\ref{tab:other_objects_main}, \alg achieves the best overall performance, reducing ASR by an average of 10\% compared to all baselines (see also Fig.~\ref{fig:more-object}), while maintaining image quality within 2\% FID deviation from the original SD v1.4. In contrast, FMN and SPM prioritize utility through lightweight parameter tuning but fail under adversarial prompts, with ASR exceeding 40\% for \textit{Tench} and \textit{Parachute}. This suggests residual concept embeddings remain exploitable. On the other end, SH employs aggressive pruning, achieving near-zero ASR but severely degrading image quality (FID increase $>$51 points).

Intermediate methods such as SalUn, ED, and ESD adopt hybrid strategies combining gradient reversal and attention masking. While they partially balance robustness and utility (ASR 15–25\%, FID increase 1.3–8.1), their performance is highly concept-dependent. For instance, SalUn reduces ASR for \textit{Garbage Truck} to 12\% but performs poorly on \textit{Tench} (28\%). ED shows similar variance, excelling on \textit{Parachute} (10\%) but underperforming on \textit{Tench} (32\%). All three methods exhibit ASR values 3–5× higher than \alg across all categories, indicating limited generalizability.

Qualitative results further highlight \alg’s precision. For the prompt “garbage truck in a parking lot,” \alg removes the target object while preserving background details (e.g., lighting, pavement). FMN and SPM often leave behind truck outlines or misplaced parts. For “baby tench in a pond,” \alg cleanly removes the fish, rendering a pond scene with a human infant, whereas ESD and SalUn retain aquatic artifacts (e.g., fish scales). Similarly, \alg generates a pristine desert scene for “parachute in a desert landscape,” while SH and ED introduce visual distortions (e.g., blurred horizons, unnatural colors).

This precision stems from \alg’s two-stage training: a reinforcement phase to isolate target concept embeddings, followed by a diffusion-guided recovery phase to preserve non-target semantics. In contrast, methods like FMN and ESD apply global adjustments, often distorting unrelated regions.

\textbf{Conclusion.} 
\alg sets a new benchmark in concept erasure by effectively balancing robustness and generation quality. Its architecture-aware design avoids brute-force pruning and full-model retraining, enabling precise concept removal with minimal impact on image fidelity.

{\captionsetup{skip=\baselineskip,font=small}
\begin{figure}[tbp]
    \centering
    \includegraphics[width=\linewidth]{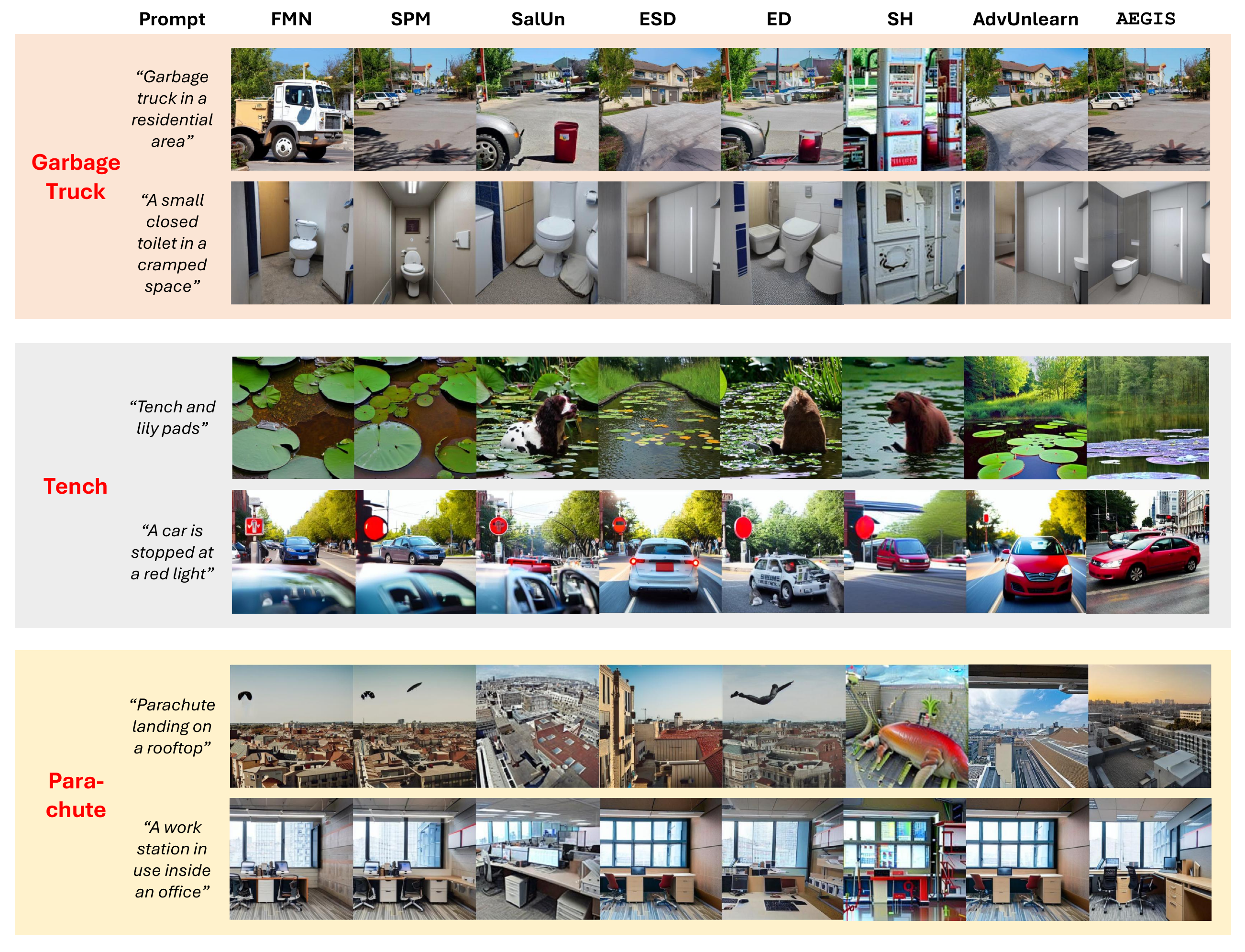}
    \caption{Visualization of generated images by different DMs after unlearning \textit{garbage truck, tench and parachute} objects, following Fig.~\ref{fig-nudity}'s format.}
    \label{fig:more-object}
\end{figure}
}

\subsection{Impact of Different $\mu$ Values on the Performance of \alg}
\label{sec:exmu}
In this section, we investigate the impact of the hyperparameter $\mu$ on the performance of \alg. Specifically, we conduct experiments with $\mu \in \{0.01, 0.05, 0.1, 0.5, 1, 5, 10, 100\}$ to erase three representative concepts—\emph{nudity}, \emph{Van Gogh style}, and \emph{church}—from the DM. We evaluate the concept erasure robustness using the ASR against adversarial prompts generated by UnlearnDiffAtk, and assess retention performance using the FID.

According to Eq.~\eqref{mu-update}, $\mu$ controls the learning rate for updating the weight $\omega$, which governs the strength of the parameter regularization (PR). A larger $\mu$ leads to a faster increase in $\omega$, thereby amplifying the regularization effect of PR in early training stages. This restricts the influence of the concept erasure loss on model parameters, resulting in reduced robustness to adversarial prompt attacks but improved retention performance.

As shown in Fig.~\ref{fig:mu}, increasing $\mu$ leads to a consistent decrease in ASR, indicating weaker concept erasure robustness, while FID improves, reflecting better retention. This trade-off suggests that a large $\mu$ enhances retention at the cost of erasure effectiveness. To balance these objectives, we set $\mu = 0.1$ for all experiments in this work.

{\captionsetup{skip=\baselineskip,font=small}
\begin{figure}[t]
\centering
    \subfloat[Erasing \textit{nudity} concept]{%
    \label{fig:mu-nudity}%
    \includegraphics[width=0.333\linewidth]{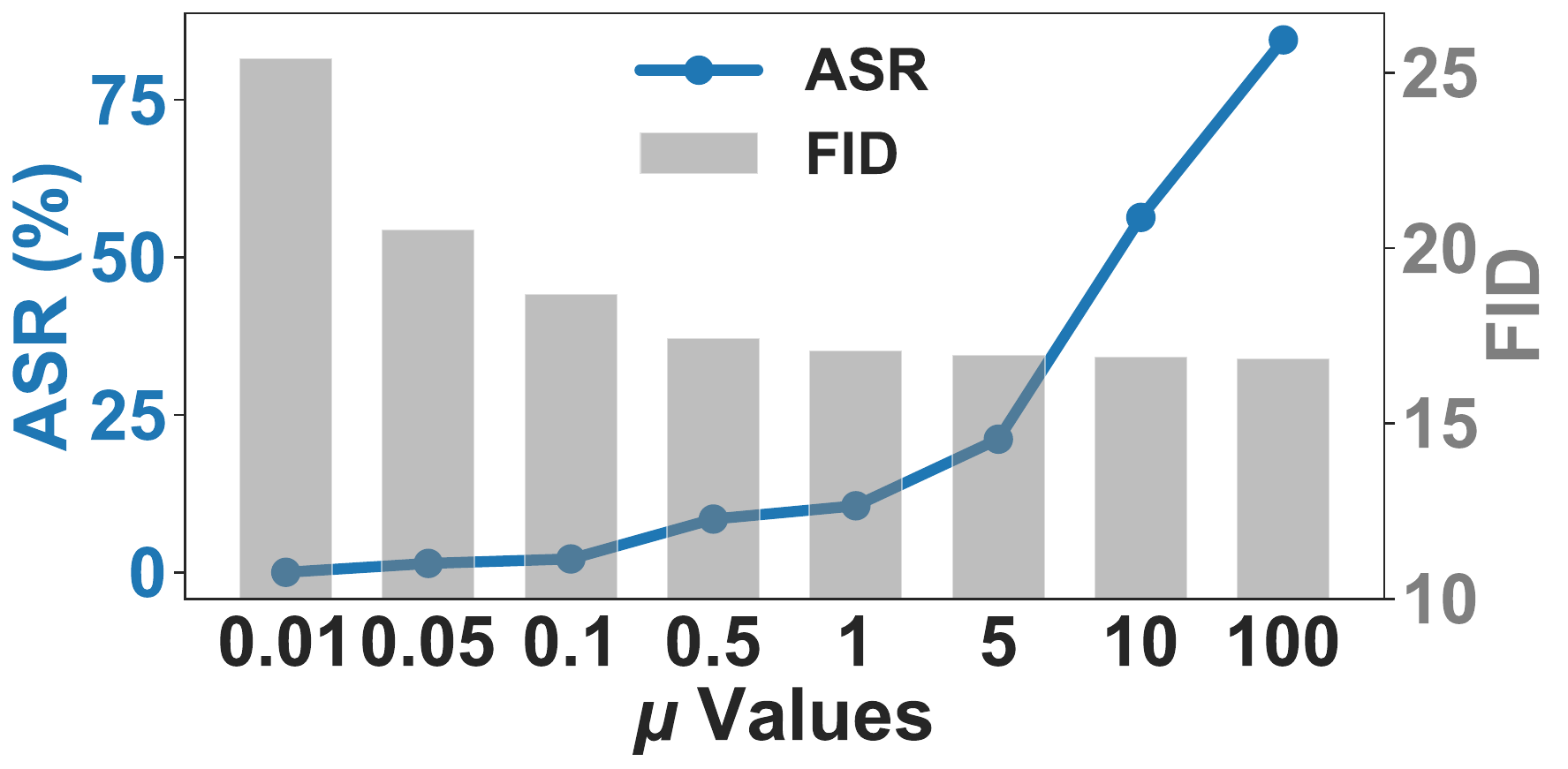}%
    }
    \subfloat[Erasing \textit{Van Gogh} style]{%
    \label{fig:mu-style}%
    \includegraphics[width=0.333\linewidth]{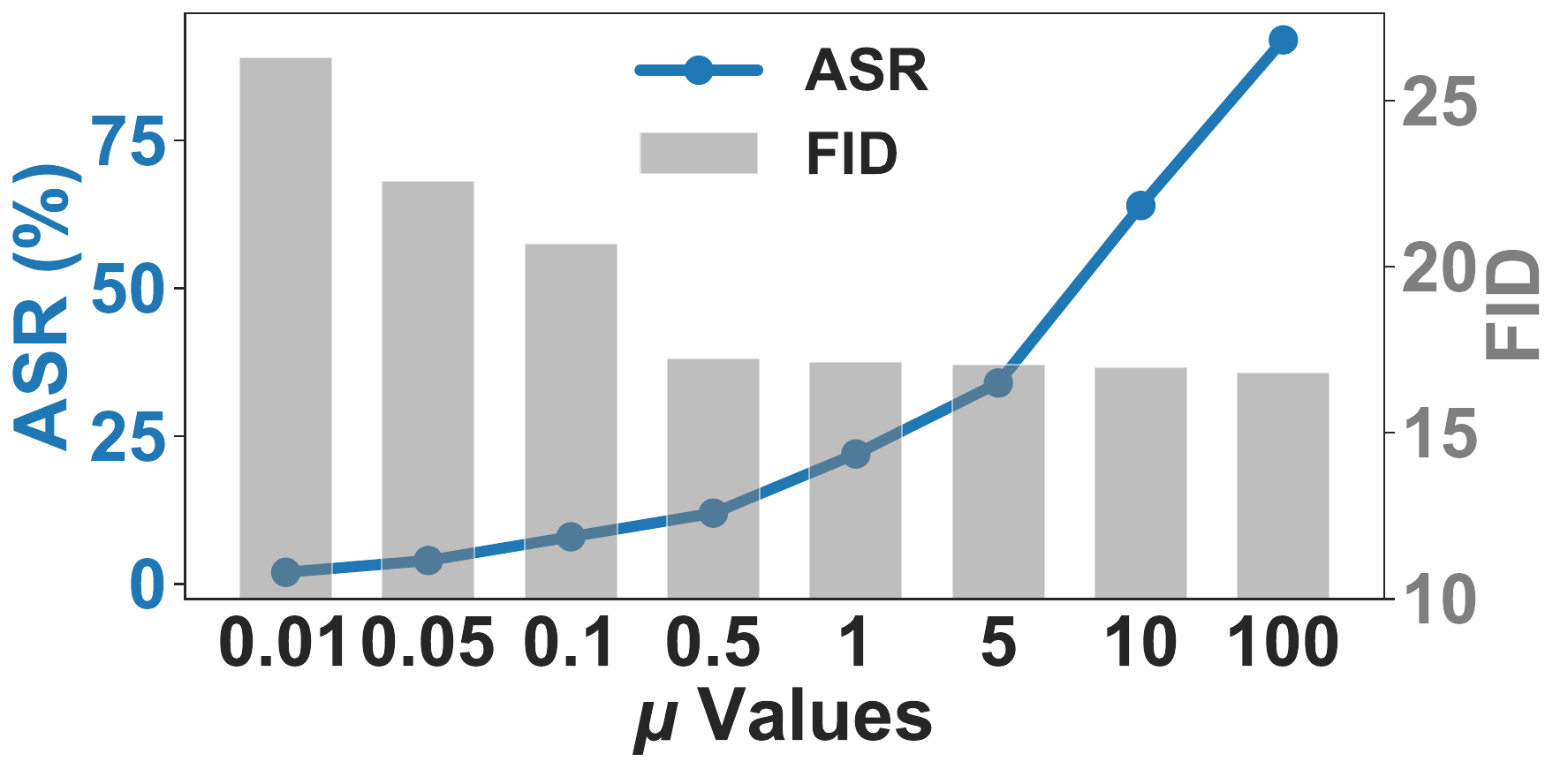}%
    }
    \subfloat[Erasing \textit{church} object]{%
    \label{fig:mu-church}%
    \includegraphics[width=0.333\linewidth]{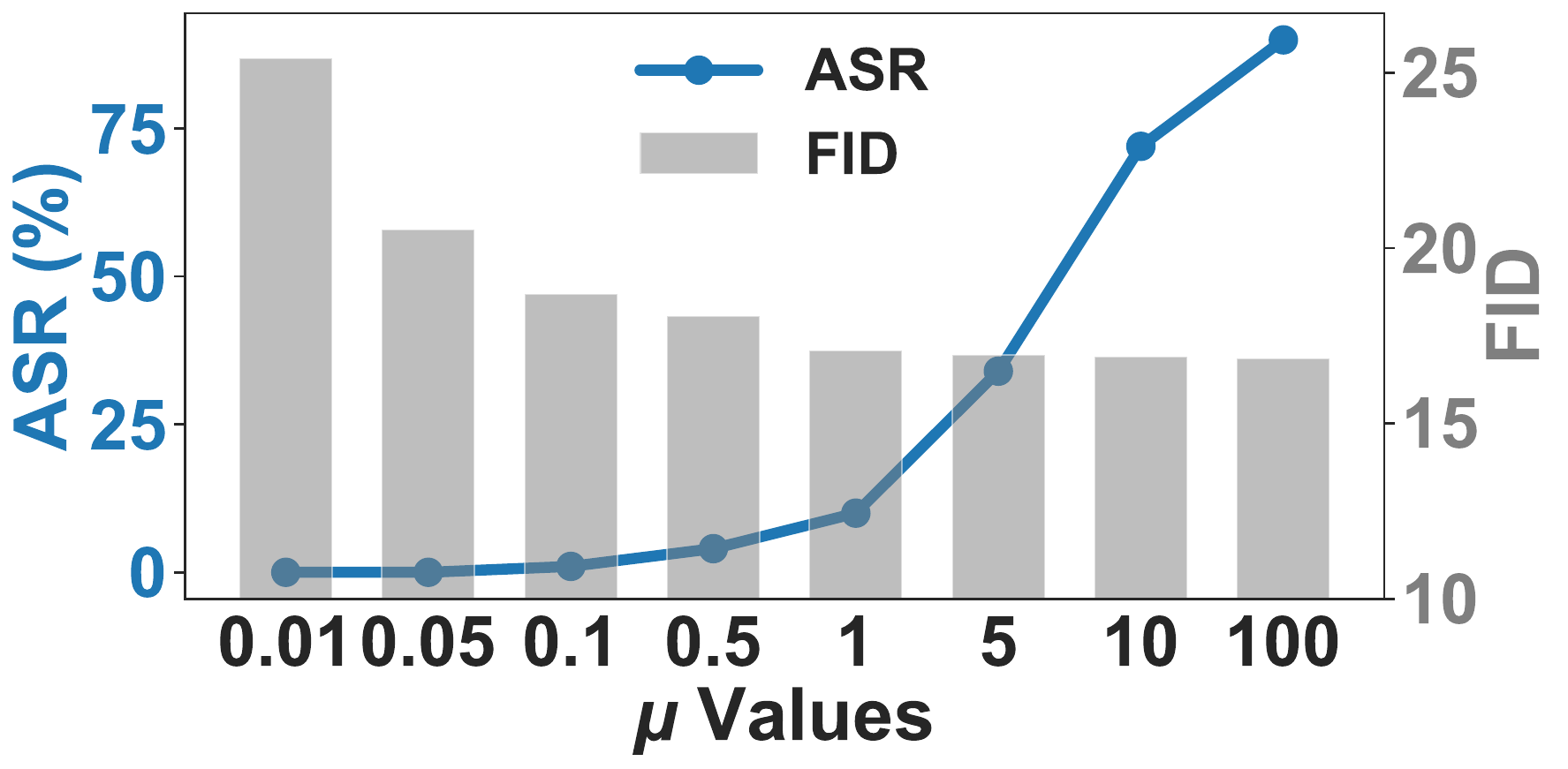}%
    }
    \caption{The concept erasure robustness (ASR) and retention performance (FID) under different $\mu$ values.}
    \label{fig:mu}
\end{figure}
}

\subsection{Runtime Comparison with Existing Methods}
\label{ee:extime}
To better demonstrate the efficiency of \alg, we conduct experiments to compare its time cost with existing concept erasure methods. As shown in Tab.~\ref{tab: time-consum}, \alg requires approximately twice the time of ESD without the retention term, primarily due to the generation of a single adversarial prompt and the AET (Adversarial Erasure Target). However, unlike methods that rely on additional retention datasets, \alg completes the fine-tuning process with significantly lower time consumption. This suggests that the proposed PR is an efficient strategy for maintaining performance on preserved concepts without incurring substantial computational overhead. Based on comprehensive experimental results and time cost comparisons, we conclude that \alg achieves robust concept erasure with minimal negative impact on retention performance, while maintaining high time efficiency.

\begin{table}[t]
\centering
    \caption{
    \footnotesize{Time Consumption for Concept Erasure Methods with NVIDIA RTX A6000 with 1000 Iteration Step.}}
    \resizebox{0.75\textwidth}{!}{%
    \begin{tabular}{@{}lccc}
    \toprule[1pt]
    \midrule
        \textsc{\textbf{Method}} & Adversarial Prompt Step& AET Step & Training Time (h) \\
    \toprule
        AdvUnlearn (Fast) & 1 & / & 5.43 \\
        AdvUnlearn & 30 & / & 31.23 \\
        ESD & / & /& 1.01 \\
        FMN & / & / & 2.05 \\
        SalUn & / & / &4.34\\
        \alg (Ours) & 1& 1 & 2.01 \\
    \midrule
    \bottomrule[1pt]
    \end{tabular}}
    \label{tab: time-consum}
\end{table}

\section{Limitations}
\label{sec:limitations}

This work aims to improve the robustness of concept erasure against adversarial prompt attacks (APAs) while minimizing the negative impact on preserved concepts. To this end, we propose a robust concept erasure framework named Adversarially Erasing Concept with Gradient Projection (\alg). In \alg, we generate an adversarial erasure target (AET) using a single-step attack, which typically requires nearly twice the computation time compared to vanilla concept erasure methods. During fine-tuning, we employ a parameter regularization (PR) to maintain performance on preserved concepts, and apply directional gradient rectification (DGR) to alleviate gradient conflicts between erasure and retention objectives. Although gradient regularization projection (GRP) helps reduce the negative impact of AET on retention, the overall utility of the model is still lower than that of the original diffusion model. Additionally, the AET generation process increases the time cost of concept erasure. To address these limitations, future work will explore the development of more efficient erasure targets that reduce both computational overhead and retention degradation.

\section{Broader Impacts}
\label{sec:broader_impacts}

This study advances AI alignment with societal and ethical norms by enhancing the reliability of concept erasure, thereby helping to curb the dissemination of harmful digital content. By mitigating the risk of diffusion models producing copyright-infringing material, \alg addresses legal challenges and promotes responsible AI deployment in creative domains. Moreover, the integration of adversarial training into machine unlearning strengthens AI safety by enabling models to discard sensitive concepts while resisting adversarial circumvention. While \alg strikes a balance between robustness and utility, its technical complexity highlights the need for further investigation into how robustification techniques affect model scalability and performance. These contributions lay the groundwork for future research in AI robustness and safety, underscoring the importance of sustained interdisciplinary collaboration and thoughtful policy development to ensure that generative AI technologies serve the broader interests of society.

\end{document}